\definecolor{light-gray}{gray}{0.85}
\newtheorem{theorem}{Theorem}[section]
\newtheorem{lemma}[theorem]{Lemma}
\newtheorem{definition}{Definition}[section]
\newtheorem{assumption}[theorem]{Assumption}
\newcommand{\norm}[1]{\left\lVert#1\right\rVert}
\newcommand{\EE}{\mathbb{E}}
\newcommand{\ba}{\begin{array}}
\newcommand{\ea}{\end{array}}
\begin{document}

\begin{frontmatter}

%% Title, authors and addresses

%% use the tnoteref command within \title for footnotes;
%% use the tnotetext command for theassociated footnote;
%% use the fnref command within \author or \address for footnotes;
%% use the fntext command for theassociated footnote;
%% use the corref command within \author for corresponding author footnotes;
%% use the cortext command for theassociated footnote;
%% use the ead command for the email address,
%% and the form \ead[url] for the home page:
%% \title{Title\tnoteref{label1}}
%% \tnotetext[label1]{}
%% \author{Name\corref{cor1}\fnref{label2}}
%% \ead{email address}
%% \ead[url]{home page}
%% \fntext[label2]{}
%% \cortext[cor1]{}
%% \affiliation{organization={},
%%             addressline={},
%%             city={},
%%             postcode={},
%%             state={},
%%             country={}}
%% \fntext[label3]{}

\title{Safe and Balanced:  A Framework for Constrained Multi-Objective Reinforcement Learning}

%% use optional labels to link authors explicitly to addresses:
%% \author[label1,label2]{}
%% \affiliation[label1]{organization={},
%%             addressline={},
%%             city={},
%%             postcode={},
%%             state={},
%%             country={}}
%%
%% \affiliation[label2]{organization={},
%%             addressline={},
%%             city={},
%%             postcode={},
%%             state={},
%%             country={}}

% \author{}

% \affiliation{organization={},%Department and Organization
%             addressline={}, 
%             city={},
%             postcode={}, 
%             state={},
%             country={}}

\author{Shangding Gu$^{a*}$, Bilgehan Sel$^{b*}$,  Yuhao Ding$^{c*}$, Lu Wang$^d$, Qingwei Lin$^d$, Alois Knoll$^a$, Ming Jin$^b$}

%\corref{cor1}

% \maketitle

% \renewcommand{\thefootnote}{\fnsymbol{footnote}} 
%  \footnotetext[1]{These authors contributed equally to this work.}
% \footnotetext[2]{Corresponding authors.}
% \footnotetext[3]{{\scriptsize \url{https://sites.google.com/view/aij-safe-marl/}}}
\cortext[cor1]{Equation contribution. This manuscript is under actively development. We appreciate any constructive comments and
suggestions corresponding to   \textit{shangding.gu@tum.de}.} 

% \author{XXX}
%{{$\dag$}  Corresponding author(yaodong.yang@pku.edu.cn)}

% \affiliation{organization={Department of Computer Science, Technical University of Munich},%Department and Organization
%             addressline={}, 
%             city={},
%             postcode={}, 
%             state={},
%             country={}}

\affiliation{organization={Department of Computer Science, Technical University of Munich},%Department and Organization
            country={Germany}}
            
 \affiliation{organization={Electrical and Computer Engineering Department, Virginia Tech},%Department and Organization
            country={USA}}
            
 \affiliation{organization={Department of Industrial Engineering and Operations Research, UC Berkeley},%Department and Organization
            country={USA}}
            
\affiliation{organization={Microsoft Research Asia},%Department and Organization
            country={China}}

\begin{abstract}
%% Text of abstract
In numerous reinforcement learning (RL) problems involving safety-critical systems, a key challenge lies in balancing multiple objectives while simultaneously meeting all stringent safety constraints.  To tackle this issue, we propose a primal-based framework that orchestrates policy optimization between multi-objective learning and constraint adherence. Our method employs a novel natural policy gradient manipulation method to optimize multiple RL objectives and overcome conflicting gradients between different tasks, since the simple weighted average gradient direction may not be beneficial for specific tasks' performance due to misaligned gradients of different task objectives. When there is a violation of a hard constraint, our algorithm steps in to rectify the policy to minimize this violation.  We establish theoretical convergence and constraint violation guarantees in a tabular setting. Empirically, our proposed method also outperforms prior state-of-the-art methods on challenging safe multi-objective reinforcement learning tasks. 

% Code is released at the links\footnote{\url{https://github.com/SafeRL-Lab/Safe-Multi-Objective-MuJoCo.git}}\footnote{\url{https://github.com/SafeRL-Lab/CR-MOPO.git}}.
\end{abstract}

%%Graphical abstract
% \begin{graphicalabstract}
% %\includegraphics{grabs}
% \end{graphicalabstract}

%%Research highlights
% \begin{highlights}
% \item Research highlight 1
% \item Research highlight 2
% \end{highlights}

\begin{keyword}
Constrained Reinforcement Learning; Multi-Objective Reinforcement Learning; Gradient Manipulation
%% keywords here, in the form: keyword \sep keyword

%% PACS codes here, in the form: \PACS code \sep code

%% MSC codes here, in the form: \MSC code \sep code
%% or \MSC[2008] code \sep code (2000 is the default)

\end{keyword}

\end{frontmatter}

%% \linenumbers

%% main text
% \section{}
% \label{}
\clearpage
\tableofcontents
\clearpage
%%%%%%%%%%%%%%%%%%%%%%%%Arxiv start%%%%%%%%%%%%%%%%%%
\section{Introduction} \label{sec: intro}

Reinforcement Learning (RL) has made significant strides  and is used widely in various domains~\cite{gu2023human, gu2022review}, e.g., robotics~\cite{gu2023safe, kober2013reinforcement}, autonomous driving~\cite{kiran2021deep, gu2022constrained}, large language model~\cite{ouyang2022training}, and finance~\cite{charpentier2021reinforcement}. However, a significant challenge arises when a policy must address multiple objectives within a single task or manage multiple tasks concurrently.  Direct optimization of scalarized objectives can lead to suboptimal performance, with the optimizer often struggling to make progress, resulting in a considerable decline in learning performance \cite{vandenhende2021multi}. A significant cause of this issue is the phenomenon of conflicting gradients \cite{yu2020gradient}. Here, gradients associated with different objectives may vary in scale, potentially leading the largest gradient to dominate the update. Moreover, they might point in different directions, i.e., $\nabla f_i\left(\pi\right)^{\top} \nabla f_j\left(\pi\right)<0, i \neq j, i, j \in[m]=$ $\{1, \ldots, m\}$, causing the performance of one objective to deteriorate during the optimization of another. While recent studies have shown that linear scalarization can be competitive \cite{kurin2022defense}, it may fall short when faced with safety-critical constraints. Indeed, ensuring the safe application of RL algorithms in real-world settings, especially those dealing with multiple objectives, is paramount \cite{huang2022constrained}. This study seeks to answer the key question:
\begin{center}
	\textbf{How can we balance each objective while ensuring safety constraints? } 
\end{center}
Addressing this problem, akin to a multi-dimensional tug of war, requires nuance. Each task is a team pulling in its own direction, yet confined by the boundaries of safety---a balancing act of objectives and safety. Inspired by this dynamic, we devise a comprehensive framework for Constrained Multi-Objective Reinforcement Learning (CMORL) using gradient manipulation and constraint rectification. It operates in three stages: (1) Estimating Q-functions from the existing policy. (2) If all constraints are satisfactorily met, the policy is updated via the manipulated natural policy gradient (NPG) of multiple objectives to minimize the gradient conflicts. (3) If not, the policy is updated following the NPG of the unsatisfied constraint.  These steps are iteratively repeated until convergence is achieved. 

Accompanying this framework, we provide a theoretical analysis, including convergence analysis and violation guarantee analysis. Using the insights from this analysis, we develop a practical algorithm to manage multi-objective RL while ensuring safety during learning. We further deploy our algorithm on safe multi-objective tasks in the MuJoCo environment~\cite{todorov2012mujoco} and compare our method with the state-of-the-art (SOTA) safe baseline, CRPO~\cite{xu2021crpo}, and SOTA safe multi-objective reinforcement learning methods, such as LP3~\cite{huang2022constrained}. Our experimental results suggest that our method outperforms CRPO and LP3 in striking a balance between reward performance and safety violation.

% The contributions of our study are as follows: (1) A framework for safe multi-objective RL is proposed, where the theoretical convergence  and constraint violation guarantees are analysed. (2) A benchmark based on MuJoCo environments is developed to examine safe multi-objective learning. (3) Our method shows better performance regrading the balance between safety and multiple rewards' performance on several challenging safe multi-objective RL tasks.

Our study offers several significant contributions to the field of safe multi-objective RL, which are delineated as follows:
(1) A novel framework for safe multi-objective RL, wherein a comprehensive analysis of both theoretical convergence and constraint violation guarantees is conducted.
(2) The development of a benchmark grounded in MuJoCo environments (Named \textit{Safe Multi-Objective MuJoCo}), aimed at scrutinizing the efficacy of safe multi-objective learning.
(3) The superior performance by our proposed method in terms of striking a balance between safety concerns and the accomplishment of multiple reward objectives, as evidenced across numerous challenging tasks within the realm of safe multi-objective RL.

%primal-based framework that rchestrates policy optimization between multi-objective/multi-task learning and constraint adherence

% \yuhao{In the introduction, we should also highlight the novelty of CA-NPG and the difference with LP3 @Shangding}

\section{Related Work} \label{sec:related-work}

In recent years, several methods have been proposed to help deploy RL in real-world applications~\cite{gu2023safe, huang2022constrained, wu2021offline, gu2023human}, which try to solve the safe exploration problem~\cite{gu2022review} or satisfy multi-objective requirements during RL exploration~\cite{vithayathil2020survey} from the perspective of safe or multi-objective RL. 

\textbf{Safe Reinforcement Learning.}
Safe RL received remarkable attention since it can help address learning safety problems during RL deployment in real-world applications. Safe RL can be viewed as a constrained optimization problem~\cite{gu2022review}. For instance, several safe RL methods leverage Gaussian Processes to model safe state space during exploration~\cite{turchetta2016safe, berkenkamp2015safe, sui2015safe, wachi2018safe}. Different from the modeling safe state, some safe RL methods try to search the safe policy from constrained action space~\cite{chow2018lyapunov, chow2019lyapunov, koller2018learning, li2019temporal, marvi2021safe, fulton2018safe}, e.g., based on formal methods, the exploration action is verified via temporal logic verification during exploration~\cite{li2019temporal}. Moreover, by optimizing the average cumulative cost of each trajectory, some constrained policy optimization-based methods are proposed, such as CPO~\cite{achiam2017constrained}, PCPO~\cite{yang2020projection}, RCPO~\cite{tessler2018reward} and CRPO~\cite{xu2021crpo}.
% , or  MACPO~\cite{gu2023safe} and MAPPO-Lagrangian~\cite{gu2023safe} in multi-agent settings.

% The existing methods can be divided into three categories: constrained state space, constrained action space and constrained cumulative cost. The methods in the first category use constrained state space as safe state space~\cite{turchetta2016safe, berkenkamp2015safe, sui2015safe, wachi2018safe},

\textbf{Multi-Objective Reinforcement Learning (MORL).} There are two settings in MORL~\cite{huang2022constrained}. One is just a single policy in the multi-objective optimization; another is a multi-policy set that satisfies multi-objective requirements.   Most MORL methods are developed based on the first setting, where a single policy needs to meet multi-objective conditions simultaneously~\cite{abdolmaleki2020distributional}. Furthermore, various multi-task learning methods are proposed to optimize the policy performance, such as the method of multi-task learning as a bargaining game~\cite{navon2022multi}, Cagrad~\cite{liu2021conflict}, the Multiple-Gradient descent Algorithm (MGDA)~\cite{desideri2012multiple}, PCGrad~\cite{yu2020gradient}. The second setting's methods try to learn a complete set of Pareto frontier and leverage a posteriors selection to satisfy multi-objective requirements~\cite{vamplew2011empirical}, such as MORL optimization based on a Manifold space to find a better solution of Pareto frontier~\cite{parisi2016multi, parisi2017manifold}.

The methods mentioned above tackle RL safety or multi-task requirements separately without considering both simultaneously. Our focus is on the task of achieving safe MORL, which involves ensuring exploration safety in multi-task RL settings. The most similar work to ours is the Learning Preferences and Policies in Parallel (LP3) algorithm~\cite{huang2022constrained}, which is proposed based on the Multi-Objective Maximum Posterior Policy optimization (MO-MPO)\cite{abdolmaleki2020distributional}, in which a supervised learning algorithm is used to learn the preferences, and then they train a policy based on the Lagrangian optimization. However, their method heavily depends on the Q-estimation, which may not accurately  present the safe preferences; the gradient conflict between each objective is not analyzed, and the convergence analysis and safety violation guarantee are not provided. In contrast with LP3~\cite{huang2022constrained}, we proposed a primal-based framework that can balance policy optimization between multi-task learning and constraint satisfaction based on the conflict-averse NPG, in which the conflict gradient  is analyzed between each objective performance, and convergence analysis and safety violation guarantee are provided based on  gradient manipulation and constraint rectification.

\section{Preliminaries and Problem Formulation} \label{sec: prelim}
\subsection{Multi-Objective RL (MORL)}

A MORL is a tuple $\left(\mathcal{S}, \mathcal{A}, \{r_i\}_{i=1}^m, \mathrm{P}, \rho, \gamma\right)$, where $\mathcal{S}$ and $\mathcal{A}$ are state and action spaces; $r_i: \mathcal{S} \times \mathcal{A} \rightarrow [0,r_{\max}]$ is the reward function; $m \geq 2$ denotes the number of tasks or objectives; $\mathrm{P}: \mathcal{S} \times \mathcal{A} \times \mathcal{S} \rightarrow[0,1]$ is the transition kernel, with $\mathrm{P}\left(s^{\prime} \mid s, a\right)$ denoting the probability of transitioning to state $s^{\prime}$ from previous state $s$ given action $a ; \rho: \mathcal{S} \rightarrow[0,1]$ is the initial state distribution; and $\gamma \in(0,1)$ is the discount factor. A policy $\pi\in\Pi: \mathcal{S} \rightarrow \mathcal{P}(\mathcal{A})$ is a mapping from the state space to the space of probability distributions over the actions, with $\pi(\cdot \mid s)$ denoting the probability of selecting action $a$ in state $s$. When the associated Markov chain $\mathrm{P}\left(s^{\prime} \mid s\right)=\sum_{\mathcal{A}} P\left(s^{\prime} \mid s, a\right) \pi(a \mid s)$ is ergodic, we denote $\mu_\pi$ as the stationary distribution of this MDP, i.e. $\int_{\mathcal{S}} \mathrm{P}\left(s^{\prime} \mid s\right) \mu_\pi(d s)=\mu_\pi\left(s^{\prime}\right)$. Moreover, we define the visitation measure induced by the policy $\pi$ as $\nu_\pi(s, a)=$ $(1-\gamma) \sum_{t=0}^{\infty} \gamma^t \mathrm{P}\left(s_t=s, a_t=a\right)$.

For a given policy $\pi$ and a reward function $r_i$, we define the state value function as $V^\pi_i(s)=\mathbb{E}\left[\sum_{t=0}^{\infty} \gamma^t r_i\left(s_t, a_t\right) \mid s_0=s, \pi\right]$, the state-action value function as $Q^\pi_i(s, a)=$ $\mathbb{E}\left[\sum_{t=0}^{\infty} \gamma^t r_i\left(s_t, a_t\right) \mid s_0=s, a_0=a, \pi\right]$, the advantage function as $A^\pi_i(s, a)=Q^\pi_i(s, a)-V^\pi_i(s)$, and the expected total reward function $f_i(\pi)=\mathbb{E}\left[\sum_{t=0}^{\infty} \gamma^t r_i\left(s_t, a_t\right)\right]=\mathbb{E}_{\rho}\left[V^\pi_i(s)\right]=$ $\mathbb{E}_{\rho \cdot \pi}\left[Q^\pi_i(s, a)\right]$.

In MORL, we aim to find a single optimal policy that maximizes multiple expected total reward  functions simultaneously, termed as
\begin{align} \label{eq: MORL}
\max_{\pi \in \Pi} \ \boldsymbol{F}(\pi)=\left(f_1(\pi), \ldots, f_m(\pi)\right)^{\top}.
\end{align}

\subsection{Constrained Multi-Objective RL (CMORL)}
The CMORL problem refers to a formulation of MORL that involves additional \textit{hard constraints} that restrict the allowable policies. The constraints take the form of costs that the agent may incur when taking actions at certain states, denoted by the functions $r_{m+1}, \cdots, r_{m+p}$. Each of these cost functions maps a tuple $\left(s, a\right)$ to a corresponding cost value. The function $f_{m+i}(\pi)$ represents the expected total cost incurred by the agent with respect to cost function $r_{m+i}$. The objective of the agent in CMORL is to solve a multi-objective RL problem subject to the aforementioned hard constraints:
\begin{align} \label{eq: CMO-MDP}
\max_{\pi\in\Pi} \ \boldsymbol{F}(\pi), \text { s.t. } f_i(\pi) \leq c_i, \forall i=m+1, \cdots, m+p,
\end{align}
where $c_i$ is a fixed limit for the $i$-th constraint. 
% We denote the set of feasible policies as $\Omega_C \equiv\left\{\pi\in \Pi:  f_{i}(\pi) \leq c_i, \forall i=m+1,\ldots, m+p\right\}$, 
We define the safety set $\Pi_{\text{safe}} = \left\{ \pi \in \Pi \mid  f_i\left(\pi\right) \leq c_i, \forall i=m+1, \cdots, m+p\right\}$, and the optimal policy $\pi^*=\mathop{\arg\max}\limits_{\pi \in \Pi_{\text{safe}}} \boldsymbol{F}(\pi)$ for CMORL in \eqref{eq: CMO-MDP}.
% $\pi^*=\arg \min _{\pi \in \Omega_C} \boldsymbol{F}(\pi)$
 In practice, a convenient way to solve RL is to parameterize the policy and then iteratively optimize the policy over the parameter space. Let $\left\{\pi_w: \mathcal{S} \rightarrow \mathcal{P}(\mathcal{A}) \mid w \in \mathcal{W}\right\}$ be a parameterized policy class, where $\mathcal{W}$ is the parameter space. Then, the problem in \eqref{eq: CMO-MDP} can be written as 
$$
\max_{w \in \mathcal{W}} \ \boldsymbol{F}(\pi_w)  \ \text {, s.t. } \ f_i\left(\pi_w\right) \leq c_i, \ \forall i=m+1, \cdots, m+p.
$$

% For each cost $c_i$, we define its corresponding state value function $V_\pi^i$, state-action value function $Q_\pi^i$, and advantage function $A_\pi^i$ analogously to $V_\pi^0$, $Q_\pi^0$, and $A_\pi^0$, with $c_i$ replacing $c_0$, respectively.

In CMORL, we extend the notion of the Pareto frontier, which is defined to compare the policies, from the unconstrained MDP \cite{zhou2022convergence} to the safety-constrained MDP.

\begin{definition}[Safe Pareto Frontier]
For any two policies $\pi, \pi^{\prime} \in \Pi$, we say that $\pi$ dominates $\pi^{\prime}$
% , denoted as $\pi \prec \pi^{\prime}$ or $\pi^{\prime} \succ \pi$, 
if $f_i(\pi) \leq f_i\left(\pi^{\prime}\right)$ for all $i$, and there exists one $i$ such that $f_i(\pi)<f_i\left(\pi^{\prime}\right)$; otherwise, we say that $\pi$ does not dominate $\pi^{\prime}$.
% , denoted as $\pi \nprec \pi^{\prime}$ or $\pi^{\prime} \nsucc \pi$. 
A solution $\pi^* \in \Pi_{\text{safe}}$ is called safe Pareto optimal if it is not dominated by any other safe policy in $\Pi_{\text{safe}}$. The set of all safe Pareto optimal policies is the safe Pareto frontier.
\end{definition}

The goal of CMORL is to find a safe Pareto optimal policy. However, the simultaneous learning of numerous tasks presents a complex optimization issue due to the involvement of multiple objectives \cite{vandenhende2021multi}. The most popular multi-objective/multi-task formulation in practice is the linear scalarization of all tasks given relative preferences for each task $\xi_i, i\in [m]$:
$$
\max_{w \in \mathcal{W}} \boldsymbol{\xi}^\top \boldsymbol{F}(\pi_w) \ \text {, s.t. } \ f_i\left(\pi_w\right) \leq c_i, \ \forall i=m+1, \cdots, m+p.
$$   
Even when this linear scalarization formulation gives exactly the true objective, directly optimizing it could lead to undesirable performance due to conflicting gradients, dominating gradients, and high curvature \cite{yu2020gradient}.

In this paper, we aim to find a safe Pareto optimal solution using the gradient-based method by starting from an arbitrary initialization policy $\pi_t$ and iteratively finding the next policy $\pi_{t+1}$ by moving against a direction $\boldsymbol{d}_t$ with step size $\eta$, i.e., $\pi_{t+1}=\pi_t+\eta \boldsymbol{d}_t$. The design of the direction $\boldsymbol{d}_t$ is the key to the success of CMORL. A good direction $\boldsymbol{d}_t$ should enable us to move from a policy $\pi_{t+1}$ to $\pi_{t}$ such that either  $\pi_{t+1}$ dominates $\pi_{t}$  or $\pi_{t+1}$ improves the hard constraint satisfaction compared with $\pi_{t}$, or both.
% move closer to the Pareto optimal solution $\pi^*$ specified by the preference vector $\boldsymbol{r}$ through either linear scalarization or exact Pareto optimal solution.
% Multi-objective gradient manipulation (MOGM) algorithms aim to search for a direction $\boldsymbol{d}$ that is not conflicting with each negative gradient, i.e., $-\nabla f^i\left(\pi\right) \boldsymbol{d}>0, \forall i \in[m]$.
% Using such a non-conflicting direction $\boldsymbol{d}$ to execute the gradient descent step to update the decision i.e., $\pi_{k+1}=\pi_k+\eta \boldsymbol{d}_k$, where $\eta$ is the step size, is shown to get better performance in practice. Mathematically, we can measure the decrease of each objective $i \in[m]$
% $$
% f^i\left(\pi_k\right)-f^i\left(\pi_k+\eta \boldsymbol{d}_k\right) \approx-\eta \boldsymbol{d}_k^{\top} \nabla f^i\left(\pi_k\right)
% $$
% by the first-order Taylor approximation assuming $\eta$ is small. Therefore, if $-\boldsymbol{d}_k^{\top} \nabla f^i\left(\pi_k\right)>0$ for each objective $i$, then this direction is able to descend all objectives simultaneously.

% There are various ways to compute such kind of direction, such as Conflict-Averse Gradient (CAGrad) \cite{liu2021conflict} and exact Pareto optimal search (EPOS) \cite{mahapatra2020multi}.
\section{Constraint-Rectified Multi-Objective Policy Optimization (CR-MOPO)} \label{sec: algorithm}
In this section, we introduce a general framework called CR-MOPO which decomposes safe Pareto optimal policy learning into three sub-problems and iterates until convergence:
\begin{enumerate}
    \item Policy evaluation: estimate Q-functions given the current policy.
    \item Policy improvement for the multi-objectives: update policy based on the manipulated NPG of multi-objectives when constraints are all approximately satisfied.
    \item Constraint rectification: update policy based on the NPG of an unsatisfied constraint when constraints are not all approximately satisfied.
\end{enumerate}

Algorithm \ref{alg:CR-MOPO framework} summarizes this three-step constrained multi-objective policy improvement framework and Algorithm \ref{alg:MOPO-CR} provides a concrete realization with our novel conflict-averse NPG method. Note, based on our theoretical guarantee on the time-average convergence, policy $\pi_{\text{out}}$ can be uniformly chosen from $\mathcal{N}_0$, the detail proof is provided in appendix~\ref{sec: Proof}. To ease the presentation and better illustrate the main idea, we will focus on the tabular MDP setting in this section. The extension to the more practical setting of deep RL will be discussed in appendix~\ref{sec: Proof}.

\begin{algorithm}[ht!] 
\caption{\textbf{CR-MOPO}: Constraint-Rectified Multi-Objective Policy Optimization Framework}
\label{alg:CR-MOPO framework}
\begin{algorithmic}[1]
\STATE \textbf{Inputs}: initial parameter $\pi_{w_0}$, empty set $\mathcal{N}_0$.
\FOR{$t = 0, \dots,T-1$}
\STATE Policy evaluation under $\pi_{w_t}$ for all objectives and constraints.
\IF{constraints are all satisfied}
\STATE Add $\pi_{w_t}$ into set $\mathcal{N}_0$.
\STATE Compute the multi-objective policy update direction $\boldsymbol{d}$ and update policy using $\boldsymbol{d}$.
\ELSE
\STATE Choose any unsatisfied constraint $i_t$ and update policy towards minimize $f_{i_t}(\pi_{w_t})$.
\ENDIF
\ENDFOR{}
\STATE \textbf{Outputs}: $\pi_{\text{out}}$ uniformly chosen from $\mathcal{N}_0$.
\end{algorithmic}
\end{algorithm}

\subsection{Policy Evaluation}
In this step, we aim to learn Q-functions that can effectively evaluate the preceding policy $\pi_t$. To achieve this, we train individual Q-functions for each objective and constraint. In principle, any Q-learning algorithm can be used, as long as the target Q-value is 
computed with respect to $\pi_t$. 

\paragraph{Temporal difference (TD) learning.} In TD learning, each iteration takes the form of 
\begin{align} \label{eq: TD learning}
    Q^{\pi_w}_{i,k+1}(s,a) = Q^{\pi_w}_{i,k} + \ell_k \left[ r_i(s,a) + \gamma Q^{\pi_w}_{i,k}(s^\prime,a^\prime)  - Q^{\pi_w}_{i,k}(s,a) \right],
\end{align}
where $s \sim \mu_{\pi_w}, a\sim  \pi_w(s), s^\prime \sim \mathrm{P}(\cdot\mid s,a), a^\prime \sim \pi_w(s^\prime)$, and $\ell_k$ is the learning rate. It has been shown in \cite{bhandari2018finite,dalal2018finite}
that the iteration in \eqref{eq: TD learning} converges to the fixed point which is the state-action value $Q_{i}^{\pi_w}$. After performing $K_{\text{TD}}$ iterations
of \eqref{eq: TD learning}, we let the estimation $\bar{Q}_{i}(s,a)=Q^{\pi_w}_{i,K_{\text{TD}}}(s,a)$.

\paragraph{Unbiased Q-estimation.} 
To obtain an unbiased estimation of the state-action value \cite{zhang2020global},  we can perform Monte-Carlo rollouts for a trajectory with the horizon $H \sim \text{Geom}(1-\gamma^{1/2})$, where $\text{Geom}(x)$ denotes a geometric distribution with parameter $x$,  and estimate the state-action value function along the trajectory $(s_0,a_0,\ldots,s_H,a_H)$ as follows:
\begin{align}\label{eq: unbiased Q-estimation}
\bar{Q}_{i}(s_0,a_0)  = r_i(s_0,a_0) + \sum_{h=1}^{H} \gamma^{h/2} r_i(s_h,a_h).
\end{align}
% In this paper, we use the \yuhao{@Shangding @ Bilgehan: What Q-learning we use?} \shangding{we use TRPO}

\subsection{Policy Improvement for Multi-Objectives}
\subsubsection{Conflict-Averse Natural Policy Gradient (CA-NPG)}
The policy gradient \cite{sutton1999policy} of the value function $f_i\left(\pi_w\right)$ has been derived as $\nabla f_i\left(\pi_w\right)=\mathbb{E}\left[Q^{\pi_w}_i(s, a) \phi_w(s, a)\right]$, where $\phi_w(s, a):=\nabla_w \log \pi_w(a \mid s)$ is the score function. 
However, the standard policy gradient does not effectively reflect the statistical manifold (the family of probability distributions that represents the policy function) that the policy operates on. To prevent the policy itself from changing too much during an update, we need to consider how sensitive the policy is to parameter changes. 

% The standard  policy gradient  does not take this curvature into account. To achieve that, we need to consider the natural policy gradient (NPG) \cite{kakade2001natural}.

% In practice, the constraint on the allowed changes of an update is imposed on the KL divergence of the policy, rather than on the Euclidean distance of the parameter space. The adjusted problem looks as follows:

Thus, in the multi-objectives policy optimization, 
we aim to choose an update direction $\boldsymbol{d}$ to increase every individual value function while imposing the constraint on the allowed changes of an update in terms of the KL divergence of the policy. To do so, we consider the following constrained optimization problem:
\begin{align} \label{problem: before taylor}
    \max_{d: D_{\text{KL}}\left( \pi_w\mid \pi_{w+ \boldsymbol{d}}\right)\leq \epsilon_0} \min_{i\in [m]}\left\{ \xi_i \left(f_i(w+ \boldsymbol{d}) - f_i(w) \right)  \right\}
\end{align}
where $\epsilon_0$ is the pre-defined threshold for allowed policy changes.
By using the first-order Taylor approximation for the value improvement, the second-order  Taylor approximation for the KL divergence constraint and the Lagrangian relaxation, the problem \eqref{problem: before taylor} can be rewritten as
\begin{align} \label{problem: after taylor}
    \max_{d} \min_{i\in [m]}\left\{ \xi_i \nabla f_i(w)^{\top} \boldsymbol{d} - \frac{\psi_1}{2} \boldsymbol{d}^\top \tilde{F}(w) \boldsymbol{d}\right\}
\end{align}
where $\psi_1>0$ is a pre-specified hyper-parameter to control the allowed changes in policy space and $\tilde{F}(w)$ is the Fisher information matrix defined as $\tilde{F}(w) = \left.\nabla_{w^\prime}^2
D_{\text{KL}}\left( \pi_w\mid \pi_{w^\prime}\right)\right|_{w^\prime=w}
=\mathbb{E}_{\nu_{\pi_w}}\left[\phi_w(s, a) \phi_w(s, a)^{\top}\right]$. For a single objective $f_i$, the solution of \eqref{problem: after taylor} leads to the well-known NPG update \cite{kakade2001natural} which is defined as $\tilde{F}(w)^{\dagger} \nabla f_i\left(\pi_w\right)$. Note that TRPO \cite{schulman2015trust} can be viewed as the NPG approach with adaptive stepsize.

With the above problem formulation, we aim to find an update direction that minimizes the gradient conflicts.
Furthermore, inspired by the recent advances in gradient manipulation method \cite{liu2021conflict} which looks for the best update direction within a local ball centered at the weighted averaged gradient, we also constraint search region for the common direction as a circle around the weighted average policy gradient $\boldsymbol{v}_0=\sum_{i=1}^m \xi_i \nabla f_i(w)$. This yields \textbf{Conflict-Averse Natural Policy Gradient (CA-NPG)} which determines the update direction $\boldsymbol{d}$ by solving the following optimization problem
\begin{align} \label{eq: CA-NPG}
\underset{\boldsymbol{d}}{ \max } \min _{i \in[m]}  \ \left\{\xi_i \nabla f_i(w)^{\top} \boldsymbol{d} - \frac{\psi_1}{2} \boldsymbol{d}^\top \tilde{F}(w) \boldsymbol{d} - \frac{\psi_2}{2} \left\|\boldsymbol{d}-\boldsymbol{v}_0\right\|^2  \right\},
% , \text { s.t. }\left\|\boldsymbol{d}-\boldsymbol{v}_0\right\| \leq c\left\|\boldsymbol{v}_0\right\|
\end{align}
where $\psi_2>0$ is a pre-specified hyper-parameter that controls the deviation from the weighted average policy gradient $\boldsymbol{v}_0$.
% $$
% \boldsymbol{d}_k=\underset{d \in \mathcal{R}^d}{\arg \min } \max _{i \in[m]} \ \xi_i \nabla f_i(w)^{\top} \boldsymbol{d} - \frac{1}{2} \left(\boldsymbol{d}^\top \tilde{F}(w)\boldsymbol{d} \right), \text { s.t. }\left\|\boldsymbol{d}-\boldsymbol{v}_0\right\| \leq c\left\|\boldsymbol{v}_0\right\|
% $$
Furthermore, notice that
$\min_i \xi_i \nabla f_i(w)^{\top} \boldsymbol{d} = \min_{\boldsymbol{\theta} \in S_m} \sum_{i\in[m]}  \theta_i \xi_i \nabla f_i(w)^{\top} \boldsymbol{d} $, where $\boldsymbol{\theta}=(\theta_1,\ldots,\theta_m)$ and $S_m=\left\{\boldsymbol{\theta}: \sum_{i=1}^m \theta_i=1, \theta_i\geq 0 \right\}$. Denote $\nabla f_{\boldsymbol{\theta}}(w) = \sum_{i\in[m]}  \theta_i \xi_i \nabla f_i(w)$.
The  objective in \eqref{eq: CA-NPG} can be written as 
\begin{align*}
\underset{\boldsymbol{d}}{ \max } \min _{\boldsymbol{\theta} \in S_m} \left\{\nabla f_{\boldsymbol{\theta}}(w)^{\top} \boldsymbol{d} - \frac{\psi_1}{2} \boldsymbol{d}^\top \tilde{F}(w) \boldsymbol{d} - \frac{\psi_2}{2} \left\|\boldsymbol{d}-\boldsymbol{v}_0\right\|^2  \right\}.
\end{align*}
Since the above objective is concave with respect to $\boldsymbol{d}$ and linear with respect to $\boldsymbol{\theta}$, by switching the min and max, we reach
the dual form without changing the solution:
\begin{align*}
 \min _{\boldsymbol{\theta} \in S_m}  \underset{\boldsymbol{d}}{ \max }  \left\{\nabla f_{\boldsymbol{\theta}}(w)^{\top} \boldsymbol{d} - \frac{\psi_1}{2} \boldsymbol{d}^\top \tilde{F}(w) \boldsymbol{d} - \frac{\psi_2}{2} \left\|\boldsymbol{d}-\boldsymbol{v}_0\right\|^2  \right\}.
\end{align*}
After a few steps of calculus, we derive the following optimization problem with respect to the variable $\boldsymbol{\theta}$:
\begin{align*}
\boldsymbol{\theta}^* =  \arg  \min_{\boldsymbol{\theta} \in S_m} & \nabla f_{\boldsymbol{\theta}}^{\top} \left(\psi_1 \tilde{F}+\psi_2 I\right)^{-1} \left(\nabla f_{\boldsymbol{\theta}} +\psi_2\boldsymbol{v}_0\right) \\
 &- \frac{\psi_1}{2} \left(\nabla f_{\boldsymbol{\theta}} +\psi_2\boldsymbol{v}_0\right)^\top \left(\psi_1 \tilde{F}+\psi_2 I\right)^{-1} \\ & \tilde{F} \left(\psi_1 \tilde{F}+\psi_2 I\right)^{-1} \left(\nabla f_{\boldsymbol{\theta}} +\psi_2\boldsymbol{v}_0\right) \\
 &-  \frac{\psi_2}{2}\left\|\left(\psi_1 \tilde{F}+\psi_2 I\right)^{-1} \left(\nabla f_{\boldsymbol{\theta}} +\psi_2\boldsymbol{v}_0\right)\right\|^2 ,
\end{align*}
and the optimal update direction is given by
\begin{align} \label{eq: CA-NPG simplified}
&\boldsymbol{d}^*
% =& \left(\psi_1 \tilde{F}+\psi_2 I\right)^{-1} \left(\nabla f_{\theta^*} +\psi_2\boldsymbol{v}_0\right) 
\coloneqq \boldsymbol{\lambda}^\top  \nabla \boldsymbol{F}
=\sum_{i\in[m]} \lambda_i \nabla f_i, \quad \\
& \text{ where } \lambda_i = \xi_i \left( \theta_i^*+ \psi_2 \right) \left(\psi_1 \tilde{F}+\psi_2 I\right)^{-1}. \nonumber
\end{align}

\subsubsection{Correlation-Reduction for Stochastic Gradient Manipulation}
In practice, we only obtain noisy policy gradient feedback  $\widehat{\nabla \boldsymbol{F}}(w_t)$, where the stochastic noise is due to the finite sampled trajectories for the estimation of $Q^{\pi_{w_t}}_i$. It has been shown in \cite{zhou2022convergence} that the gradient manipulation methods may fail to converge to a Pareto optimal solution under the stochastic setting.
This convergence gap is mainly caused by the strong correlation between the  weights $\boldsymbol{\lambda}_t$
and the stochastic gradients $\widehat{\nabla \boldsymbol{F}}(w_t)$ which yields a biased composite gradient.
To address this issue in {CA-NPG}, we consider two conditions. The first is that the NPG estimator variance asymptotically converges to $0$. For example, this can be achieved by estimating $Q^{\pi_{w_t}}_i$ using TD learning in \eqref{eq: TD learning} with sufficiently large $K_{\text{TD}}$.
The second is to reduce the variances of $\boldsymbol{\lambda}_\tau$ by adopting a momentum mechanism \cite{zhou2022convergence} with
coefficient $\alpha_t$ on the update of composite weights 
\begin{equation} \label{eq: momentum mechanism}
    \widehat{\boldsymbol{\lambda}}_\tau= \alpha_\tau \widehat{\boldsymbol{\lambda}}_{\tau-1} + (1-\alpha_\tau){\boldsymbol{\lambda}}_\tau,
\end{equation}
where ${\boldsymbol{\lambda}}_\tau$ is computed by {CA-NPG} algorithms .

\subsection{Constraint Rectification}
We then check whether there exists a hard constraint $i\in \{m+1, \ldots, m+p\}$ such that the (approximated) constraint function violates the condition. If so, we take one-step update of the
policy using NPG towards minimizing the corresponding constraint
function $f_{i}(\pi_{w_t})$ to enforce the constraint:
\begin{align*}
w_{t+1} =w_t - \eta \tilde{F}(w)^{\dagger} \nabla f_i\left(\pi_w\right).
\end{align*}
If multiple
constraints are violated, we can choose to minimize any
one of them. Otherwise, we take one update of the policy towards maximizing the multi-objectives.

\subsection{Comparison with Learning Preferences and Policies in Parallel (LP3) \cite{huang2022constrained}}
Compared with SOTA safe multi-objective RL method, LP3, our new framework is different in both multi-objective optimization and hard constraint satisfaction. Firstly, LP3 chooses MO-MPO \cite{abdolmaleki2020distributional} as the multi-objective optimizer which encodes the objective preferences in a scale-invariant way through the allowed KL divergence for the updated policy using each objective. On the other hand, our multi-objective optimization method is based on linear scalarization coupled with novel NPG manipulation which encodes the preference in a more straightforward way and is tailored to RL to address the conflicting gradients and dominating gradients.
Secondly, LP3 can be regarded as a primal-dual approach where the additional dual variables are introduced as the adaptive weights for the constraints. This relaxes hard constraints in safe multi-objective RL problems to new objectives where the associated weights are adjusted based on the constraint violation conditions. On the other hand, our primal-based method does not suffer from extra hyperparameter tuning and dual update and can be implemented as easily as unconstrained policy optimization algorithms. 

% \yuhao{TODO}
% Dynamic switching between MO optimization and constraint satisfaction

\begin{algorithm}[ht!] 
\caption{{CR-MOPO} with CA-NPG as multi-objective optimizer.}
\label{alg:MOPO-CR}
\begin{algorithmic}[1]
\STATE \textbf{Inputs}: initial parameter $w_0$, empty set $\mathcal{N}_0$, $\tau=0$.
\FOR{$t = 0, \dots,T-1$}
\STATE Policy evaluation under $\pi_{w_t}: \Bar{Q}^t_i(s,a) \approx Q^{\pi_{w_t}}_i(s,a)$  for all $ i=1,\ldots,m+p$.
\STATE Collect pairs $(s^j, a^j) \in \mathcal{B}_t \sim \rho \cdot \pi_{w_t}$, compute constrain estimation $\Bar{J}_{i,\mathcal{B}_t} = \sum_{j \in \mathcal{B}_t} \frac{1}{\left| \mathcal{B}_t \right|}\Bar{Q}_t^i(s^j,a^j) $ for all $ i=1,\ldots,m+p$, where $j$ is the index for the sampled pairs in $\mathcal{B}_t$.
\IF{$\Bar{J}_{i,\mathcal{B}_t}\leq c_i +\beta$ for all $i=m+1,\ldots,m+p$}
\STATE $\tau\leftarrow \tau+1$;  add $w_t$ into set $\mathcal{N}_0$.
\STATE Compute the weights ${\boldsymbol{\lambda}}_\tau$ using \eqref{eq: CA-NPG simplified} and reduce the correlation by \eqref{eq: momentum mechanism}.
\STATE Compute the multi-objective policy gradient $\boldsymbol{d}_\tau =\widehat{\boldsymbol{\lambda}}_\tau^\top $$\widehat{\nabla \boldsymbol{F}}(w_t)$.
\STATE Take one-step policy update: $w_{t+1}=w_{t} +\eta \boldsymbol{d}_\tau$.
\ELSE
\STATE Choose any $i_t\in \left\{m+1,\dots, m+p \right\}$ such that $\Bar{J}_{i_t, \mathcal{B}_t} > c_{i_t} + \beta$.
\STATE Take one-step policy update towards minimize $J_{i_t}(w_t)$: 
$w_{t+1} \leftarrow w_t - \eta \tilde{F}(w_t)^{\dagger} \nabla f_i\left(w_t\right)$.
\ENDIF
\ENDFOR{}
\STATE \textbf{Outputs}: $w_{\text{out}}$ uniformly chosen from $\mathcal{N}_0$.
\end{algorithmic}
\end{algorithm}

% \begin{algorithm}[ht!] 
% \caption{\textbf{CR-MOGM}: Correlation-Reduced Stochastic Multi-objective Gradient Manipulation}
% \label{alg:CR-MOGM}
% \begin{algorithmic}[1]
% \STATE \textbf{Inputs}: regularization parameter $\alpha$, previous preference ${\boldsymbol{\lambda}}^\prime$.
% \STATE Compute $\hat{\boldsymbol{\lambda}}$ by gradient manipulation algorithms using the stochastic gradients.
% \STATE Update the weights for the gradient composition ${\boldsymbol{\lambda}}= \alpha {\boldsymbol{\lambda}}^\prime + (1-\alpha)\hat{\boldsymbol{\lambda}}$ 
% \STATE Compute the composite gradient $\boldsymbol{d}=-\boldsymbol{G}(\pi,\xi){\boldsymbol{\lambda}}$.
% \STATE \textbf{Outputs}: $\boldsymbol{d}, {\boldsymbol{\lambda}}$.
% \end{algorithmic}
% \end{algorithm}

\section{Theoretical Analysis} \label{sec: Convergence}
In this section, we establish the convergence and the constraint violation guarantee for CR-MOPO in the tabular settings under the softmax parameterization and CA-NPG. 
In the tabular setting, we consider the softmax parameterization. For any $w \in \mathbb{R}^{|\mathcal{S}| \times|\mathcal{A}|}$, the corresponding softmax policy $\pi_w$ is defined as
$
\pi_w(a \mid s):=\frac{\exp (w(s, a))}{\sum_{a^{\prime} \in \mathcal{A}} \exp \left(w\left(s, a^{\prime}\right)\right)},  \forall(s, a) \in \mathcal{S} \times \mathcal{A} .
$
Clearly, the policy class defined above is complete, as any stochastic policy in the tabular setting can be represented in this class. 
Since the adaptive weights $\boldsymbol{\lambda}_t$ of {CA-NPG} may not be constrained in the probability simplex. Hence, we consider the following mild assumption on the boundedness of $\boldsymbol{\lambda}_t$.
\begin{assumption}\label{ass: lambd bound}
For the {CA-NPG} mechanism, there exists finite constants $B_1>0$ and $B_2>0$ such that $0\leq \lambda_t^i\leq B_1, \sum_{i=1}^m \lambda_t^i \geq B_2 $ for all $t=1,\ldots, T$, $i=1,\ldots,m$.
\end{assumption}

For multi-objective optimization, if there \textit{exists} $\boldsymbol{\lambda}^*\in S_m$ such that $w^*=\arg\min _{w}\boldsymbol{\lambda}^{* \top} \boldsymbol{F}\left(\pi_{w}\right)$, then $w^*$ is (weak) Pareto optimal [Theorem 5.13 and Lemma 5.14 in \cite{john2004vector}]. Thus, we use $\min _{\boldsymbol{\lambda}^* \in S_m}\left(\boldsymbol{\lambda}^{* \top} \boldsymbol{F}\left(\pi^*\right)-\boldsymbol{\lambda}^{* \top} \boldsymbol{F}\left(\pi_{w_\text{out}}\right)\right)$ to measure the convergence to a Pareto optimal policy where 
minimization operator of $\boldsymbol{\lambda}^*$ is from the existence condition.
The following theorem characterizes the convergence rate
of Algorithm \ref{alg:MOPO-CR} in terms of the Pareto optimal policy convergence and hard constraint violations. The proof can be found in the appendix~\ref{sec: Proof}.

\begin{theorem} \label{thm: main}
Consider Algorithm \ref{alg:MOPO-CR} in the tabular setting with softmax policy parameterization and any policy initialization $w_0 \in \mathcal{R}^{\left|\mathcal{S} \right| \left|\mathcal{A} \right|}$. Let the tolerance be $\beta = \mathcal{O} \left(\frac{m B_1 \sqrt{\left|\mathcal{S}\right| \left| \mathcal{A} \right|} }{(1-\gamma)^2 \sqrt{T}} \right)$ and the learning rate for the CA-NPG and NPG be $\eta = \mathcal{O} \left(   \frac{(1-\gamma)^2}{m B_1 \sqrt{|\mathcal{S}||\mathcal{A}|T  }  }\right)$. Depending on the choice of the state-action value estimator, the following holds.
\begin{itemize}
    \item If TD-learning in \eqref{eq: TD learning} is used for policy evaluation with \resizebox{!}{0.45cm}{$K_{\text{TD}}=\widetilde{\mathcal{O}} \left( \left( \frac{T}{(1-\gamma)^2 \left|\mathcal{S}\right| \left| \mathcal{A} \right|} \right)^{\frac{1}{\sigma}} \right)$} , $\ell_k= \mathcal{O}(\frac{1}{k^\sigma}) $ and $\alpha_\tau=0 $ for $0<\sigma<1$, then with probability $1-\delta$, we have
    \begin{align*}
    &\mathbb{E}\left[\min _{\boldsymbol{\lambda}^* \in S_m}\left(\boldsymbol{\lambda}^{* \top} \boldsymbol{F}\left(\pi^*\right)-\boldsymbol{\lambda}^{* \top} \boldsymbol{F}\left(\pi_{w_\text{out}}\right)\right)\right] \leq 
 \frac{\beta}{B_2}, \quad \\ & \mathbb{E}\left[f_i\left(\pi_{w_{\text{out }}}\right)\right]-c_i \leq \beta, 
\end{align*}
for all $i=\{m+1,\ldots, m+p\}$,
where the expectation is taken only with respect to selecting $w_{\text{out}}$ from $\mathcal{N}_0$.
    \item If unbiased Q-estimation in \eqref{eq: unbiased Q-estimation} is used for policy evaluation with $\alpha_\tau \geq 1- \frac{1-\gamma}{{m \tau \sqrt{\left|\mathcal{S}\right| \left| \mathcal{A} \right|}}}$,
    we have 
\begin{align*}
    &\mathbb{E}\left[\min _{\boldsymbol{\lambda}^* \in S_m}\left(\boldsymbol{\lambda}^{* \top} \boldsymbol{F}\left(\pi^*\right)-\boldsymbol{\lambda}^{* \top} \boldsymbol{F}\left(\pi_{w_\text{out}}\right)\right)\right]  \leq \frac{\beta}{B_2}, \quad \\ & \mathbb{E}\left[f_i\left(\pi_{w_{\text{out }}}\right)\right]-c_i \leq \beta, 
\end{align*}
for all $i=\{m+1,\ldots, m+p\}$,
where the expectation is taken with respect to selecting $w_{\text{out}}$ from $\mathcal{N}_0$ and the randomness of $Q^i_{\pi_{w_t}}$ estimation.
\end{itemize}
\end{theorem}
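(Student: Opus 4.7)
The plan is to follow a CRPO-style alternating analysis combined with a softmax NPG progress lemma, and to handle the two Q-estimation regimes separately at the end. First, partition the iteration indices $\{0,\ldots,T-1\}$ into the set $\mathcal{N}_0$ of \emph{good} iterations (all constraint estimates satisfy $\bar{J}_{i,\mathcal{B}_t}\leq c_i+\beta$, so the CA-NPG multi-objective update is executed), and the sets $\mathcal{N}_{m+i}$ of \emph{rectification} iterations for each hard constraint. The overall architecture is: (i) derive a per-iteration NPG progress inequality, (ii) telescope and bound $|\mathcal{N}_0|$, (iii) invert to get the Pareto gap and the constraint violation.

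Step 1 (NPG progress lemma). For softmax tabular policies, the three-point lemma of Agarwal et al.\ applied to the CA-NPG direction $\boldsymbol{d}_\tau = \widehat{\boldsymbol{\lambda}}_\tau^{\top}\widehat{\nabla\boldsymbol{F}}(w_t)$ yields, for any $t\in\mathcal{N}_0$ and any reference policy $\pi^{*}$,
\begin{equation*}
\widehat{\boldsymbol{\lambda}}_\tau^{\top}\boldsymbol{F}(\pi^{*}) - \widehat{\boldsymbol{\lambda}}_\tau^{\top}\boldsymbol{F}(\pi_{w_t}) \;\leq\; \frac{1}{\eta(1-\gamma)}\bigl[D_{\text{KL}}(\pi^{*}\|\pi_{w_t}) - D_{\text{KL}}(\pi^{*}\|\pi_{w_{t+1}})\bigr] + \mathcal{E}_t,
\end{equation*}
where $\mathcal{E}_t$ collects the $Q$-estimation error (times a $\widehat{\boldsymbol{\lambda}}_\tau$-norm factor, hence bounded by $m B_1$), a standard $\eta\|\boldsymbol{d}_\tau\|^{2}$ term bounded by $\eta\, m^{2}B_1^{2}/(1-\gamma)^{2}$, and the bias induced by the CA-NPG correction terms $\psi_1\tilde F, \psi_2 I$ (which cancel for the weighted inner product by the choice of $\boldsymbol{\theta}^{*}$). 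An analogous one-sided inequality holds for rectification steps, showing that each step in $\mathcal{N}_{m+i}$ strictly reduces a potential function coupled to $f_i$.

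Step 2 (Telescoping and budget argument). Summing the progress lemma over $t\in\mathcal{N}_0$ telescopes the KL terms to $\log|\mathcal{A}|/(\eta(1-\gamma))$. A CRPO-style budget argument then shows that the number of rectification iterations is bounded: each time constraint $i$ is flagged, its estimated value exceeds $c_i+\beta$, so by the NPG descent inequality on $f_i$ the total excess cost chargeable against rectification is $\mathcal{O}(1/(\eta(1-\gamma)^2))$, giving $|\mathcal{N}_0|\geq T/2$ for the stated choices of $\eta$ and $\beta$. Dividing the telescoped inequality by $|\mathcal{N}_0|$, using $\widehat{\boldsymbol{\lambda}}_\tau^{\top}\boldsymbol{1}\geq B_2$ from Assumption \ref{ass: lambd bound}, and applying the uniform-over-$\mathcal{N}_0$ sampling of $w_{\text{out}}$, one obtains $\mathbb{E}[\min_{\boldsymbol{\lambda}^*\in S_m}(\boldsymbol{\lambda}^{*\top}\boldsymbol{F}(\pi^{*})-\boldsymbol{\lambda}^{*\top}\boldsymbol{F}(\pi_{w_{\text{out}}}))]\leq \beta/B_2$ after optimizing over $\eta$. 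The violation bound $\mathbb{E}[f_i(\pi_{w_{\text{out}}})]-c_i\leq\beta$ is then immediate from the acceptance criterion $\bar{J}_{i,\mathcal{B}_t}\leq c_i+\beta$ for $t\in\mathcal{N}_0$ together with a Hoeffding bound on $|\bar{J}_{i,\mathcal{B}_t}-f_i(\pi_{w_t})|$.

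Step 3 (Handling the two Q-estimation regimes) and the main obstacle. The routine routes diverge when controlling the error term $\mathcal{E}_t$. For TD learning, the finite-time bound $\mathbb{E}\|Q^{\pi_{w_t}}_{i,K_{\text{TD}}}-Q^{\pi_{w_t}}_i\|_\infty = \widetilde{\mathcal{O}}(K_{\text{TD}}^{-\sigma/2}/(1-\gamma))$ combined with the prescribed $K_{\text{TD}}$ drives $\mathcal{E}_t$ to order $\beta$; since the estimator variance vanishes as $K_{\text{TD}}\to\infty$, setting $\alpha_\tau=0$ suffices and $\widehat{\boldsymbol{\lambda}}_\tau$ can be treated as nearly deterministic. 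For the unbiased Monte-Carlo estimator, the estimator variance is $\mathcal{O}(1/(1-\gamma)^2)$ and the product $\widehat{\boldsymbol{\lambda}}_\tau^{\top}\widehat{\nabla\boldsymbol{F}}(w_t)$ is biased because $\boldsymbol{\lambda}_\tau$ is a nonlinear function of the same noisy gradient. This is the main obstacle. I would import the momentum-based correlation-reduction analysis of Zhou et al.\ (2022): the prescribed $\alpha_\tau\geq 1-(1-\gamma)/(m\tau\sqrt{|\mathcal{S}||\mathcal{A}|})$ makes $\widehat{\boldsymbol{\lambda}}_\tau$ a slow average, so the cross-term $\mathbb{E}[(\widehat{\boldsymbol{\lambda}}_\tau-\mathbb{E}\widehat{\boldsymbol{\lambda}}_\tau)^{\top}\widehat{\nabla\boldsymbol{F}}(w_t)]$ shrinks like $(1-\alpha_\tau)\sigma^{2}$, and summing gives an additional $\mathcal{O}(\beta T)$ term that is absorbed after dividing by $|\mathcal{N}_0|$. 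The delicate point is that $\mathcal{N}_0$ itself is a random set depending on the estimated constraints; this is handled by running the filtration-adapted decomposition conditionally on the acceptance event at step $t$, and using the uniform-in-$t$ nature of the variance bounds so that the conditioning costs only a constant factor.
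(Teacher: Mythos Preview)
Your overall architecture matches the paper's: a CRPO-style alternating analysis, the softmax NPG progress lemma of Agarwal et al., a budget argument to lower-bound $|\mathcal{N}_0|$, and separate treatment of the two Q-estimation regimes with the Zhou et al.\ momentum variance bound for the unbiased case. The identification of the correlation bias as the main obstacle, and its resolution via $\mathbb{V}[\widehat{\boldsymbol{\lambda}}_\tau]\leq m^2B_1^2(1-\alpha_\tau)^2$, is exactly what the paper does.

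There is, however, a genuine gap in Step 2. You write that ``summing the progress lemma over $t\in\mathcal{N}_0$ telescopes the KL terms.'' It does not: $\mathcal{N}_0$ is not a contiguous block of indices, so $\sum_{t\in\mathcal{N}_0}[D_{\mathrm{KL}}(\pi^*\|\pi_{w_t})-D_{\mathrm{KL}}(\pi^*\|\pi_{w_{t+1}})]$ leaves uncontrolled boundary terms at every switch between objective and rectification phases. The paper (and CRPO) fixes this by summing over \emph{all} $t=0,\ldots,T-1$ simultaneously: for $t\in\mathcal{N}_0$ one uses the multi-objective inequality, and for $t\in\mathcal{N}_i$ one uses the analogous single-objective inequality $f_i(\pi_{w_t})-f_i(\pi^*)\leq\frac{1}{\eta}[D_{\mathrm{KL}}(\pi^*\|\pi_{w_t})-D_{\mathrm{KL}}(\pi^*\|\pi_{w_{t+1}})]+\mathcal{E}_t'$. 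The crucial point is that both inequalities share the \emph{same} KL potential with respect to the \emph{same} comparator $\pi^*$ (which is feasible, so $f_i(\pi^*)\leq c_i$), and only then does the full sum telescope to $D_{\mathrm{KL}}(\pi^*\|\pi_{w_0})/\eta$. Your phrase ``strictly reduces a potential function coupled to $f_i$'' suggests a separate Lyapunov per constraint, which would not interlock with the objective KL. Once the joint telescoping is in place, feasibility of $\pi^*$ gives $f_i(\pi_{w_t})-f_i(\pi^*)\geq\beta-\|\bar Q^i_t-Q^{\pi_{w_t}}_i\|$ on rectification steps, and the budget argument yields the dichotomy: either $\sum_{t\in\mathcal{N}_0}\widehat{\boldsymbol{\lambda}}_t^\top(\boldsymbol{F}(\pi^*)-\boldsymbol{F}(\pi_{w_t}))\leq 0$ (trivial case) or $|\mathcal{N}_0|\geq T/2$. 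You state only the latter, but the former is needed to cover all cases.

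A minor remark on Step 1: the CA-NPG correction terms do not ``cancel by the choice of $\boldsymbol{\theta}^*$.'' Rather, in the tabular softmax case the paper absorbs the preconditioner $(\psi_1\tilde F+\psi_2 I)^{-1}$ into the scalar weights $\lambda^i_t$ and then analyzes the update as a generic weighted NPG $w_{t+1}=w_t+\frac{\eta}{1-\gamma}\sum_i\lambda^i_t\bar Q^i_t$, relying only on Assumption~\ref{ass: lambd bound} for the weights. The specific CA-NPG structure plays no further role in the convergence proof.
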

As shown in Theorem \ref{thm: main}, our method is guaranteed to find a safe Pareto optimal policy under some mild conditions while there is no convergence guarantee for LP3 \cite{huang2022constrained}. Furthermore, results for unbiased Q-estimation imply that the correlation reduction mechanism could help the convergence even if we do not have an asymptotically increasing trajectory for policy evaluation, such as $K_{\text{TD}} = \widetilde{ \mathcal{O} }(T^{1/\sigma})$ in TD-learning.

% \section{Empirical implementation}
% \subsection{Choices of MO-PO methods}
% \subsection{distribution of preferences}
\section{Experiments} \label{sec: exp}
% \textbf{Empirical implementation} \yuhao{We may want to add a few sentences to explain how the method is empirically implemented if you have time and space capacity @Bilgehan @Shangding}

\textbf{Environment Settings.} 
% We developed a benchmark to examine our algorithms based on  MuJoCo~\cite{brockman2016openai, todorov2012mujoco}, called Safe Multi-Objective MuJoCo, as shown in Appendix \ref{append:environment-settings}, 
We designed a benchmark, termed \textit{Safe Multi-Objective MuJoCo}\footnote{\url{https://github.com/SafeRL-Lab/Safe-Multi-Objective-MuJoCo}}, for the purpose of scrutinizing our algorithms within the context of the MuJoCo framework~\cite{brockman2016openai, todorov2012mujoco}. A comprehensive overview of this benchmark can be found in Appendix \ref{append:environment-settings}, where Safe Multi-Objective HalfCheetah, Safe Multi-Objective Hopper, Safe Multi-Objective Humanoid, Safe Multi-Objective Swimmer, Safe Multi-Objective Walker, Safe Multi-Objective Pusher  are introduced to evaluate the effectiveness of our methods. Two of the Safe Multi-Objective MuJoCo environments, Safe Multi-Objective Humanoid and Pusher, are introduced in Figure~\ref{fig:cmorl-cmorl-crpos-safe-multi-task-halfcheetah-different-limit-0Dot005} (a) and (b). Please see Appendix \ref{append:environment-settings} for the detailed environment settings.

\textbf{CR-MOPO Performance on Challenging Safe Multi-Objective Environments.}
As shown in Figure~\ref{fig:safe-multi-task-different-tasks},  we conduct experiments with our algorithm, CR-MOPO, on challenging Safe Multi-Objective MuJoCo Environments. The each step's cost limit of HalfCheetah-v4 is $0.1$, HalfCheetah-v4-different-limit is $0.3$, Humanoid-v4 is  $0.9$, Humanoid-dm is $1.5$, Walke-v4 is $0.03$,  Hopper-v4 is $0.03$, Pusher-v4 is $0.49$, Swimmer is $0.049$. We optimize safety violations after $40$ Epochs except for the Humanoid-dm task; in the Humanoid-dm task, we optimize the safety violations after 5 Epochs. On all the challenging tasks, the experiment results indicate our method can guarantee each task's reward monotonic improvement while ensuring safety. Please see Appendix \ref{append:environment-settings} for more experiments.

\begin{figure}[htbp!]
 \centering
%  \subcaptionbox{}
%  {
%   \includegraphics[width=0.23\linewidth]{}
%   }
%    \subcaptionbox{}
%   {
% \includegraphics[width=0.23\linewidth]{}
% }
%  \subcaptionbox{}
%  {
%   \includegraphics[width=0.23\linewidth]{}
%   }
%    \subcaptionbox{}
%   {
% \includegraphics[width=0.23\linewidth]{}
% }
\subcaptionbox{}
 {
  \includegraphics[width=0.45\linewidth]{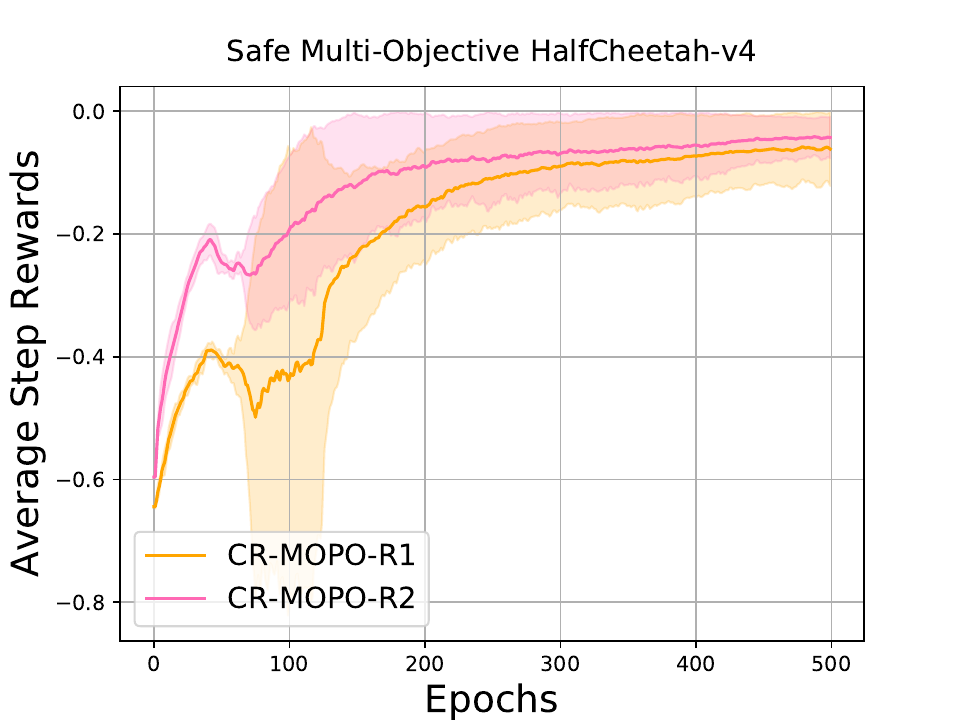}
  }
   \subcaptionbox{}
  {
\includegraphics[width=0.45\linewidth]{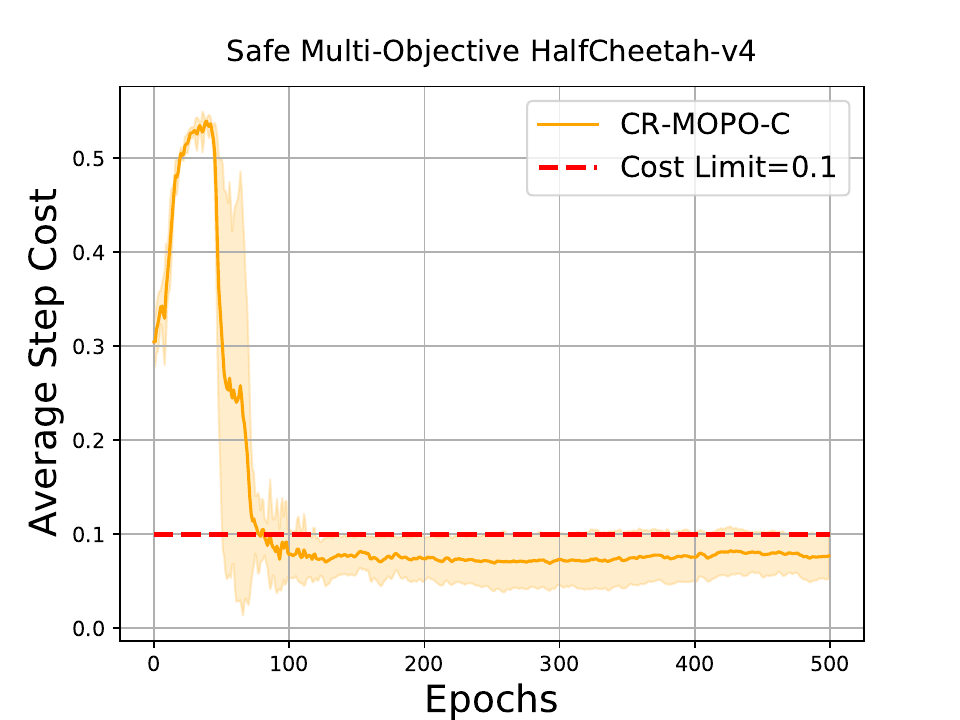}
}
\subcaptionbox{}
 {
  \includegraphics[width=0.45\linewidth]{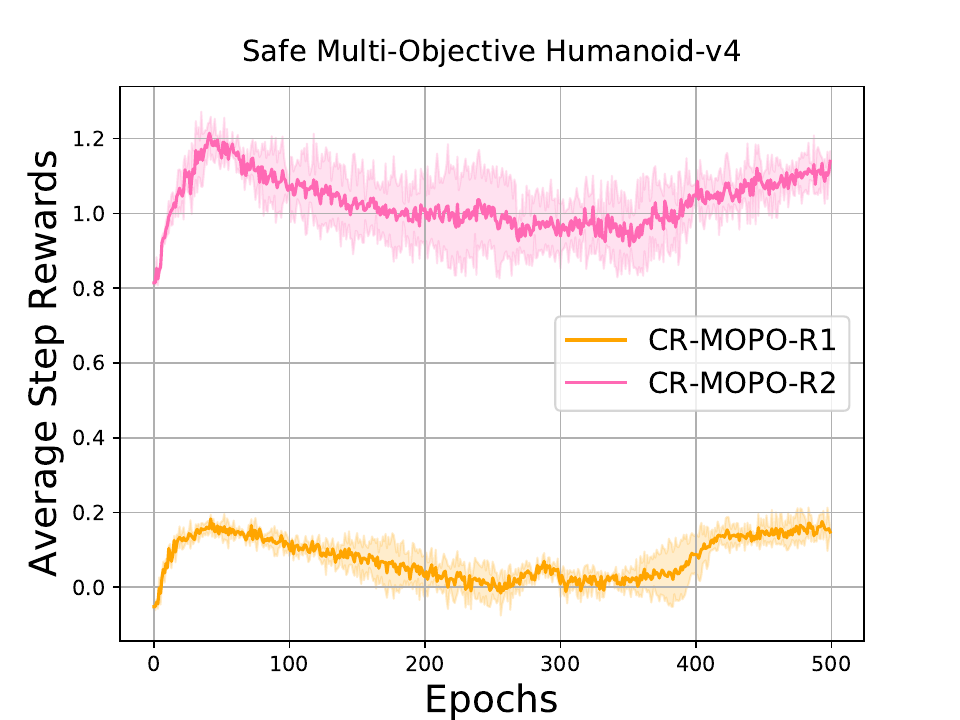}
  }
   \subcaptionbox{}
  {
\includegraphics[width=0.45\linewidth]{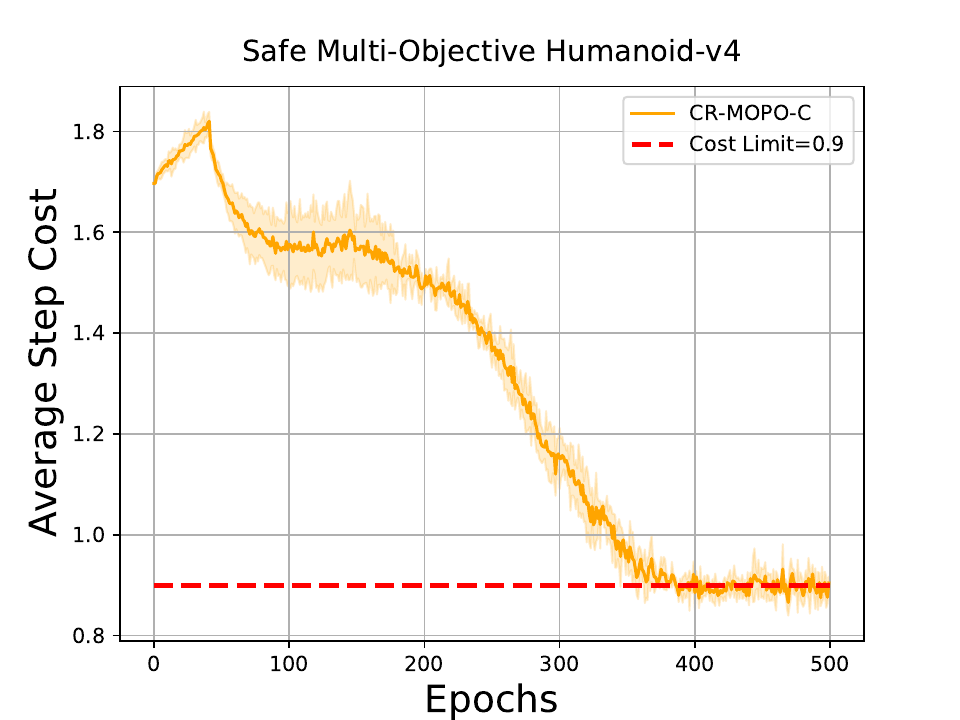}
}
%  \subcaptionbox{}
%  {
%   \includegraphics[width=0.23\linewidth]{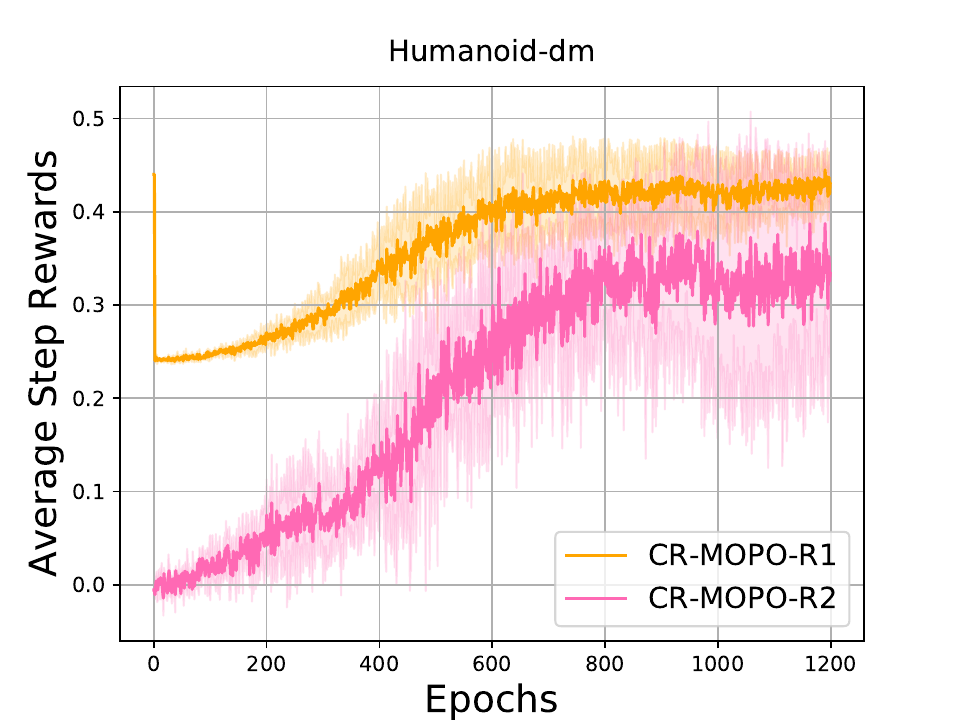}
%   }
%    \subcaptionbox{}
%   {
% \includegraphics[width=0.23\linewidth]{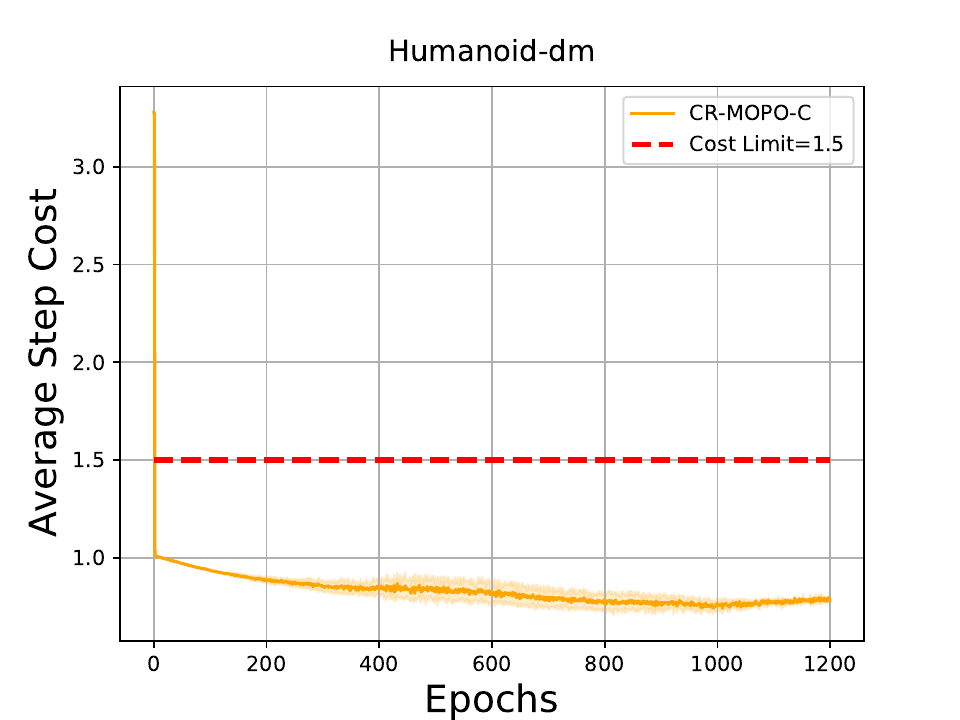}
% }
 \subcaptionbox{}
 {
  \includegraphics[width=0.45\linewidth]{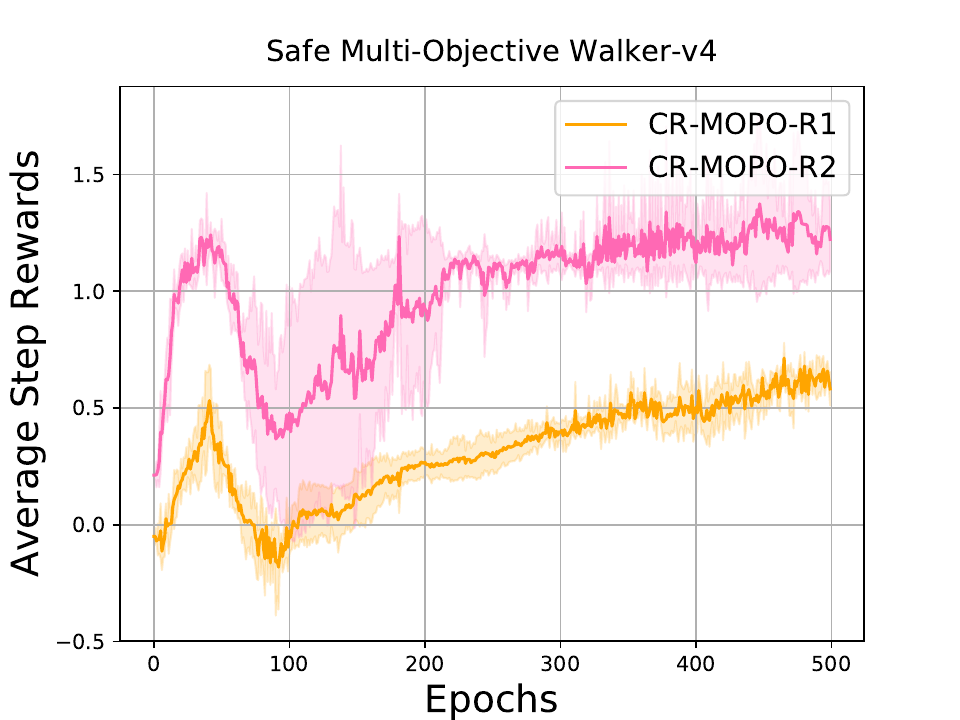}
  }
   \subcaptionbox{}
  {
\includegraphics[width=0.45\linewidth]{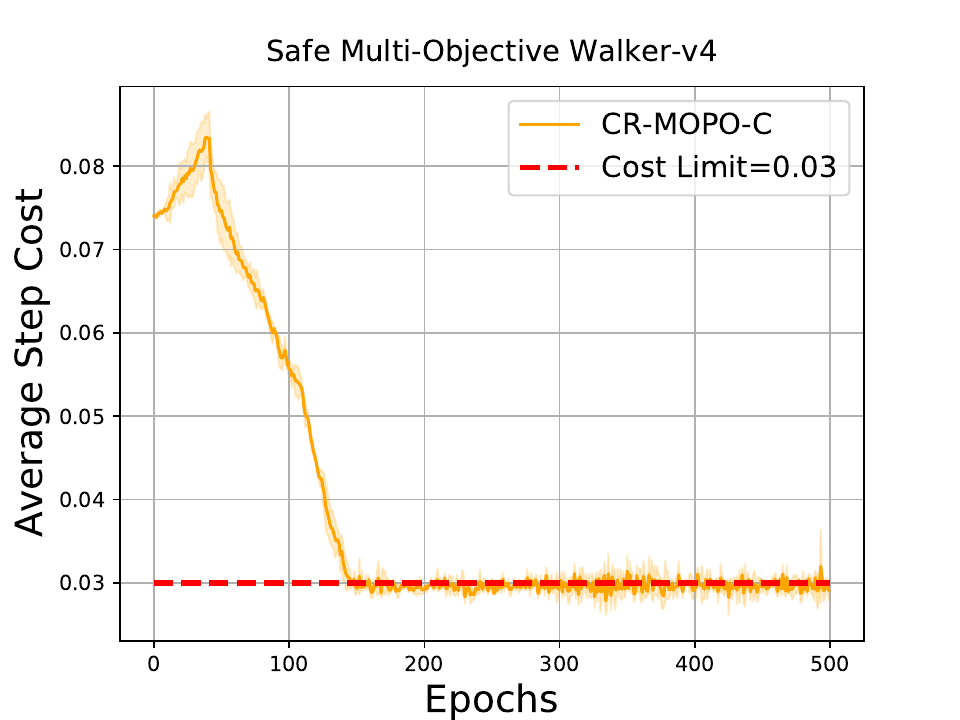}
}
%  \subcaptionbox{}
%  {
%   \includegraphics[width=0.23\linewidth]{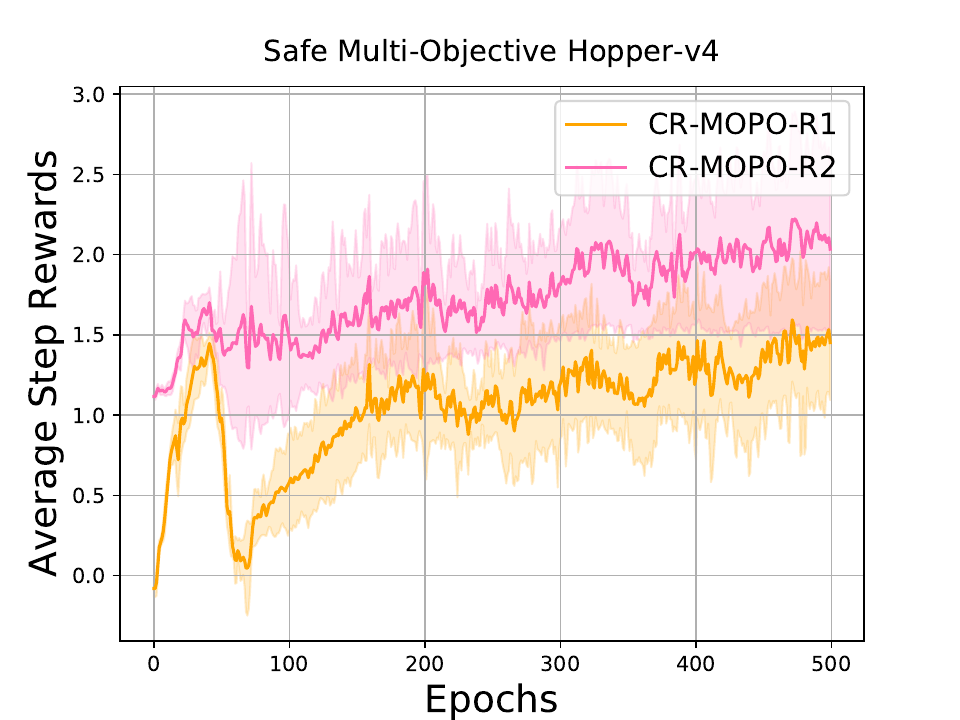}
%   }
%    \subcaptionbox{}
%   {
% \includegraphics[width=0.23\linewidth]{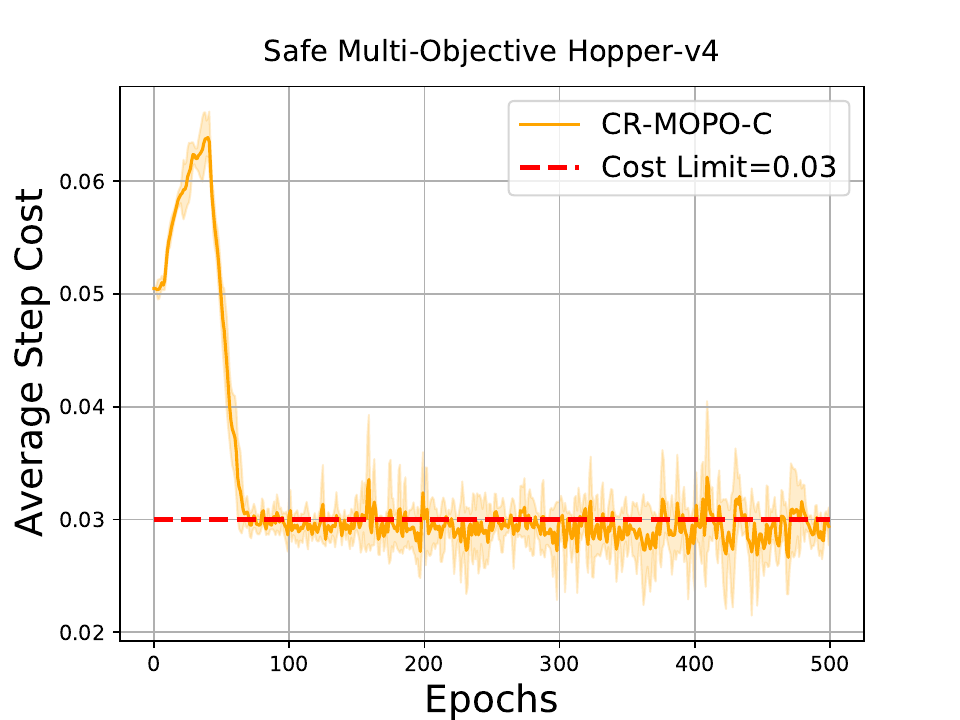}
% }
\subcaptionbox{}
 {
  \includegraphics[width=0.45\linewidth]{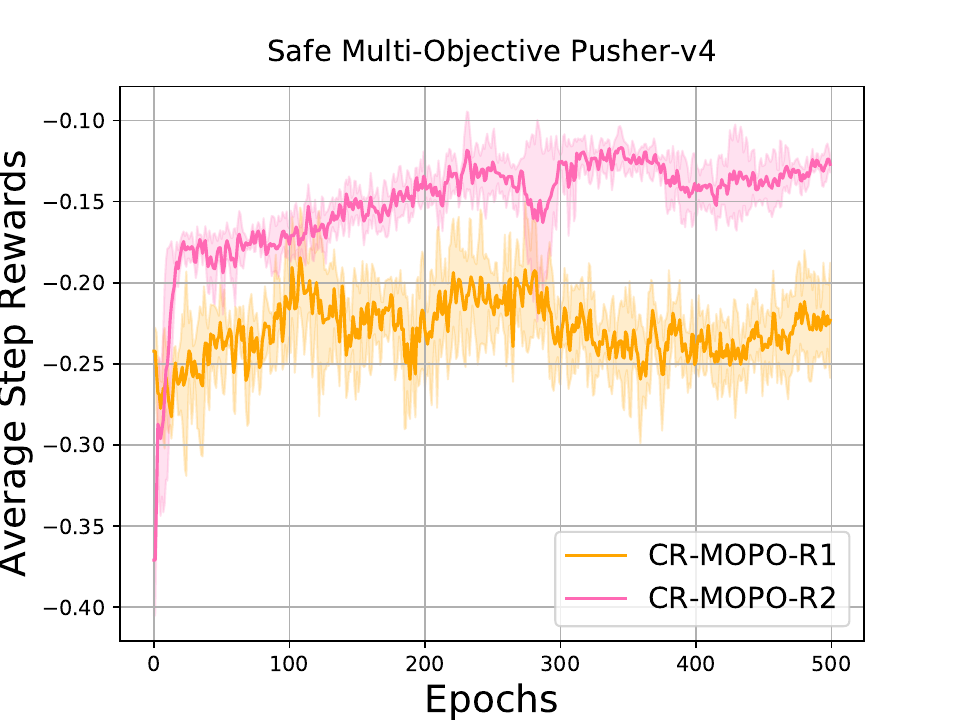}
  }
   \subcaptionbox{}
  {
\includegraphics[width=0.45\linewidth]{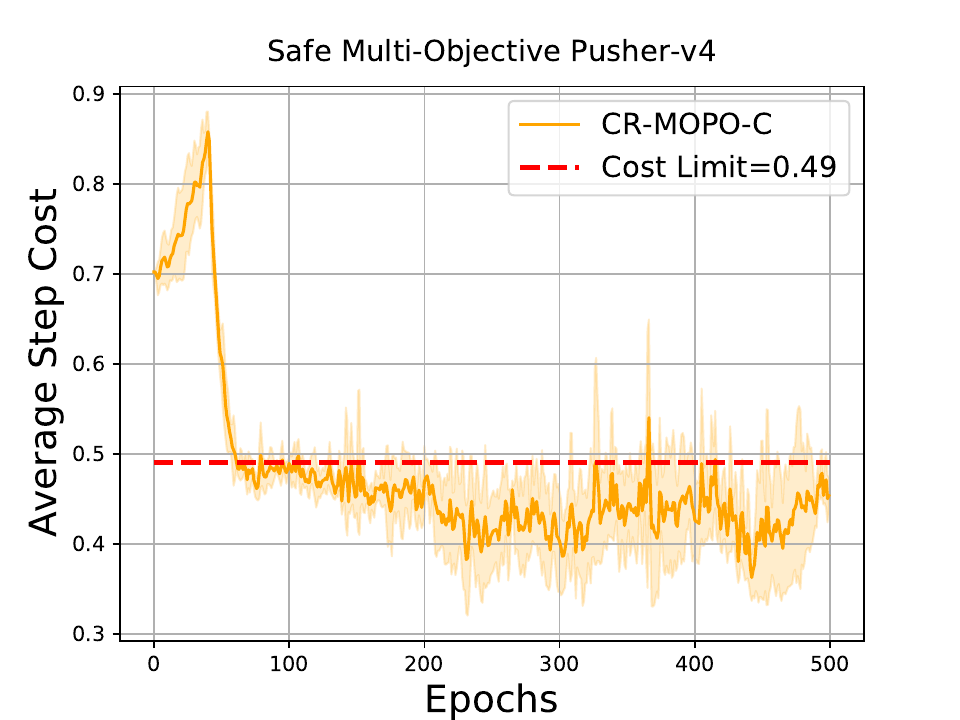}
}
% \subcaptionbox{}
%  {
%   \includegraphics[width=0.23\linewidth]{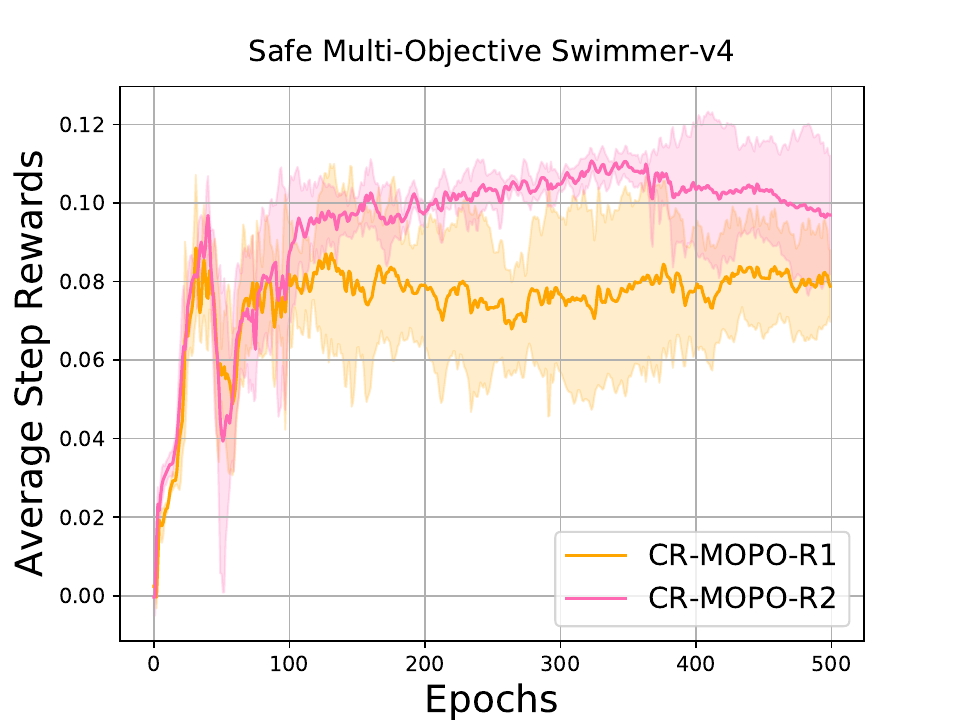}
%   }
%    \subcaptionbox{}
%   {
% \includegraphics[width=0.23\linewidth]{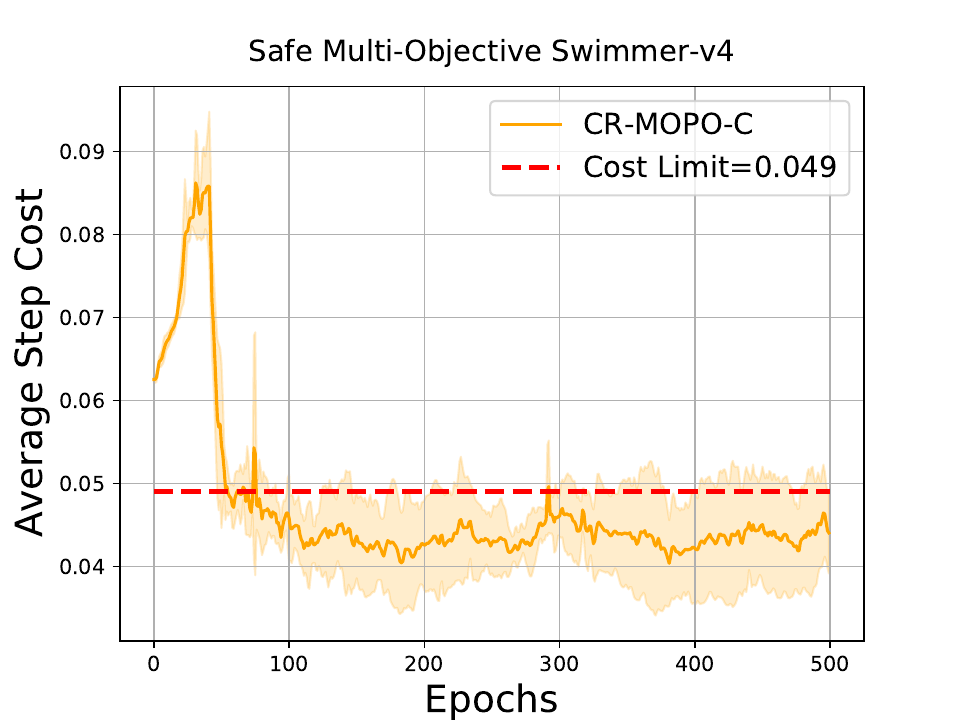}
% }
    \vspace{-0pt}
 	\caption{\normalsize CR-MOPO on Safe Multi-Objective MuJoCo environments regarding the reward and safety performance.  
 	} 
  \label{fig:safe-multi-task-different-tasks}
 \end{figure} 

 % \subsection{Comparison experiments with CRPO~\cite{xu2021crpo} on  Safe Multi-Objective HalfCheetah Environments}

 \textbf{Comparison experiments with CRPO~\cite{xu2021crpo} on  Safe Multi-Objective HalfCheetah Environments.} We develop a novel algorithm, the CR-MOPO-Soft (CR-MOPO-S) constraints' algorithm, that can handle the safety constraint as one of the objectives. In CR-MOPO-S, we do not need to include the constraint in the objective. Instead, we incorporate the constraint in our method with a weight for performance (e.g., the weight can be 1.0). We hypothesize that by considering the constraint, we can navigate into a deep safe set, allowing us to concentrate on improving performance. This is especially relevant when the system is highly constrained, as operating near the boundary of the safe set can easily lead to constraint violations and "oscillations" behaviors from safe learning methods, e.g., CRPO~\cite{xu2021crpo}, CPO~\cite{achiam2017constrained}, PCPO~\cite{yang2020projection}. Intuitively, if we consistently operate at the safety boundary, we may encounter a conflict between safety and performance. Drawing an analogy to running on a road, if we position ourselves close to the edge, stepping out of the road would require effort to retreat, potentially compromising speed. However, suppose we realize the importance of running in the center of the road. In that case, we can prioritize safety and maximizing speed simultaneously. To hammer in the insight, we expand on the examples and scenarios we consider. We focus on a simple CMDP setup, prioritizing the maximization of rewards by incorporating safety violations as one of the objectives. We compare our algorithms, CR-MOPO and CR-MOPO-S, with a strong safe RL baseline, CRPO. In CRPO\cite{xu2021crpo}, the multiple tasks are scaled linearly into one objective. CRPO is a safe RL algorithm that can show better performance than safe RL algorithms, such as CPO~\cite{achiam2017constrained}, IPO~\cite{liu2020ipo}, and unsafe RL algorithms such as TRPO~\cite{schulman2015trust} in terms of the balance between reward and safety violation.
As shown in Figure~\ref{fig:cmorl-cmorl-crpos-safe-multi-task-halfcheetah-different-limit-0Dot005}, the experiment results demonstrate that our algorithms, CR-MOPO and CR-MOPO-S show better performance than CRPO regarding the reward and safety performance. And also, Our algorithms show a faster convergence rate than CRPO\cite{xu2021crpo}. The detailed implementation is provided in Appendix \ref{append:implementation-details}.

 \begin{figure}[htbp!]
 \centering
 \subcaptionbox{}
 {
  \includegraphics[width=0.246\linewidth]{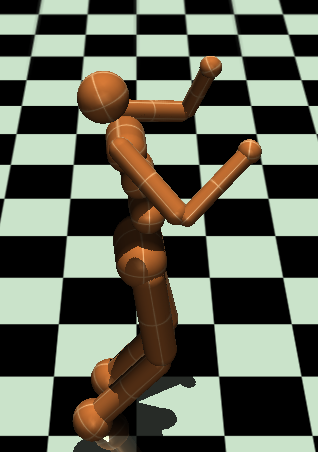}
  }
   \subcaptionbox{}
  {
\includegraphics[width=0.39\linewidth]{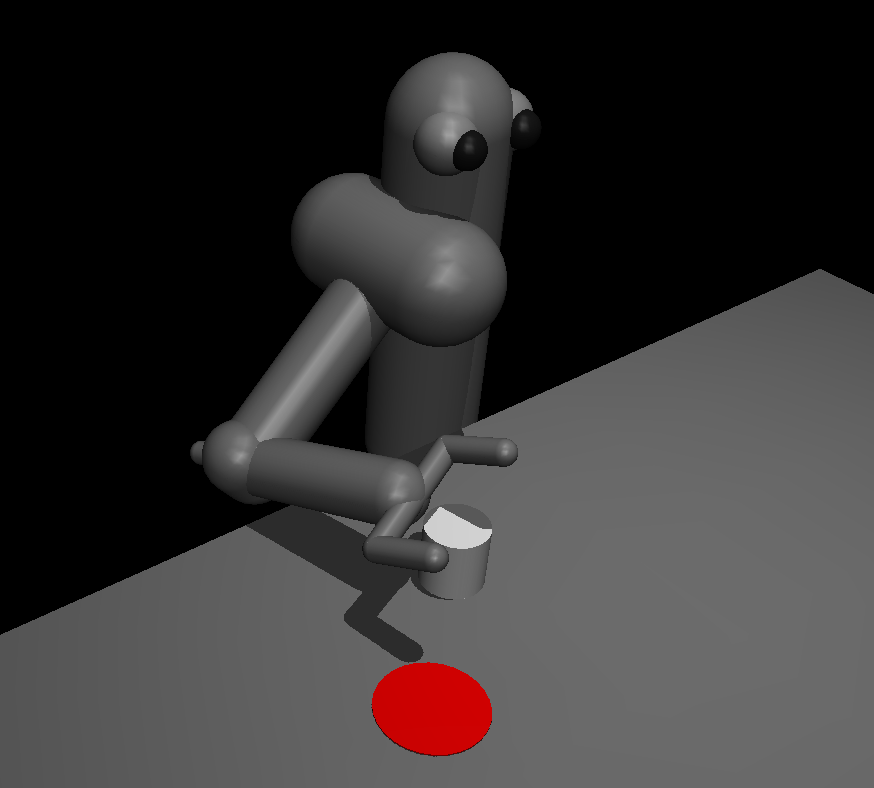}
}
 \subcaptionbox{}
 {
  \includegraphics[width=0.45\linewidth]{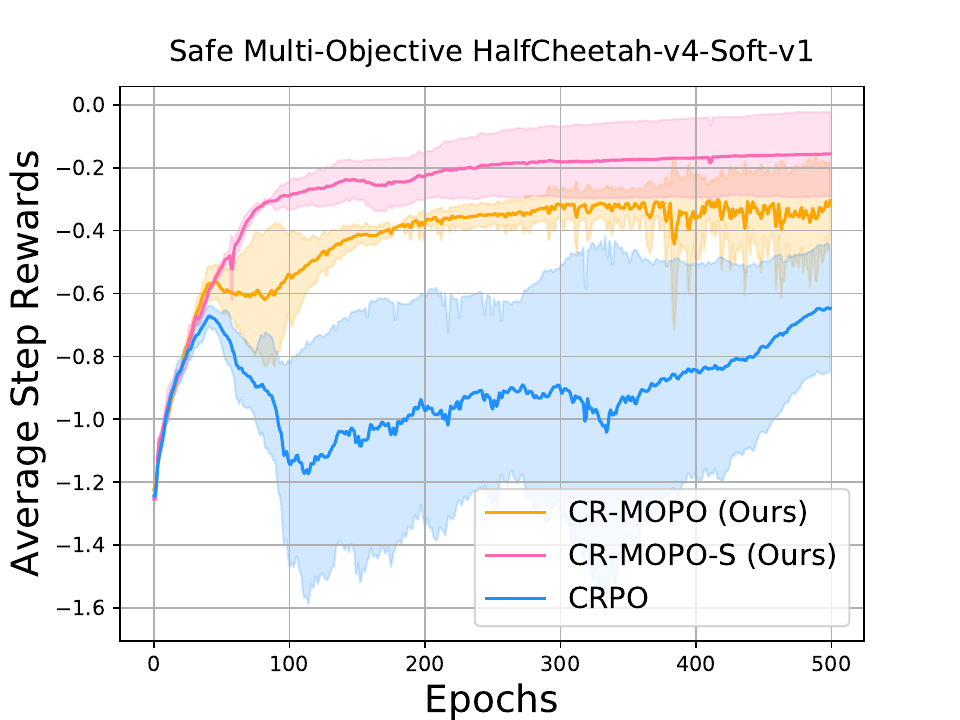}
  }
   \subcaptionbox{}
  {
\includegraphics[width=0.45\linewidth]{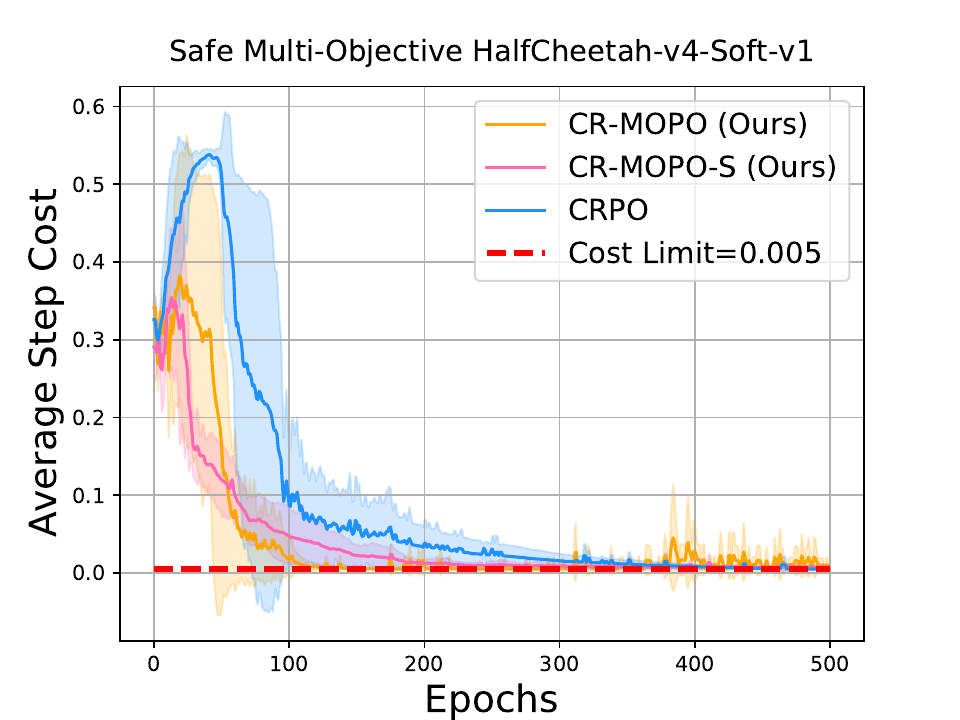}
}
    \vspace{-0pt}
 	\caption{\normalsize (a) and (b) show two of the Safe Multi-Objective MuJoCO environments, Safe Multi-Objective Humanoid and Pusher. (c) and (d) show the comparison results in terms of CR-MOPO, CR-MOPO-S and CRPO~\cite{xu2021crpo} on a Safe Multi-Objective MuJoCo environment, Safe Multi-Objective HalfCheetah, the cost limit is $0.005$, we start to optimize safety violation after $40$ Epochs. 
 	} 
  \label{fig:cmorl-cmorl-crpos-safe-multi-task-halfcheetah-different-limit-0Dot005}
 \end{figure}

\begin{figure}[htbp!]
 \centering
 \subcaptionbox{}
 {
  \includegraphics[width=0.45\linewidth]{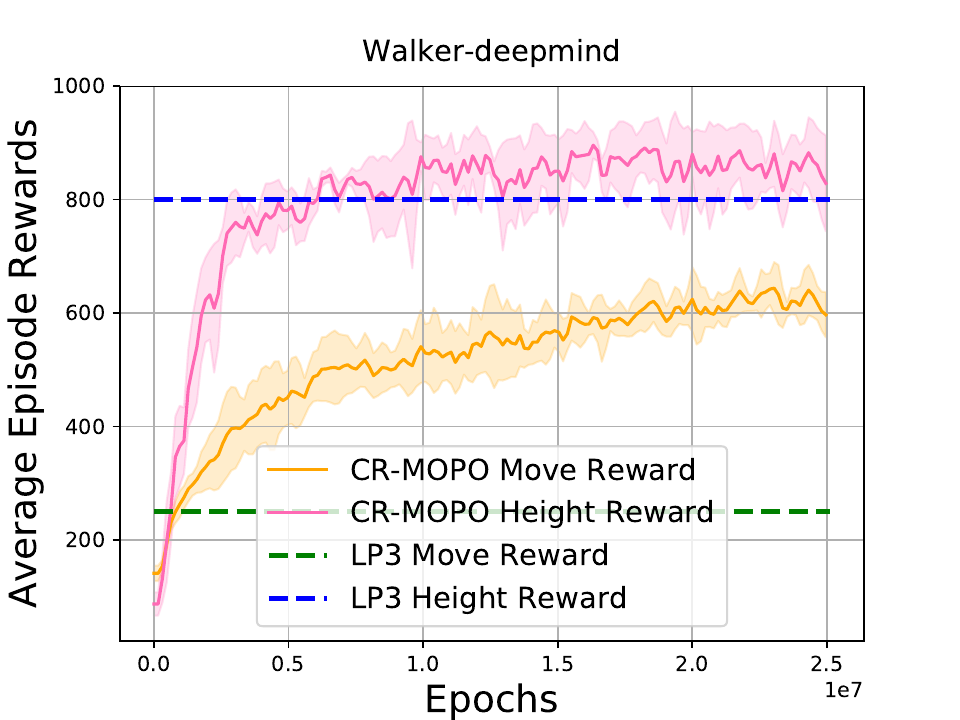}
  }
   \subcaptionbox{}
  {
\includegraphics[width=0.45\linewidth]{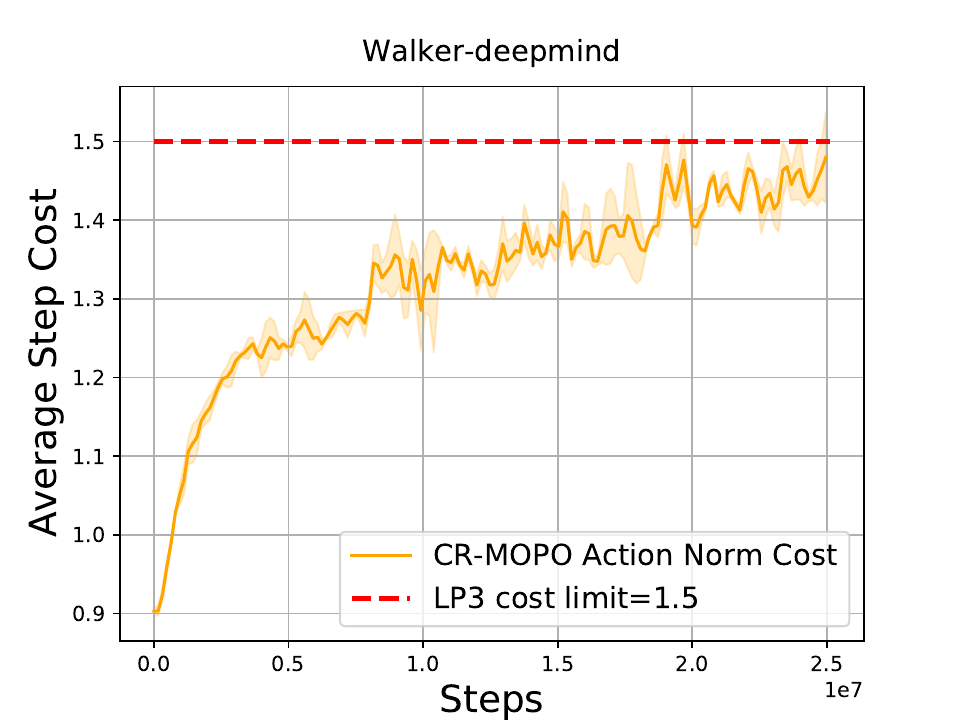}
}
\subcaptionbox{}
 {
  \includegraphics[width=0.45\linewidth]{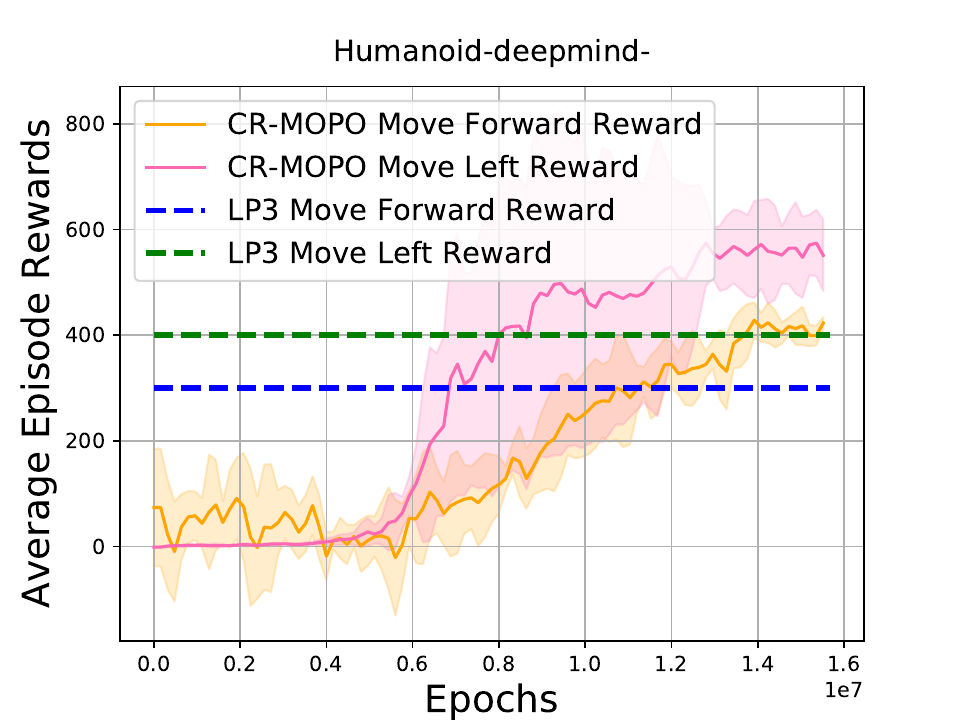}
  }
   \subcaptionbox{}
  {
\includegraphics[width=0.45\linewidth]{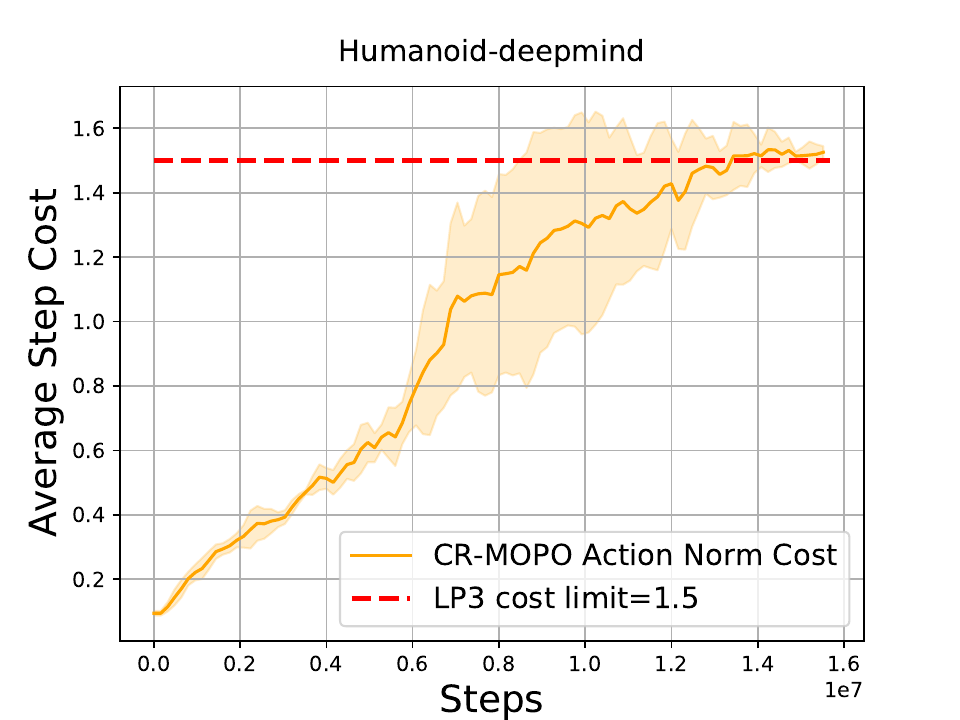}
}
    \vspace{-0pt}
 	\caption{\normalsize Compared with the DeepMind's method, LP3~\cite{huang2022constrained}, on Safe Multi-Objective Walker-dm and Safe Multi-Objective Humanoid-dm environments. 
 	} 
  \label{fig:cmorl-compared-with-deepmind-lp3-walker-humanoid}
 \end{figure}

  \textbf{Comparison experiments with LP3~\cite{huang2022constrained} on Different Safe Multi-Objective Humanoid-dm and Walker-dm.} LP3~\cite{huang2022constrained} is a SOTA-safe multi-objective RL baseline proposed by DeepMind. They achieve safe multi-objective RL by leveraging the learning preferences, and they also deploy their methods on several challenging tasks, e.g., Humanoid-dm and Walker-dm that are from the DeepMind Control Suite~\cite{tassa2018deepmind}. In this section, we compare our algorithm with LP3 on the same environment settings. As shown in Figure~\ref{fig:cmorl-compared-with-deepmind-lp3-walker-humanoid}, our algorithm can show remarkably better performance than LP3~\cite{huang2022constrained} while guaranteeing safety. As shown in Figure~\ref{fig:cmorl-compared-with-deepmind-lp3-walker-humanoid} (a) and (b), in the Walker-dm task, when the cost limit is $1.5$, and our method can perform very well, e.g., the move reward is more than 800, and the height reward is more than 600; in the same setting, using LP3~\cite{huang2022constrained}, the move reward can only be more than 250, and the height reward is about 800. For Humanoid-dm, LP3 can achieve about 400 and 300 regarding the move left and move forward reward when the cost limit is 1.5. However, our method can achieve at least 400 regarding the move left reward and about 600 regarding the move forward reward. The experiment results demonstrate that our method performs significantly better than LP3~\cite{huang2022constrained}.

  One explanation of the better results compared to LP3, could be that our methods can optimize policies to the boundaries of the constraint thresholds. Although this results in not conservative policies, this can easily be remedied by choosing more conservative constraint thresholds.
\section{Conclusion} \label{sec: conclu}

In this study, we try to balance each task's performance in a multi-task RL setting. Further, we achieve each task rewards monotonic improvement and ensure policy safety. A primal-based safe multi-task RL framework is proposed, where multiple objectives between different tasks are optimized by analyzing the conflict gradient manipulation, and the constraint rectification is leveraged to search for the safety policy during multiple objectives'  exploration. Moreover, the convergence and safety violation analysis are provided. Finally, we deploy our practical algorithms on several challenging, safe multi-task RL environments and compare our method with the SOTA safe RL baseline and safe multi-task RL algorithms. The experiment results indicate that our method can perform better than SOTA-safe RL baselines and SOTA-safe multi-task RL algorithms regarding the balance between each task performance and safety violation. It is necessary to address how to deploy our algorithm in real-world scenarios and try to leverage the foundation models~\cite{yang2023foundation} with our method to address safe multi-task RL robustness problems. A conceivable adverse societal consequence arises from the potential misapplication of this work within safety-critical contexts, which could lead to unforeseen damages. It is our aspiration that our discoveries catalyze further investigations into the safety and generalizability of reinforcement learning in secure environments.

% In the future, we plan to deploy our algorithm in real-world scenarios and try to combine the foundation models~\cite{yang2023foundation} with our method to address safe multi-task RL robustness problems.

%%%%%%%%%%%%%%%%%%%%%%%%Arxiv End 
  %%%%%%%%%%%%%%%%%%%%%%%%%%%%%%%%%%%%

% \input{files/intro}
% \input{files/related_work}
% \input{files/prelim}
% \input{files/algorithm}
% \input{files/theory}
% \input{files/append_convex}
% % \input{files/append_alg}
% % \input{files/append_RL}
% \input{files/emperical_implementation}
% \input{files/experiments}
% \input{files/conclusion}

\bibliographystyle{plain}
\bibliography{main}
\clearpage

\appendix

\begin{center}
\textbf{\Large Appendix}
\end{center}

%%%%%%%%%%%%%%%%%%%%%%%%Arxiv start%%%%%%%%%%%%%%%%%%
% \section{Convex optimization} \label{sec: append_convex}

% \input{files_arxiv/append_alg}
% \newpage
\section{Proof} \label{sec: Proof}
We prove the results of Theorem \ref{thm: main} in two parts. We give the TD-Learning results in Theorem \ref{thm: td-learning} in section \ref{sec: td-learning}, and the unbiased q-estimation estimator results in Theorem \ref{thm: q-estimator} in section \ref{sec: Q-estimator}.
% \begin{lemma}[Performance difference lemma \cite{kakade2001natural}]\label{eq: performance difference lemma}
% For all policies $\pi$, $\pi^{\prime}$ and initial distribution $\rho$, we have
% $$
% f_i(\pi)-f_i\left(\pi^{\prime}\right)=\frac{1}{1-\gamma} \mathbb{E}_{s \sim \nu_\rho} \mathbb{E}_{a \sim \pi(\cdot \mid s)}\left[A^{\pi^{\prime}}_i(s, a)\right]
% $$
% where $f_i(\pi)$ and $\nu_\rho$ denote the accumulated reward (cost) function and visitation distribution under policy $\pi$ when the initial state distribution is $\rho$.
% \end{lemma}

\begin{lemma}[Multi-objective NPG]\label{eq: multi-objective NPG}
Given the preference vector $\boldsymbol{\lambda}$ and considering the multi-objective NPG update $w_{t+1}=w_t- \eta \tilde{F}(w_t)^{\dagger}  \sum_{i=1}^m \lambda^i \nabla f_i\left(\pi_{w_t}\right)$, where $\tilde{F}(w)=\mathbb{E}_{\nu_{\pi_w}}\left[\phi_w(s, a) \phi_w(s, a)^{\top}\right]$ in the tabular setting, the multi-objective NPG update also takes the form:
$$
w_{t+1}=w_t+\frac{\eta}{1-\gamma} \sum_{i=1}^m \lambda^i Q_i^{\pi_{w_t}},\quad \quad \pi_{w_{t+1}}(a \mid s)=\pi_{w_t}(a \mid s) \frac{\exp \left(\eta \sum_{i=1}^m \lambda_i Q_i^{\pi_{w_t}} /(1-\gamma)\right)}{Z_t(s)} .
$$
where
$$
Z_t(s)=\sum_{a \in \mathcal{A}} \pi_{w_t}(a \mid s) \exp{ \left(\frac{\eta \sum_{i=1}^m \lambda_i  {Q}_i^{\pi_{w_t}}(s, a)}{1-\gamma}\right)}
$$
\end{lemma}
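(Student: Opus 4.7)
The plan is to prove the lemma in three steps: reduce the multi-objective NPG expression to a single-objective softmax identity by linearity of the pseudoinverse, establish that identity by an explicit Fisher-matrix computation, and then derive the multiplicative policy update by applying the softmax definition to the resulting parameter update. Linearity of $\tilde{F}(w_t)^{\dagger}$ immediately gives $\tilde{F}(w_t)^{\dagger}\sum_{i=1}^m \lambda^i \nabla f_i(\pi_{w_t}) = \sum_{i=1}^m \lambda^i \tilde{F}(w_t)^{\dagger}\nabla f_i(\pi_{w_t})$, so it suffices to identify $\tilde{F}(w)^{\dagger}\nabla f_i(\pi_w)$ for a single objective, and the $\eta$ factor and the preference weights can be reintroduced at the end.

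For the single-objective identity, I will compute the softmax score function explicitly as
$$\phi_w(s,a)_{(s',a')} \;=\; \frac{\partial \log \pi_w(a\mid s)}{\partial w(s',a')} \;=\; \mathbf{1}[s=s']\bigl(\mathbf{1}[a=a'] - \pi_w(a'\mid s)\bigr),$$
substitute it into $\tilde{F}(w) = \mathbb{E}_{\nu_{\pi_w}}[\phi_w\phi_w^{\top}]$, and observe two structural properties: the Fisher matrix is block-diagonal across states (because $\phi_w(s,a)_{(s',a')}$ vanishes unless $s=s'$), and within each state-block it has a one-dimensional null space spanned by the constant-in-$a$ direction (because $\sum_{a'}\pi_w(a'\mid s)[\mathbf{1}[a=a'] - \pi_w(a'\mid s)] = 0$). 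Using the policy gradient theorem to write $\nabla f_i(\pi_w)_{(s,a)} \propto \nu_{\pi_w}(s)\pi_w(a\mid s) A_i^{\pi_w}(s,a)$ together with $\sum_{a'}\pi_w(a'\mid s)A_i^{\pi_w}(s,a') = 0$, a direct computation shows that $u = \tfrac{1}{1-\gamma}Q_i^{\pi_w}$ satisfies $\tilde{F}(w)\,u = \nabla f_i(\pi_w)$ on the range of $\tilde{F}(w)$; applying the pseudoinverse then yields $\tilde{F}(w)^{\dagger}\nabla f_i(\pi_w) = \tfrac{1}{1-\gamma}Q_i^{\pi_w}$ modulo a per-state additive shift.

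Summing this identity with weights $\lambda^i$ and combining with the NPG step produces the parameter update $w_{t+1}(s,a) = w_t(s,a) + \tfrac{\eta}{1-\gamma}\sum_i \lambda^i Q_i^{\pi_{w_t}}(s,a)$, up to a per-state constant to which the softmax is blind. Plugging this into the softmax definition and factoring $\exp(w_t(s,a))$ out of the numerator and each term of the denominator, then dividing numerator and denominator by $\sum_{a''}\exp(w_t(s,a''))$, replaces the $\exp(w_t)/\sum \exp(w_t)$ ratios by $\pi_{w_t}(\cdot\mid s)$ and identifies the denominator as the stated normalizer $Z_t(s)$. The main technical obstacle in this proof is the rank-deficiency of $\tilde{F}(w)$ under the softmax parameterization: the per-state null directions mean $\tilde{F}(w)^{\dagger}$ determines $w_{t+1}$ only up to a per-state additive constant, so one cannot literally invert $\tilde{F}(w)$. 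The saving grace is that such constants cancel in the softmax normalization, so although the parameter vectors on the two sides of the first displayed equality may differ by a null-space element, the induced policy $\pi_{w_{t+1}}$ is uniquely pinned down, which is precisely what the lemma's second displayed formula asserts.
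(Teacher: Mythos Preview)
Your proposal is correct and follows essentially the same route as the paper. The paper's proof is more terse: it invokes Lemma~5.1 of \cite{agarwal2021theory} as a black box to obtain $\tilde{F}(w_t)^{\dagger}\nabla f_i(\pi_{w_t}) = \tfrac{1}{1-\gamma}A_i^{\pi_{w_t}} + v$ with $v$ a per-state constant, then observes (as you do) that per-state additive shifts cancel in the softmax normalization and that $A_i$ and $Q_i$ differ only by such a shift. Your plan simply unpacks that cited lemma by computing the softmax Fisher matrix directly and exhibiting its block-diagonal structure and per-state null directions, which is a perfectly valid and more self-contained way to reach the same conclusion.
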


\begin{proof}
From Lemma 5.1 in \cite{agarwal2021theory}, we know
$$
\tilde{F}(w_t)^{\dagger} \nabla f_i\left(\pi_{w_t}\right)=\frac{A_i^{\pi_{w_t}}}{1-\gamma}+v
$$ 
where $v \in \mathbb{R}^{|\mathcal{S}||\mathcal{A}|}$ and
$v_{s, a}=c_s \text { for some } c_s \in \mathbb{R} \text { for each state } s$ and action $a$. This yields the updates
$$
w_{t+1}=w_t+\frac{\eta}{1-\gamma} \sum_{i=1}^m \lambda^i A_i^{\pi_{w_t}}+\eta v\sum_{i=1}^m \lambda^i\quad$$
and
$$\pi_{w_{t+1}}(a \mid s)=\pi_{w_t}(a \mid s) \frac{\exp \left(\eta \sum_{i=1}^m \lambda_i A_i^{\pi_{w_t}} /(1-\gamma)+\eta c_s\sum_{i=1}^m \lambda_i \right)}{Z_t(s)} .
$$
Owing to the normalization factor $Z_t(s)$, the state-dependent offset $c_s$ cancels in the updates for $\pi$, so that resulting policy is invariant to the specific choice of $c_s$. Hence, we pick $c_s \equiv 0$, which yields the updates
$$
w_{t+1}=w_t+\frac{\eta}{1-\gamma} \sum_{i=1}^m \lambda^i A_i^{\pi_{w_t}}$$
and
$$\pi_{w_{t+1}}(a \mid s)=\pi_{w_t}(a \mid s) \frac{\exp \left(\eta \sum_{i=1}^m \lambda^i A_i^{\pi_{w_t}} /(1-\gamma)\right)}{Z_t(s)} 
$$
Finally, the advantage function $A_i^{\pi_w}(s,a)=Q_i^{\pi_w}(s,a)-V_i^{\pi_w}(s,a) $ can be replaced by the Q-function  $Q_i^{\pi_w}(s,a) $ due to the normalization factor $Z_t(s)$, which yields the the statement of the lemma.

\end{proof}

\begin{lemma}
\label{lemma: Performance improvement bound for approximated multi-objective NPG}

\textbf{Performance improvement bound for approximated multi-objective NPG}.
% \\
% $\boldsymbol{}$.\\
 For the iterates $\pi_{w_t}$ generated by the approximated multi-objective NPG updates in the tabular setting, we have the following holds
$$
\begin{aligned}
&\sum_{i=1}^m \lambda^i \left(f_i\left(w_{t+1}\right)-f_i\left(w_t\right)\right)\\
{\geq} &\frac{1-\gamma}{\eta} \mathbb{E}_{s \sim \rho}\left(\log Z_t(s)-\frac{\eta}{1-\gamma}  \sum_{i=1}^m \lambda^iV^{\pi_{w_t}}_i(s)+\frac{\eta}{1-\gamma} \sum_{a \in \mathcal{A}} \pi_{w_t}(a \mid s) \sum_{i=1}^m \lambda^i\left(Q^{\pi_{w_t}}_i(s, a)-\bar{Q}^t_i(s, a)\right)\right) \\
& -\frac{1}{1-\gamma} \mathbb{E}_{s \sim \nu_\rho} \sum_{a \in \mathcal{A}} \pi_{w_t}(a \mid s) \sum_{i=1}^m \lambda^i\left(Q^{\pi_{w_t}}_i(s, a)-\bar{Q}^t_i(s, a)\right) \\
& +\frac{1}{1-\gamma} \mathbb{E}_{s \sim \nu_\rho} \sum_{a \in \mathcal{A}} \pi_{w_{t+1}}(a \mid s) \sum_{i=1}^m \lambda^i\left(Q^{\pi_{w_t}}_i(s, a)-\bar{Q}^t_i(s, a)\right)  .
\end{aligned}
$$    
\end{lemma}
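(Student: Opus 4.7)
The plan is to mimic the standard mirror-descent analysis of NPG (in the spirit of the softmax NPG analyses used to prove Lemma~\ref{eq: multi-objective NPG}), but to (i) carry the weighted combination $\sum_i \lambda^i f_i$ throughout and (ii) bookkeep the $Q$-estimation error $Q_i^{\pi_{w_t}} - \bar Q^t_i$ as an explicit additive perturbation at every step. The three workhorses will be the performance-difference lemma, the approximated analog of Lemma~\ref{eq: multi-objective NPG} (obtained by running the same derivation with $\bar Q^t_i$ in place of $Q_i^{\pi_{w_t}}$), which yields the pointwise identity $\tfrac{\eta}{1-\gamma}\sum_i \lambda^i \bar Q^t_i(s,a) = \log[\pi_{w_{t+1}}(a\mid s)/\pi_{w_t}(a\mid s)] + \log Z_t(s)$, and Jensen's inequality to control $\log Z_t(s)$ pointwise.

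First I would apply the performance-difference lemma objective-by-objective and sum with weights $\lambda^i$, obtaining $\sum_i \lambda^i(f_i(w_{t+1})-f_i(w_t)) = \tfrac{1}{1-\gamma}\mathbb{E}_{s\sim \nu_{\pi_{w_{t+1}}}}\sum_a \pi_{w_{t+1}}(a\mid s)\sum_i \lambda^i\bigl(Q_i^{\pi_{w_t}}(s,a) - V_i^{\pi_{w_t}}(s)\bigr)$. I would then add and subtract $\bar Q^t_i$ inside the bracket to split it into $\sum_i \lambda^i \bar Q^t_i - \sum_i \lambda^i V_i^{\pi_{w_t}}$ plus the residual $\sum_i \lambda^i(Q_i^{\pi_{w_t}}-\bar Q^t_i)$, the latter evaluated against $\pi_{w_{t+1}}$ (this is exactly Term~3 in the claim). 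The $\bar Q^t_i$-piece is rewritten via the NPG identity above, collapsing into $\tfrac{1}{\eta}D_{\text{KL}}(\pi_{w_{t+1}}(\cdot\mid s)\,\|\,\pi_{w_t}(\cdot\mid s)) + \tfrac{1}{\eta}\log Z_t(s)$; the KL term is non-negative and I drop it.

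The pivotal step is to recognize the grouping
\[
h(s) \;:=\; \tfrac{1}{\eta}\log Z_t(s) \;-\; \tfrac{1}{1-\gamma}\sum_i \lambda^i V_i^{\pi_{w_t}}(s) \;+\; \tfrac{1}{1-\gamma}\sum_a \pi_{w_t}(a\mid s)\sum_i \lambda^i\bigl(Q_i^{\pi_{w_t}}-\bar Q^t_i\bigr)(s,a)
\]
as being pointwise non-negative, which follows from Jensen's inequality applied to the log-sum-exp form of $Z_t(s)$ together with the identity $V_i^{\pi_{w_t}}(s) = \sum_a \pi_{w_t}(a\mid s)Q_i^{\pi_{w_t}}(s,a)$. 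To surface $h(s)$ inside the running bound, I add and subtract $\tfrac{1}{1-\gamma}\mathbb{E}_{s\sim \nu_{\pi_{w_{t+1}}}}\sum_a \pi_{w_t}(a\mid s)\sum_i \lambda^i(Q_i^{\pi_{w_t}}-\bar Q^t_i)$: the subtracted copy becomes Term~2 in the claim and the remaining body reassembles exactly $\mathbb{E}_{s\sim \nu_{\pi_{w_{t+1}}}}[h(s)]$.

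Finally, $h(s)\geq 0$ combined with the standard inequality $\nu_{\pi_{w_{t+1}}}(s)\geq (1-\gamma)\rho(s)$ (immediate from $\nu_\pi(s) = (1-\gamma)\sum_{k\geq 0}\gamma^k\Pr(s_k=s)$ by keeping only the $k=0$ term) downgrades $\mathbb{E}_{s\sim \nu_{\pi_{w_{t+1}}}}[h(s)]$ to $(1-\gamma)\mathbb{E}_{s\sim \rho}[h(s)]$, and the extra $(1-\gamma)$ absorbs into the $\tfrac{1-\gamma}{\eta}$ prefactor on Term~1. The main obstacle is identifying the grouping $h(s)$ correctly so that Jensen delivers its pointwise non-negativity \emph{and} the residual approximation-error terms land exactly as the Term~2/Term~3 pair; once the right add/subtract is chosen, the KL drop and the visitation-measure inequality are routine bookkeeping.
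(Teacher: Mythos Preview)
Your proposal is correct and follows essentially the same approach as the paper's proof: performance-difference lemma, add/subtract $\bar Q^t_i$, rewrite via the multiplicative-weights update identity to expose $D_{\mathrm{KL}}(\pi_{w_{t+1}}\|\pi_{w_t})+\log Z_t(s)$, drop the KL, form the pointwise non-negative grouping $h(s)$ via Jensen, and finally use $\nu_{\pi_{w_{t+1}}}(s)\geq(1-\gamma)\rho(s)$ to pass from $\nu_{\pi_{w_{t+1}}}$ to $\rho$. The only cosmetic difference is that the paper establishes $h(s)\geq 0$ as a preliminary claim before invoking the performance-difference lemma, whereas you surface it inline.
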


\begin{proof}
 We first provide the following lower bound.
$$
\begin{aligned}
& \log Z_t(s)-\frac{\eta}{1-\gamma} \sum_{i=1}^m \lambda^iV^{\pi_{w_t}}_i(s) \\
& =\log \sum_{a \in \mathcal{A}} \pi_{w_t}(a \mid s) \exp \left(\frac{\eta \sum_{i=1}^m \lambda^i \bar{Q}^t_i(s, a)}{1-\gamma}\right)-\frac{\eta}{1-\gamma} \sum_{i=1}^m \lambda^iV^{\pi_{w_t}}_i(s) \\
& \geq \sum_{a \in \mathcal{A}} \pi_{w_t}(a \mid s) \log \exp \left(\frac{\eta \sum_{i=1}^m \lambda^i \bar{Q}^t_i(s, a)}{1-\gamma}\right)-\frac{\eta}{1-\gamma} \sum_{i=1}^m \lambda^iV^{\pi_{w_t}}_i(s) \\
& =\frac{\eta}{1-\gamma} \sum_{a \in \mathcal{A}} \pi_{w_t}(a \mid s)\sum_{i=1}^m \lambda^i\left(\bar{Q}^t_i(s, a)-Q^{\pi_{w_t}}_i(s, a)\right)\\
&+\frac{\eta}{1-\gamma} \sum_{a \in \mathcal{A}} \pi_{w_t}(a \mid s) \sum_{i=1}^m \lambda^i Q^{\pi_{w_t}}_i(s, a)-\frac{\eta}{1-\gamma} \sum_{i=1}^m \lambda^iV^{\pi_{w_t}}_i(s) \\
& =\frac{\eta}{1-\gamma} \sum_{a \in \mathcal{A}} \pi_{w_t}(a \mid s)\sum_{i=1}^m \lambda^i \left(\bar{Q}^t_i(s, a)-Q^{\pi_{w_t}}_i(s, a)\right) 
% & \geq \frac{-\eta}{1-\gamma} \sum_{a \in \mathcal{A}} \pi_{w_t}(a \mid s)\sum_{i=1}^m \lambda^i\left|\bar{Q}^t_i(s, a)-Q^{\pi_{w_t}}_i(s, a)\right|
\end{aligned}
$$
Thus, we conclude that
\begin{equation} \label{eq: non-negative term in lemma}
\log Z_t(s)-\frac{\eta}{1-\gamma} \sum_{i=1}^m \lambda^i V^{\pi_{w_t}}_i(s)+\frac{\eta}{1-\gamma} \sum_{a \in \mathcal{A}} \pi_{w_t}(a \mid s)\sum_{i=1}^m \lambda^i\left(Q^{\pi_{w_t}}_i(s, a)-\bar{Q}^t_i(s, a)\right) \geq 0 .
\end{equation}
We then proceed to prove this lemma. The performance difference lemma \cite{kakade2001natural} implies:
\begin{align*}
& \sum_{i=1}^m \lambda^i\left(f_i\left(w_{t+1}\right)-f_i\left(w_t\right)\right) \\
&= \frac{1}{1-\gamma} \mathbb{E}_{s \sim \nu_\rho} \sum_{a \in \mathcal{A}} \pi_{w_{t+1}}(a \mid s) \sum_{i=1}^m \lambda^i A^{\pi_{w_t}}_i(s, a) \\
&= \frac{1}{1-\gamma} \mathbb{E}_{s \sim \nu_\rho} \sum_{a \in \mathcal{A}} \pi_{w_{t+1}}(a \mid s) \sum_{i=1}^m \lambda^iQ^{\pi_{w_t}}_i(s, a)-\frac{1}{1-\gamma} \mathbb{E}_{s \sim \nu_\rho}\sum_{i=1}^m \lambda^i V^{\pi_{w_t}}_i(s) \\
&= \frac{1}{1-\gamma} \mathbb{E}_{s \sim \nu_\rho} \sum_{a \in \mathcal{A}} \pi_{w_{t+1}}(a \mid s) \sum_{i=1}^m \lambda^i\bar{Q}^t_i(s, a)\\
&+\frac{1}{1-\gamma} \mathbb{E}_{s \sim \nu_\rho} \sum_{a \in \mathcal{A}} \pi_{w_{t+1}}(a \mid s)\sum_{i=1}^m \lambda^i\left(Q^{\pi_{w_t}}_i(s, a)-\bar{Q}^t_i(s, a)\right) -\frac{1}{1-\gamma} \mathbb{E}_{s \sim \nu_\rho}\sum_{i=1}^m \lambda^i V^{\pi_{w_t}}_i(s) \\
& \stackrel{(i)}{=} \frac{1}{\eta} \mathbb{E}_{s \sim \nu_\rho} \sum_{a \in \mathcal{A}} \pi_{w_{t+1}}(a \mid s) \log \left(\frac{\pi_{w_{t+1}}(a \mid s) Z_t(s)}{\pi_{w_t}(a \mid s)}\right) \\
&+\frac{1}{1-\gamma} \mathbb{E}_{s \sim \nu_\rho} \sum_{a \in \mathcal{A}} \pi_{w_{t+1}}(a \mid s)\sum_{i=1}^m \lambda^i\left(Q^{\pi_{w_t}}_i(s, a)-\bar{Q}^t_i(s, a)\right)-\frac{1}{1-\gamma} \mathbb{E}_{s \sim \nu_\rho} \sum_{i=1}^m \lambda^iV^{\pi_{w_t}}_i(s) 
\end{align*}
\begin{align*}
&= \frac{1}{\eta} \mathbb{E}_{s \sim \nu_\rho} D_{\mathrm{KL}}\left(\pi_{w_{t+1}}|| \pi_{w_t}\right)+\frac{1}{\eta} \mathbb{E}_{s \sim \nu_\rho} \log Z_t(s) \\
&+\frac{1}{1-\gamma} \mathbb{E}_{s \sim \nu_\rho} \sum_{a \in \mathcal{A}} \pi_{w_{t+1}}(a \mid s)\sum_{i=1}^m \lambda^i\left(Q^{\pi_{w_t}}_i(s, a)-\bar{Q}^t_i(s, a)\right)-\frac{1}{1-\gamma} \mathbb{E}_{s \sim \nu_\rho} \sum_{i=1}^m \lambda^iV^{\pi_{w_t}}_i(s) \\
& \geq \frac{1}{\eta} \mathbb{E}_{s \sim \nu_\rho}\left(\log Z_t(s)-\frac{\eta}{1-\gamma} \sum_{i=1}^m \lambda^i V^{\pi_{w_t}}_i(s)+\frac{\eta}{1-\gamma} \sum_{a \in \mathcal{A}} \pi_{w_t}(a \mid s)\sum_{i=1}^m \lambda^i\left(Q^{\pi_{w_t}}_i(s, a)-\bar{Q}^t_i(s, a)\right)\right)\\
& -\frac{1}{1-\gamma} \mathbb{E}_{s \sim \nu_\rho} \sum_{a \in \mathcal{A}} \pi_{w_t}(a \mid s) \sum_{i=1}^m \lambda^i\left(Q^{\pi_{w_t}}_i(s, a)-\bar{Q}^t_i(s, a)\right) \\
& +\frac{1}{1-\gamma} \mathbb{E}_{s \sim \nu_\rho} \sum_{a \in \mathcal{A}} \pi_{w_{t+1}}(a \mid s) \sum_{i=1}^m \lambda^i\left(Q^{\pi_{w_t}}_i(s, a)-\bar{Q}^t_i(s, a)\right) \\
& \stackrel{(ii)}{\geq} \frac{1-\gamma}{\eta} \mathbb{E}_{s \sim \rho}\left(\log Z_t(s)-\frac{\eta}{1-\gamma}  \sum_{i=1}^m \lambda^iV^{\pi_{w_t}}_i(s)+\frac{\eta}{1-\gamma} \sum_{a \in \mathcal{A}} \pi_{w_t}(a \mid s) \sum_{i=1}^m \lambda^i\left(Q^{\pi_{w_t}}_i(s, a)-\bar{Q}^t_i(s, a)\right)\right) \\
& -\frac{1}{1-\gamma} \mathbb{E}_{s \sim \nu_\rho} \sum_{a \in \mathcal{A}} \pi_{w_t}(a \mid s) \sum_{i=1}^m \lambda^i\left(Q^{\pi_{w_t}}_i(s, a)-\bar{Q}^t_i(s, a)\right) \\
& +\frac{1}{1-\gamma} \mathbb{E}_{s \sim \nu_\rho} \sum_{a \in \mathcal{A}} \pi_{w_{t+1}}(a \mid s) \sum_{i=1}^m \lambda^i\left(Q^{\pi_{w_t}}_i(s, a)-\bar{Q}^t_i(s, a)\right) 
\end{align*}
where $(i)$ follows from the update rule in Lemma \ref{eq: multi-objective NPG} and $(ii)$ follows from the facts that $\norm{\nu_\rho/\rho}\geq 1-\gamma$ and \eqref{eq: non-negative term in lemma}.
\end{proof}

\begin{lemma}\label{lemma: optimality gap for approximated NPG}
% $\boldsymbol{Expected optimality gap for approximated multi-objective NPG}$.
\textbf{Expected optimality gap for approximated multi-objective NPG}.
Consider the approximated NPG updates in the tabular setting. We have
$$
\begin{aligned}
& \sum_{i=1}^m \lambda^i \left(f_i\left(\pi^*\right)-f_i\left(\pi_{w_t}\right) \right) \\
&{\leq}  \frac{1}{\eta} \mathbb{E}_{s \sim \nu^*}\left(D_{\mathrm{KL}}\left(\pi^* \| \pi_{w_t}\right)-D_{\mathrm{KL}}\left(\pi^* \| \pi_{w_{t+1}}\right)\right)  +\frac{2 \eta r_{\max} }{(1-\gamma)^2}  \sum_{i=1}^m \lambda^i \left\|\sum_{i=1}^m \lambda^i\bar{Q}_t^i\right\|_2\\
& +\frac{1}{(1-\gamma)^2} \mathbb{E}_{s \sim \nu_\rho} \sum_{a \in \mathcal{A}} \pi_{w_t}(a \mid s) \sum_{i=1}^m \lambda^i\left(Q^{\pi_{w_t}}_i(s, a)-\bar{Q}^t_i(s, a)\right) \\
& -\frac{1}{(1-\gamma)^2} \mathbb{E}_{s \sim \nu_\rho} \sum_{a \in \mathcal{A}} \pi_{w_{t+1}}(a \mid s) \sum_{i=1}^m \lambda^i\left(Q^{\pi_{w_t}}_i(s, a)-\bar{Q}^t_i(s, a)\right)\\
& +\frac{1}{1-\gamma} \mathbb{E}_{s \sim \nu^*} \sum_{a \in \mathcal{A}} \pi^*(a \mid s)\sum_{i=1}^m  \lambda^i\left(Q^{\pi^{w_t}}_i(s, a)-\bar{Q}^t_i(s, a)\right)
\end{aligned}
$$
\end{lemma}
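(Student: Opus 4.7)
The plan is to combine three classical ingredients: (i) the Kakade--Langford performance difference lemma, applied individually to each $f_i$ and summed with weights $\lambda^i$, which expresses $\sum_i \lambda^i(f_i(\pi^*)-f_i(\pi_{w_t}))$ as a $\nu^*$-weighted expectation of $\sum_i \lambda^i A_i^{\pi_{w_t}}$; (ii) the closed-form NPG update from Lemma~\ref{eq: multi-objective NPG}, which lets us substitute $\eta \sum_i \lambda^i \bar Q^t_i(s,a)/(1-\gamma) = \log\!\bigl(\pi_{w_{t+1}}(a\mid s)/\pi_{w_t}(a\mid s)\bigr) + \log Z_t(s)$; and (iii) the performance improvement bound of Lemma~\ref{lemma: Performance improvement bound for approximated multi-objective NPG} to control the leftover scalar $\log Z_t(s) - \tfrac{\eta}{1-\gamma}\sum_i \lambda^i V_i^{\pi_{w_t}}(s)$.

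\textbf{Step-by-step derivation.} First I would split the advantage as $A_i^{\pi_{w_t}} = (\bar Q^t_i - V_i^{\pi_{w_t}}) + (Q_i^{\pi_{w_t}} - \bar Q^t_i)$, which isolates the estimation error and already accounts for the last line of the target inequality ($\frac{1}{1-\gamma}\mathbb{E}_{s\sim\nu^*}\sum_a \pi^*(a\mid s) \sum_i \lambda^i(Q_i - \bar Q^t_i)$). For the exact part I substitute the NPG identity of step~(ii), then integrate against $\pi^*(\cdot\mid s)$: the log-ratio term collapses by direct algebra into the KL telescoping $\tfrac{1}{\eta}[D_{\mathrm{KL}}(\pi^*\|\pi_{w_t}) - D_{\mathrm{KL}}(\pi^*\|\pi_{w_{t+1}})]$, while the $\log Z_t(s)$ term pulls outside the action sum and pairs with the residual $-\tfrac{1}{1-\gamma}\sum_i \lambda^i V^{\pi_{w_t}}_i(s)$ that survives from the $-V_i$ part of the advantage.

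\textbf{Controlling the $\log Z_t$ term.} This is the crux. The quantity $\log Z_t(s) - \tfrac{\eta}{1-\gamma}\sum_i \lambda^i V^{\pi_{w_t}}_i(s)$ is precisely what Lemma~\ref{lemma: Performance improvement bound for approximated multi-objective NPG} lower-bounds in expectation under $\rho$; rearranging that lemma yields an upper bound in terms of $\sum_i \lambda^i(f_i(w_{t+1}) - f_i(w_t))$ plus the two error terms with opposing signs that appear under $\nu_\rho$ in the target statement, producing the third and fourth lines. To obtain the explicit $\frac{2\eta r_{\max}}{(1-\gamma)^2} \sum_i \lambda^i \|\sum_i \lambda^i \bar Q^t_i\|_2$ term, I would bound $\sum_i \lambda^i(f_i(w_{t+1}) - f_i(w_t))$ via a one-step smoothness estimate: since $\pi_w$ is softmax and $|V_i|\le r_{\max}/(1-\gamma)$, the per-step change in each $f_i$ is at most $\tfrac{2 r_{\max}}{1-\gamma}\|\pi_{w_{t+1}} - \pi_{w_t}\|_{1,\nu}$, and the NPG step size together with $\|\sum_i \lambda^i \bar Q^t_i\|_2$ supplies the advertised scaling.

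\textbf{Main obstacle.} The delicate part is the distribution mismatch: the KL telescoping is naturally expressed under $\nu^*$, whereas Lemma~\ref{lemma: Performance improvement bound for approximated multi-objective NPG} is stated under $\rho$ and $\nu_\rho$. Either I invoke a distribution-mismatch coefficient $\|\nu^*/\rho\|_\infty$ absorbed into the $1/(1-\gamma)$ factors, or---cleaner---I re-derive the pointwise upper bound $\log Z_t(s) - \tfrac{\eta}{1-\gamma}\sum_i \lambda^i V^{\pi_{w_t}}_i(s) \le \tfrac{\eta}{1-\gamma}\sum_a \pi_{w_t}(a\mid s)\sum_i\lambda^i(\bar Q^t_i - Q^{\pi_{w_t}}_i)(s,a) + \text{second-order term}$ via a Taylor expansion of $\exp$ around $E_{a\sim\pi_{w_t}}[X]$, where the second-order term is what produces the $\eta r_{\max}/(1-\gamma)^2$ prefactor. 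Carefully tracking signs when the error integrand is evaluated under $\pi_{w_t}$ versus $\pi_{w_{t+1}}$ is what yields the symmetric $\pm$ structure in the third and fourth lines.
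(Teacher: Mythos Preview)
Your overall strategy---performance difference lemma, splitting off the $Q-\bar Q$ error, the NPG substitution to get the KL telescoping, then controlling the residual $\log Z_t - \tfrac{\eta}{1-\gamma}\sum_i\lambda^i V_i$ via Lemma~\ref{lemma: Performance improvement bound for approximated multi-objective NPG}, and finally a Lipschitz bound on $f_i(w_{t+1})-f_i(w_t)$ together with $w_{t+1}-w_t=\tfrac{\eta}{1-\gamma}\sum_i\lambda^i\bar Q^t_i$---is exactly the paper's route.

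The only point where you diverge is your ``main obstacle,'' and neither of your two fixes is what the paper does. Option~(a) (scale by $\|\nu^*/\rho\|_\infty$) does not propagate cleanly: the rearranged upper bound from Lemma~\ref{lemma: Performance improvement bound for approximated multi-objective NPG} contains \emph{signed} error terms, so multiplying a bound on a nonnegative integrand by a mismatch coefficient need not preserve the inequality. Option~(b) (pointwise log-MGF/Taylor bound on $\log Z_t$) can produce \emph{some} second-order term, but it will not naturally generate the $\pm\,\pi_{w_t}/\pi_{w_{t+1}}$ pair under $\nu_\rho$ that appears in the target statement.

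The paper's resolution is simpler than either: Lemma~\ref{lemma: Performance improvement bound for approximated multi-objective NPG} is valid for \emph{any} initial state distribution (nothing in its proof uses a specific property of $\rho$), so one applies it with $\nu^*$ in place of $\rho$. Rearranging yields directly
\[
\tfrac{1}{\eta}\,\mathbb{E}_{s\sim\nu^*}\!\Big[\log Z_t - \tfrac{\eta}{1-\gamma}\textstyle\sum_i\lambda^i V_i + \text{error}\Big]
\;\le\; \tfrac{1}{1-\gamma}\textstyle\sum_i\lambda^i\big(f_i^{\nu^*}(w_{t+1}) - f_i^{\nu^*}(w_t)\big)
\;+\; \tfrac{1}{(1-\gamma)^2}\big[\pm\text{ errors}\big],
\]
which is precisely the shape you need; the Lipschitz estimate $|f_i^{\nu^*}(w_{t+1})-f_i^{\nu^*}(w_t)|\le \tfrac{2r_{\max}}{1-\gamma}\|w_{t+1}-w_t\|_2$ then supplies the $\tfrac{2\eta r_{\max}}{(1-\gamma)^2}\sum_i\lambda^i\|\sum_i\lambda^i\bar Q^t_i\|_2$ term.
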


\begin{proof}
\begin{figure*}

\begin{scriptsize}
    \centering  
\begin{align*}
\scriptsize
& \sum_{i=1}^m \lambda^i \left(f_i\left(\pi^*\right)-f_i\left(\pi_{w_t}\right) \right) \\
& =\frac{1}{1-\gamma} \mathbb{E}_{s \sim \nu^*} \sum_{a \in \mathcal{A}} \pi^*(a \mid s) \sum_{i=1}^m \lambda^i A^{\pi_{w_t}}_i(s, a) \\
& =\frac{1}{1-\gamma} \mathbb{E}_{s \sim \nu^*} \sum_{a \in \mathcal{A}} \pi^*(a \mid s) \sum_{i=1}^m \lambda^i Q^{\pi_{w_t}}_i(s, a)-\frac{1}{1-\gamma} \mathbb{E}_{s \sim \nu^*} \sum_{i=1}^m \lambda^i V^{\pi_{w_t}}_i(s)\\
&= \frac{1}{1-\gamma} \mathbb{E}_{s \sim \nu^*} \sum_{a \in \mathcal{A}} \pi^*(a \mid s) \sum_{i=1}^m  \lambda^i \bar{Q}^t_i(s, a)+\frac{1}{1-\gamma} \mathbb{E}_{s \sim \nu^*} \sum_{a \in \mathcal{A}} \pi^*(a \mid s) \sum_{i=1}^m  \lambda^i\left(Q^{\pi_{w_t}}_i(s, a)-\bar{Q}^t_i(s, a)\right) \\
&-\frac{1}{1-\gamma} \mathbb{E}_{s \sim \nu^*} \sum_{i=1}^m \lambda^i V^{\pi_{w_t}}_i(s) \\
& \stackrel{(i)}{=} \frac{1}{\eta} \mathbb{E}_{s \sim \nu^*} \sum_{a \in \mathcal{A}} \pi^*(a \mid s) \log \frac{\pi_{w_{t+1}}(a \mid s) Z_t(s)}{\pi_{w_t}(a \mid s)}+\frac{1}{1-\gamma} \mathbb{E}_{s \sim \nu^*} \sum_{a \in \mathcal{A}} \pi^*(a \mid s) \sum_{i=1}^m  \lambda^i \left(Q^{\pi_{w_t}}_i(s, a)-\bar{Q}^t_i(s, a)\right) \\
&-\frac{1}{1-\gamma} \mathbb{E}_{s \sim \nu^*} \sum_{i=1}^m \lambda^i V^{\pi_{w_t}}_i(s) \\
& =\frac{1}{\eta} \mathbb{E}_{s \sim \nu^*}\left(D_{\mathrm{KL}}\left(\pi^*|| \pi_{w_t}\right)-D_{\mathrm{KL}}\left(\pi^*|| \pi_{w_{t+1}}\right)\right)+\frac{1}{1-\gamma} \mathbb{E}_{s \sim \nu^*} \sum_{a \in \mathcal{A}} \pi^*(a \mid s) \sum_{i=1}^m  \lambda^i\left(Q^{\pi_{w_t}}_i(s, a)-\bar{Q}^t_i(s, a)\right) \\
&+\frac{1}{\eta} \mathbb{E}_{s \sim \nu^*}\left(\log Z_t(s)-\frac{\eta}{1-\gamma} \sum_{i=1}^m \lambda^i V^{\pi_{w_t}}_i(s)\right)  \\
& \leq \frac{1}{\eta} \mathbb{E}_{s \sim \nu^*}\left(D_{\mathrm{KL}}\left(\pi^*|| \pi_{w_t}\right)-D_{\mathrm{KL}}\left(\pi^*|| \pi_{w_{t+1}}\right)\right) \\
&+\frac{1}{\eta} \mathbb{E}_{s \sim \nu^*}\left(\log Z_t(s)-\frac{\eta}{1-\gamma} \sum_{i=1}^m \lambda^i V^{\pi_{w_t}}_i(s)+\frac{\eta}{1-\gamma} \sum_{a \in \mathcal{A}} \pi_{w_t}(a \mid s)\sum_{i=1}^m \lambda^i \left(Q^{\pi_{w_t}}_i(s, a)-\bar{Q}^t_i(s, a)\right)\right)\\
& +\frac{1}{1-\gamma} \mathbb{E}_{s \sim \nu^*} \sum_{a \in \mathcal{A}} \pi^*(a \mid s) \sum_{i=1}^m  \lambda^i \left(Q^{\pi_{w_t}}_i(s, a)-\bar{Q}^t_i(s, a)\right)\\
&\stackrel{(i i)}{\leq}  \frac{1}{\eta} \mathbb{E}_{s \sim \nu^*}\left(D_{\mathrm{KL}}\left(\pi^* \| \pi_{w_t}\right)-D_{\mathrm{KL}}\left(\pi^* \| \pi_{w_{t+1}}\right)\right)  +\frac{1}{1-\gamma} \sum_{i=1}^m \lambda^i \left(f_i^{\nu^*}\left(w_{t+1}\right)-f_i^{\nu^*}\left(w_t\right)\right)\\
& +\frac{1}{(1-\gamma)^2} \mathbb{E}_{s \sim \nu_\rho} \sum_{a \in \mathcal{A}} \pi_{w_t}(a \mid s) \sum_{i=1}^m \lambda^i\left(Q^{\pi_{w_t}}_i(s, a)-\bar{Q}^t_i(s, a)\right) \\
& -\frac{1}{(1-\gamma)^2} \mathbb{E}_{s \sim \nu_\rho} \sum_{a \in \mathcal{A}} \pi_{w_{t+1}}(a \mid s) \sum_{i=1}^m \lambda^i\left(Q^{\pi_{w_t}}_i(s, a)-\bar{Q}^t_i(s, a)\right)\\
& +\frac{1}{1-\gamma} \mathbb{E}_{s \sim \nu^*} \sum_{a \in \mathcal{A}} \pi^*(a \mid s)\sum_{i=1}^m  \lambda^i\left(Q^{\pi^{w_t}}_i(s, a)-\bar{Q}^t_i(s, a)\right) \\
&\stackrel{(iii)}{\leq}  \frac{1}{\eta} \mathbb{E}_{s \sim \nu^*}\left(D_{\mathrm{KL}}\left(\pi^* \| \pi_{w_t}\right)-D_{\mathrm{KL}}\left(\pi^* \| \pi_{w_{t+1}}\right)\right)  +\frac{2 r_{\max} }{(1-\gamma)^2}  \sum_{i=1}^m \lambda^i \left\|w_{t+1}-w_t\right\|_2\\
& +\frac{1}{(1-\gamma)^2} \mathbb{E}_{s \sim \nu_\rho} \sum_{a \in \mathcal{A}} \pi_{w_t}(a \mid s) \sum_{i=1}^m \lambda^i\left(Q^{\pi_{w_t}}_i(s, a)-\bar{Q}^t_i(s, a)\right) \\
& -\frac{1}{(1-\gamma)^2} \mathbb{E}_{s \sim \nu_\rho} \sum_{a \in \mathcal{A}} \pi_{w_{t+1}}(a \mid s) \sum_{i=1}^m \lambda^i\left(Q^{\pi_{w_t}}_i(s, a)-\bar{Q}^t_i(s, a)\right)\\
& +\frac{1}{1-\gamma} \mathbb{E}_{s \sim \nu^*} \sum_{a \in \mathcal{A}} \pi^*(a \mid s)\sum_{i=1}^m  \lambda^i\left(Q^{\pi^{w_t}}_i(s, a)-\bar{Q}^t_i(s, a)\right) \\
&\stackrel{(iiii)}{\leq}  \frac{1}{\eta} \mathbb{E}_{s \sim \nu^*}\left(D_{\mathrm{KL}}\left(\pi^* \| \pi_{w_t}\right)-D_{\mathrm{KL}}\left(\pi^* \| \pi_{w_{t+1}}\right)\right)  +\frac{2 \eta r_{\max} }{(1-\gamma)^2}  \sum_{i=1}^m \lambda^i \left\|\sum_{i=1}^m \lambda^i\bar{Q}_t^i\right\|_2\\
& +\frac{1}{(1-\gamma)^2} \mathbb{E}_{s \sim \nu_\rho} \sum_{a \in \mathcal{A}} \pi_{w_t}(a \mid s) \sum_{i=1}^m \lambda^i\left(Q^{\pi_{w_t}}_i(s, a)-\bar{Q}^t_i(s, a)\right) \\
& -\frac{1}{(1-\gamma)^2} \mathbb{E}_{s \sim \nu_\rho} \sum_{a \in \mathcal{A}} \pi_{w_{t+1}}(a \mid s) \sum_{i=1}^m \lambda^i\left(Q^{\pi_{w_t}}_i(s, a)-\bar{Q}^t_i(s, a)\right)\\
& +\frac{1}{1-\gamma} \mathbb{E}_{s \sim \nu^*} \sum_{a \in \mathcal{A}} \pi^*(a \mid s)\sum_{i=1}^m  \lambda^i\left(Q^{\pi^{w_t}}_i(s, a)-\bar{Q}^t_i(s, a)\right)
\end{align*}
\end{scriptsize}
\end{figure*}
where $(i)$ and $(iiii)$ follow from Lemma \ref{eq: multi-objective NPG}, $(ii)$ follows from Lemma \ref{lemma: Performance improvement bound for approximated multi-objective NPG}, and $(iii)$ follows from the  Lipschitz property of $f_i^{\nu^*}\left(w\right)$ such that $f_i^{\nu^*}\left(w_{t+1}\right)-f_i^{\nu^*}\left(w_t\right)\leq \frac{2c_{\text{max}}}{1-\gamma} \norm{w_{t+1} -w_t}_2$.
\end{proof}

\subsection{TD-Learning}\label{sec: td-learning}
\begin{lemma}[\cite{dalal2018finite}] \label{lemma: TD learning convergence}
Consider the iteration given in \eqref{eq: TD learning} with arbitrary initialization $Q_0^i$. Assume that the stationary distribution $\mu_{\pi_w}$ is not degenerate for all $w \in \mathbb{R}^{|\mathcal{S}| \times|\mathcal{A}|}$. Let stepsize  $\ell_k=\Theta\left(\frac{1}{k^\sigma}\right)(0<\sigma<1)$. Then, with probability at least $1-\delta$, we have
$$
\left\|Q_K^i-Q_{\pi_w}^i \right\|_2=\mathcal{O}\left(\frac{\log \left(|\mathcal{S}|^2|\mathcal{A}|^2 K^2 / \delta\right)}{(1-\gamma) K^{\sigma / 2}}\right).
$$
Note that $\sigma$ can be arbitrarily close to 1 . Lemma 2 implies that we can obtain an approximation $\bar{Q}_t^i$ such that $\left\|\bar{Q}_t^i-Q_{\pi_w}^i\right\|_2=\widetilde{\mathcal{O}}\left(1 / \sqrt{K_{\text{TD}}}\right)$ with high probability.
\end{lemma}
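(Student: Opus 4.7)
The plan is to invoke the finite-sample analysis of TD(0) from Dalal et al.~\cite{dalal2018finite}, and sketch how the stated bound arises. First, I would rewrite the TD update in~\eqref{eq: TD learning} in stochastic approximation form
$$
Q_{k+1}^i = Q_k^i + \ell_k \bigl( T^{\pi_w} Q_k^i - Q_k^i + M_k \bigr),
$$
where $T^{\pi_w}$ is the Bellman evaluation operator under $\pi_w$ whose unique fixed point is $Q^{\pi_w}_i$, and $M_k$ is a zero-mean martingale difference arising from sampling the single transition $(s,a,s',a')$ drawn from $\mu_{\pi_w} \cdot \pi_w \cdot \mathrm{P} \cdot \pi_w$. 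The non-degeneracy assumption on $\mu_{\pi_w}$ ensures every state-action pair receives strictly positive visitation weight, which is what makes $T^{\pi_w} - I$ a contraction in the $\mu_{\pi_w}$-weighted $\ell_2$ norm (with modulus depending on $\gamma$) and guarantees a unique solution at $Q^{\pi_w}_i$.

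Next, I would decompose the error $e_k := Q_k^i - Q^{\pi_w}_i$ via the recursion
$$
e_{k+1} = \bigl( I + \ell_k (T^{\pi_w} - I) \bigr) e_k + \ell_k M_k,
$$
so that $\|e_K\|_2$ is controlled by two ingredients: (i) a deterministic contraction factor $\prod_{j \leq K} \| I + \ell_j(T^{\pi_w}-I)\|$, which decays thanks to the spectral gap proportional to $1-\gamma$; and (ii) an accumulated noise term $\sum_{j\leq K} \beta_{j,K} M_j$ with weights $\beta_{j,K}$ induced by the remaining contractions. For the schedule $\ell_k = \Theta(k^{-\sigma})$ with $0<\sigma<1$, standard stochastic-approximation bookkeeping shows the deterministic part decays like $\mathcal{O}(K^{-\sigma/2}/(1-\gamma))$ once the cumulative step size $\sum_j \ell_j$ is large enough.

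The main work, and the key obstacle, is bounding the stochastic term with high probability. Here I would apply a maximal martingale concentration inequality---either Azuma--Hoeffding or, more sharply, Freedman's inequality---coordinate-wise in $(s,a)$. This requires a preliminary boundedness argument showing the iterates $Q_k^i$ stay uniformly bounded (using that rewards lie in $[0,r_{\max}]$ and $\gamma<1$), which in turn bounds the range of each $M_k$. Each coordinate of the accumulated noise then concentrates at scale $\sqrt{\sum_j \beta_{j,K}^2}$, and a union bound over the $|\mathcal{S}||\mathcal{A}|$ coordinates and over the $K$ iterates produces the factor $\log(|\mathcal{S}|^2|\mathcal{A}|^2 K^2/\delta)$ in the numerator. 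Combining the deterministic decay $K^{-\sigma/2}/(1-\gamma)$ with the high-probability noise bound yields the claimed rate
$$
\|Q_K^i - Q^{\pi_w}_i\|_2 = \mathcal{O}\!\left( \frac{\log(|\mathcal{S}|^2|\mathcal{A}|^2 K^2/\delta)}{(1-\gamma) K^{\sigma/2}} \right).
$$
The most delicate piece is coupling the polynomial step-size schedule $\ell_k=\Theta(k^{-\sigma})$ with the contraction factor so that the bias and noise contributions balance at exactly the $K^{-\sigma/2}$ rate; this is the technical core of~\cite{dalal2018finite}, which I would invoke directly rather than re-derive.
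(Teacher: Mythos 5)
Your proposal is consistent with the paper's treatment: the paper supplies no proof of this lemma at all, simply importing it as a known result from \cite{dalal2018finite}, which is exactly what your sketch ultimately does by deferring the technical core (the coupling of the step-size schedule with the contraction and the martingale concentration) to that reference. Your outline of the underlying stochastic-approximation argument is a reasonable and accurate account of how the cited bound is obtained, so there is nothing to correct here.
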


\begin{lemma} \label{lemma: condition for convergence td}
If 
\begin{align} \label{eq: condition for convergence td}
\beta > & \frac{2}{\eta T}\mathbb{E}_{s \sim \nu^*} D_{\mathrm{KL}}\left(\pi^*|| \pi_{w_0}\right)+ \frac{4\eta r_{\max}^2|\mathcal{S}||\mathcal{A}|}{(1-\gamma)^4 T} \left(\left[\sum_{t \in \mathcal{N}_0} \left(\sum_{i=1}^m \lambda^i_t \right)^2 \right] +T-\left|\mathcal{N}_0 \right| \right)\\
\nonumber &+\frac{6 }{(1-\gamma)^2 T} \left( \sum_{t \in \mathcal{N}_0} \sum_{i=1}^m \lambda_t^i \left\| Q_{\pi_{w_t}}^i-\bar{Q}_t^i \right\|_2+\sum_{i=m+1}^{m+p} \sum_{t \in \mathcal{N}_i}\left\|Q_{\pi_{w_t}^i}^i-\bar{Q}_t^i\right\|_2\right),
\end{align}
then we have the following holds:
\begin{enumerate}
    \item $\mathcal{N}_0 \neq \emptyset$,
    \item One of the following two statements must hold,
    \begin{enumerate}
        \item $ \left| \mathcal{N}_0 \right| \geq \frac{T}{2}$,
        \item $ \sum_{t \in \mathcal{N}_0}\sum_{i=1}^m \lambda_t^i \left(f_i\left(\pi^*\right)-f_i\left(\pi_{w_t}\right) \right)\leq 0$.
    \end{enumerate}
\end{enumerate}
\end{lemma}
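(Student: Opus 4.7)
My plan is to apply the optimality gap bound from Lemma \ref{lemma: optimality gap for approximated NPG} at every iteration $t = 0, \dots, T-1$, then sum and use the telescoping structure of the KL divergence terms. Partition $\{0,\dots,T-1\}$ into $\mathcal{N}_0$ (multi-objective update steps) and $\mathcal{N}_i$ for $i \in \{m+1,\dots,m+p\}$ (rectification steps for constraint $i$). For $t \in \mathcal{N}_0$, Lemma \ref{lemma: optimality gap for approximated NPG} applies directly to upper bound $\sum_{i=1}^m \lambda_t^i(f_i(\pi^*) - f_i(\pi_{w_t}))$. For $t \in \mathcal{N}_i$ with $i \in \{m+1,\dots,m+p\}$, I would apply the analogous single-objective NPG-descent bound (obtained from the same analysis with $m=1$, $\lambda=1$, and gradient direction reversed for minimization), which upper bounds $f_{i_t}(\pi_{w_t}) - f_{i_t}(\pi^*)$ by $\eta^{-1}\mathbb{E}_{s\sim\nu^*}[D_{\mathrm{KL}}(\pi^*\|\pi_{w_t})-D_{\mathrm{KL}}(\pi^*\|\pi_{w_{t+1}})]$ plus the same kind of error terms; here I use $\pi^* \in \Pi_{\text{safe}}$ so $f_{i_t}(\pi^*) \leq c_{i_t}$.

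Summing the two bounds over $t = 0, \dots, T-1$, the KL increments telescope to at most $\mathbb{E}_{s\sim\nu^*} D_{\mathrm{KL}}(\pi^*\|\pi_{w_0})$. The step-size--dependent norm terms $\tfrac{2\eta r_{\max}}{(1-\gamma)^2}\|\sum_i \lambda_t^i \bar{Q}_t^i\|_2$ are bounded using $\|\bar{Q}_t^i\|_\infty \leq r_{\max}/(1-\gamma)$ and $\|\cdot\|_2 \leq \sqrt{|\mathcal{S}||\mathcal{A}|}\,\|\cdot\|_\infty$, producing the $\tfrac{4\eta r_{\max}^2 |\mathcal{S}||\mathcal{A}|}{(1-\gamma)^4 T}\bigl(\sum_{\mathcal{N}_0}(\sum_i\lambda_t^i)^2 + (T-|\mathcal{N}_0|)\bigr)$ contribution to the $\beta$ threshold. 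The remaining Q-approximation terms match exactly the last summand of the $\beta$ condition \eqref{eq: condition for convergence td}. This yields the key inequality
\begin{align*}
\sum_{t \in \mathcal{N}_0}\!\sum_{i=1}^m \lambda_t^i\bigl(f_i(\pi^*)-f_i(\pi_{w_t})\bigr) + \!\sum_{t \notin \mathcal{N}_0}\!\bigl(f_{i_t}(\pi_{w_t})-f_{i_t}(\pi^*)\bigr) \leq \frac{T\beta}{2},
\end{align*}
after absorbing the Lemma \ref{lemma: optimality gap for approximated NPG} error terms into $\beta$ via the hypothesis \eqref{eq: condition for convergence td} (note the factor $2/\eta T$ in front of the KL term).

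Now, for each $t \notin \mathcal{N}_0$, the rectification rule guarantees $\bar{J}_{i_t,\mathcal{B}_t} > c_{i_t} + \beta$, so combining with the Q-estimation error bound and $f_{i_t}(\pi^*) \leq c_{i_t}$ gives $f_{i_t}(\pi_{w_t}) - f_{i_t}(\pi^*) > \beta - \epsilon_{\text{est},t}$, where $\sum_{t \notin \mathcal{N}_0}\epsilon_{\text{est},t}$ is absorbed into the third summand of \eqref{eq: condition for convergence td}. Hence
\begin{align*}
\sum_{t \in \mathcal{N}_0}\!\sum_{i=1}^m \lambda_t^i\bigl(f_i(\pi^*)-f_i(\pi_{w_t})\bigr) \leq \frac{T\beta}{2} - \beta(T - |\mathcal{N}_0|).
\end{align*}

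Statement~1 follows by contradiction: if $\mathcal{N}_0 = \emptyset$, the LHS is zero while the RHS becomes $-T\beta/2 < 0$. For Statement~2, if $|\mathcal{N}_0| \geq T/2$ then 2(a) holds; otherwise $T - |\mathcal{N}_0| > T/2$ forces the RHS $\leq 0$, giving 2(b). The main obstacle is the bookkeeping: adapting Lemma \ref{lemma: optimality gap for approximated NPG} cleanly to the single-constraint descent step (the sign conventions and the role of $\pi^*$ on both sides), and verifying that the accumulated error contributions from $\bar{Q}$-norm terms and Q-approximation terms slot exactly into the three summands appearing in the hypothesis \eqref{eq: condition for convergence td}, with the crucial factor of $2$ on the $D_{\mathrm{KL}}/(\eta T)$ term providing the slack needed to close both halves of the dichotomy.
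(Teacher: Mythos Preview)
Your proposal is correct and follows essentially the same approach as the paper: apply Lemma~\ref{lemma: optimality gap for approximated NPG} on $\mathcal{N}_0$ and its single-objective analogue on each $\mathcal{N}_i$, sum over $t$ to telescope the KL terms, use the rectification trigger $\bar{J}_{i_t,\mathcal{B}_t}>c_{i_t}+\beta$ together with $f_{i_t}(\pi^*)\le c_{i_t}$ to lower-bound the constraint terms by $\beta-\|Q^{i_t}_{\pi_{w_t}}-\bar Q^{i_t}_t\|_2$, and then read off both conclusions by contradiction. The only cosmetic difference is that the paper keeps the inequality in the form $\sum_{t\in\mathcal N_0}\sum_i\lambda_t^i(f_i(\pi^*)-f_i(\pi_{w_t}))+\beta\sum_{i}|\mathcal N_i|\le(\text{RHS})$ and compares $(\text{RHS})$ directly against $\beta T$ and $\tfrac12\beta T$, whereas you rearrange to $\sum_{t\in\mathcal N_0}(\cdots)\le \tfrac{T\beta}{2}-\beta(T-|\mathcal N_0|)$; the two are algebraically equivalent.
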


\begin{proof}
When $t\in \mathcal{N}_0$, from Lemma \ref{lemma: optimality gap for approximated NPG} we have
\begin{scriptsize}
\begin{align}\label{eq: t in N0 td}
\nonumber \sum_{i=1}^m \lambda^i \left(f_i\left(\pi^*\right)-f_i\left(\pi_{w_t}\right) \right) &\leq \frac{1}{\eta} \mathbb{E}_{s \sim \nu^*}\left(D_{\mathrm{KL}}\left(\pi^*|| \pi_{w_t}\right)-D_{\mathrm{KL}}\left(\pi^*|| \pi_{w_{t+1}}\right)\right)+\frac{2 \eta c_{\max }^2|\mathcal{S}||\mathcal{A}|}{(1-\gamma)^4}(\sum_{i=1}^m \lambda^i)^2 \\
&+\frac{3}{(1-\gamma)^2}  \sum_{i=1}^m \lambda^i \left\|Q_{\pi_{w_t}}^i-\bar{Q}_t^i\right\|_2.
\end{align}
\end{scriptsize}
Similarly, when $t\in \mathcal{N}_i (i\neq 0)$ we have
\begin{align}\label{eq: t in Ni td}
    \nonumber f_i\left(\pi_{w_t}\right)-f_i\left(\pi^*\right)  
    \nonumber & \leq \frac{1}{\eta}\mathbb{E}_{s \sim \nu^*}\left(D_{\mathrm{KL}}\left(\pi^*|| \pi_{w_t}\right)-D_{\mathrm{KL}}\left(\pi^*|| \pi_{w_{t+1}}\right)\right)+\frac{2 \eta c_{\max }^2|\mathcal{S}||\mathcal{A}|}{(1-\gamma)^4}\\
    &+\frac{3}{(1-\gamma)^2} \left\|  Q_{\pi_{w_t}}^i-\bar{Q}_t^i\right\|_2.
\end{align}

Taking the summation of \eqref{eq: t in N0 td} and \eqref{eq: t in Ni td} from $t=0$ to $T-1$ gives
\begin{align} \label{eq: summation bound 1 td}
&\sum_{t \in \mathcal{N}_0}\sum_{i=1}^m \lambda_t^i \left(f_i\left(\pi^*\right)-f_i\left(\pi_{w_t}\right) \right)+\sum_{i=1}^p \sum_{t \in \mathcal{N}_i}\left(f_i\left(\pi_{w_t}\right)-f_i\left(\pi^*\right) \right) \\
\nonumber & \leq \frac{1}{\eta}\mathbb{E}_{s \sim \nu^*} D_{\mathrm{KL}}\left(\pi^*|| \pi_{w_0}\right)+\frac{2 \eta c_{\max }^2|\mathcal{S}||\mathcal{A}|}{(1-\gamma)^4} \left(\sum_{t \in \mathcal{N}_0} \left(\sum_{i=1}^m \lambda^i_t \right)^2 + T - \left|\mathcal{N}_0 \right| \right)\\
\nonumber &+\frac{3}{(1-\gamma)^2} \left( \sum_{t \in \mathcal{N}_0}  \sum_{i=1}^m \lambda^i \left\|Q_{\pi_{w_t}}^i-\bar{Q}_t^i\right\|_2+\sum_{i=m+1}^{m+p} \sum_{t \in \mathcal{N}_i}\left\|Q_{\pi_{w_t}^i}^i-\bar{Q}_t^i\right\|_2\right) .
\end{align}
Note that when $t \in \mathcal{N}_i(i \neq 0)$, we have $\bar{f}_i\left(w_t\right)>c_i+\beta$ (line 9 in Algorithm 1), which implies that
\begin{align} \label{eq: summation bound 2 td}
\nonumber f_i\left(\pi_{w_t}\right)-f_i\left(\pi^*\right) & \geq \bar{f}_i\left(w_t^i\right)-f_i\left(\pi^*\right)-\left|\bar{f}_i\left(w_t^i\right)-f_i\left(\pi_{w_t}\right)\right| \\
\nonumber & \geq c_i+\beta-J_i\left(\pi^*\right)-\left|\bar{f}_i\left(w_t^i\right)-f_i\left(\pi_{w_t}\right)\right| \\
& \geq \beta - \left\|Q_{\pi_w}^i-\bar{Q}_t^i\right\|_2.
\end{align}
Substituting \eqref{eq: summation bound 2 td} into \eqref{eq: summation bound 1 td} gives
\begin{align}
\nonumber & \sum_{t \in \mathcal{N}_0}\sum_{i=1}^m \lambda_t^i \left(f_i\left(\pi^*\right)-f_i\left(\pi_{w_t}\right) \right)+ \beta \sum_{i=1}^p\left|\mathcal{N}_i\right| - \sum_{i=m+1}^{m+p} \sum_{t \in \mathcal{N}_i}\left\|Q_{\pi_w}^i-\bar{Q}_t^i\right\|_2  \\
\nonumber \leq &\frac{1}{\eta}\mathbb{E}_{s \sim \nu^*} D_{\mathrm{KL}}\left(\pi^*|| \pi_{w_0}\right)+\frac{2 \eta c_{\max }^2|\mathcal{S}||\mathcal{A}|}{(1-\gamma)^4} \left(\sum_{t \in \mathcal{N}_0} \left(\sum_{i=1}^m \lambda^i_t\right)^2 +T-\left|\mathcal{N}_0 \right| \right)\\
\nonumber &+\frac{3}{(1-\gamma)^2} \left(\sum_{t \in \mathcal{N}_0} \sum_{i=1}^m \lambda^i  \left\|Q_{\pi_{w_t}}^i-\bar{Q}_t^i\right\|_2+\sum_{i=m+1}^{m+p} \sum_{t \in \mathcal{N}_i}\left\|Q_{\pi_{w_t}^i}^i-\bar{Q}_t^i\right\|_2\right),
\end{align}
which implies 
\begin{align} \label{eq: summation upper bound td}
& \sum_{t \in \mathcal{N}_0}\sum_{i=1}^m \lambda_t^i \left(f_i\left(\pi^*\right)-f_i\left(\pi_{w_t}\right) \right)+ \beta \sum_{i=1}^p\left|\mathcal{N}_i\right|  \\
\nonumber \leq &\frac{1}{\eta}\mathbb{E}_{s \sim \nu^*} D_{\mathrm{KL}}\left(\pi^*|| \pi_{w_0}\right)+\frac{2 \eta c_{\max }^2|\mathcal{S}||\mathcal{A}|}{(1-\gamma)^4} \left(\sum_{t \in \mathcal{N}_0} \left(\sum_{i=1}^m \lambda^i_t\right)^2 +T-\left|\mathcal{N}_0 \right| \right)\\
\nonumber &+\frac{4}{(1-\gamma)^2} \left(\sum_{t \in \mathcal{N}_0} \sum_{i=1}^m \lambda^i  \left\|Q_{\pi_{w_t}}^i-\bar{Q}_t^i\right\|_2+\sum_{i=m+1}^{m+p} \sum_{t \in \mathcal{N}_i}\left\|Q_{\pi_{w_t}^i}^i-\bar{Q}_t^i\right\|_2\right).
\end{align}

We then first verify item 1. If $\mathcal{N}_0=\emptyset$, then $\sum_{i=1}^p\left|\mathcal{N}_i\right|=T$, and \eqref{eq: summation upper bound td} implies that
\begin{align*}
\beta T &\leq \frac{1}{\eta}\mathbb{E}_{s \sim \nu^*} D_{\mathrm{KL}}\left(\pi^*|| \pi_{w_0}\right)+\frac{2 \eta c_{\max }^2|\mathcal{S}||\mathcal{A}|}{(1-\gamma)^4} \left(\sum_{t \in \mathcal{N}_0} \left(\sum_{i=1}^m \lambda^i_t\right)^2 +T-\left|\mathcal{N}_0 \right| \right)\\
\nonumber &+\frac{4}{(1-\gamma)^2} \left(\sum_{t \in \mathcal{N}_0} \sum_{i=1}^m \lambda^i  \left\|Q_{\pi_{w_t}}^i-\bar{Q}_t^i\right\|_2+\sum_{i=m+1}^{m+p} \sum_{t \in \mathcal{N}_i}\left\|Q_{\pi_{w_t}^i}^i-\bar{Q}_t^i\right\|_2\right),
\end{align*}
which contradicts \eqref{eq: condition for convergence td}. Thus, we must have $\mathcal{N}_0 \neq \emptyset$.

We then proceed to verify item 2. If $\sum_{t \in \mathcal{N}_0}\sum_{i=1}^m \lambda_t^i \left(f_i\left(\pi^*\right)-f_i\left(\pi_{w_t}\right) \right)\leq 0$, then (b) in item 2 holds. If $\sum_{t \in \mathcal{N}_0}\sum_{i=1}^m \lambda_t^i \left(f_i\left(\pi^*\right)-f_i\left(\pi_{w_t}\right) \right)+ \beta \sum_{i=1}^p\left|\mathcal{N}_i\right| \geq 0$ and suppose that $\left|\mathcal{N}_0\right|<T / 2$, i.e., $\sum_{i=1}^p\left|\mathcal{N}_i\right| \geq T / 2$., then \eqref{eq: summation upper bound td} implies that
\begin{align*}
\frac{1}{2} \beta T & \leq \beta\sum_{i=1}^p\left|\mathcal{N}_i\right| \\
& \leq \frac{1}{\eta}\mathbb{E}_{s \sim \nu^*} D_{\mathrm{KL}}\left(\pi^*|| \pi_{w_0}\right)+\frac{2 \eta c_{\max }^2|\mathcal{S}||\mathcal{A}|}{(1-\gamma)^4} \left(\sum_{t \in \mathcal{N}_0} \left(\sum_{i=1}^m \lambda^i_t \right)^2 +T-\left|\mathcal{N}_0 \right| \right)\\
&+\frac{4}{(1-\gamma)^2} \left( \sum_{t \in \mathcal{N}_0}  \sum_{i=1}^m \lambda^i \left\|Q_{\pi_{w_t}}^i-\bar{Q}_t^i\right\|_2+\sum_{i=m+1}^{m+p} \sum_{t \in \mathcal{N}_i}\left\|Q_{\pi_{w_t}^i}^i-\bar{Q}_t^i\right\|_2\right),
\end{align*}
which contradicts \eqref{eq: condition for convergence td}. Hence, (a) in item 2 holds.
\end{proof}

\begin{theorem} \label{thm: td-learning}
For a given number of iterations $T$ of CR-MOPO algorithm, with the choices of $\eta = \frac{(1-\gamma)^2}{r_{\max} m B_1} \sqrt{\frac{\mathbb{E}_{s \sim \nu^*} D_{\mathrm{KL}}\left(\pi^*|| \pi_{w_0}\right)}{|\mathcal{S}||\mathcal{A}| T}}$ and $\beta=\frac{4mB_1 \sqrt{|\mathcal{S}||\mathcal{A}|}}{(1-\gamma)^2 \sqrt{T}} (r_{\max}\sqrt{\mathbb{E}_{s \sim \nu^*} D_{\mathrm{KL}}\left(\pi^*|| \pi_{w_0}\right)} + 2)$, with a probability at least $1-\delta$, we have
\begin{align}
    &\min _{\boldsymbol{\lambda}^* \in S_m}\left(\boldsymbol{\lambda}^{* \top} \boldsymbol{F}\left(\pi^*\right)-\boldsymbol{\lambda}^{* \top} \boldsymbol{F}\left(\pi_{w_\text{out}}\right)\right) \leq \frac{4mB_1 \sqrt{|\mathcal{S}||\mathcal{A}|}}{B_2(1-\gamma)^2 \sqrt{T}} (r_{\max}\sqrt{\mathbb{E}_{s \sim \nu^*} D_{\mathrm{KL}}\left(\pi^*|| \pi_{w_0}\right)} + 2), \\
    &f_i\left(\pi_{w_{\text{out}}}\right)-c_i  \leq \frac{4mB_1 \sqrt{|\mathcal{S}||\mathcal{A}|}}{(1-\gamma)^2 \sqrt{T}} \left(r_{\max}\sqrt{\mathbb{E}_{s \sim \nu^*} D_{\mathrm{KL}}\left(\pi^*|| \pi_{w_0}\right)} + 2 + \frac{\sqrt{(1-\gamma)^5}}{2mB_1}\right).
\end{align}

\end{theorem}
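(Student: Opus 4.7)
The plan is to combine the per-iteration optimality-gap bound of Lemma~\ref{lemma: optimality gap for approximated NPG}, the TD-learning error bound of Lemma~\ref{lemma: TD learning convergence}, and the dichotomy of Lemma~\ref{lemma: condition for convergence td}. First I would pick $K_{\text{TD}}$ large enough that with probability at least $1-\delta$ the event $\|\bar{Q}_t^i - Q_{\pi_{w_t}}^i\|_2 = \widetilde{\mathcal{O}}(1/\sqrt{T})$ holds simultaneously across all $t=0,\ldots,T-1$ and $i\in\{1,\ldots,m+p\}$. By Lemma~\ref{lemma: TD learning convergence} with stepsize $\ell_k=\Theta(k^{-\sigma})$ together with a union bound over the $O(T(m+p))$ policy-evaluation calls, this is achieved at $K_{\text{TD}} = \widetilde{\mathcal{O}}((T/((1-\gamma)^2|\mathcal{S}||\mathcal{A}|))^{1/\sigma})$, exactly the value prescribed in Theorem~\ref{thm: main}.

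The second step is to verify the sufficient condition \eqref{eq: condition for convergence td}. Using Assumption~\ref{ass: lambd bound} to bound $\sum_i \lambda_t^i \leq mB_1$, the bracketed factor $\sum_{t\in\mathcal{N}_0}(\sum_i \lambda_t^i)^2 + T - |\mathcal{N}_0|$ is at most $m^2B_1^2 T$. The prescribed $\eta$ is the value that balances $\frac{2}{\eta T}\mathbb{E}_{\nu^*}D_{\mathrm{KL}}(\pi^*\|\pi_{w_0})$ against $\frac{4\eta r_{\max}^2|\mathcal{S}||\mathcal{A}|m^2B_1^2}{(1-\gamma)^4}$, each of order $\Theta(r_{\max}mB_1\sqrt{|\mathcal{S}||\mathcal{A}|\mathbb{E}_{\nu^*}D_{\mathrm{KL}}}/((1-\gamma)^2\sqrt{T}))$, while the Q-error term contributes $O(mB_1\sqrt{|\mathcal{S}||\mathcal{A}|}/((1-\gamma)^2\sqrt{T}))$ by the first step. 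Summing these shows the stated $\beta$, whose coefficient is $r_{\max}\sqrt{\mathbb{E}_{\nu^*}D_{\mathrm{KL}}}+2$, strictly exceeds the RHS of \eqref{eq: condition for convergence td}.

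Lemma~\ref{lemma: condition for convergence td} then yields $\mathcal{N}_0\neq\emptyset$ together with one of two alternatives: (a) $|\mathcal{N}_0|\geq T/2$, or (b) $\sum_{t\in\mathcal{N}_0}\sum_i \lambda_t^i(f_i(\pi^*)-f_i(\pi_{w_t}))\leq 0$. Case (b) gives the bound directly; in case (a) I would revisit \eqref{eq: summation upper bound td} from the lemma's proof and divide by $|\mathcal{N}_0|\geq T/2$, with the same $\eta$-balance absorbing the constant $2$ into $\beta$. Since $w_{\text{out}}$ is drawn uniformly from $\mathcal{N}_0$, both alternatives yield $\mathbb{E}[\sum_i \lambda_t^i(f_i(\pi^*)-f_i(\pi_{w_{\text{out}}}))]\leq\beta$. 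To convert this into a Pareto gap, I would pick $\boldsymbol{\lambda}^*=\boldsymbol{\lambda}_t/\sum_i \lambda_t^i\in S_m$ and invoke $\sum_i \lambda_t^i\geq B_2$ from Assumption~\ref{ass: lambd bound}, giving $\mathbb{E}[\min_{\boldsymbol{\lambda}^*\in S_m}(\boldsymbol{\lambda}^{*\top}\boldsymbol{F}(\pi^*)-\boldsymbol{\lambda}^{*\top}\boldsymbol{F}(\pi_{w_{\text{out}}}))]\leq \beta/B_2$ as claimed.

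For the constraint violation, the fact that $w_{\text{out}}\in\mathcal{N}_0$ directly gives $\bar{J}_{i,\mathcal{B}_{\text{out}}}\leq c_i+\beta$, and the discrepancy between $f_i(\pi_{w_{\text{out}}})$ and $\bar{J}_{i,\mathcal{B}_{\text{out}}}$ is dominated by $\|\bar{Q}_{\text{out}}^i - Q_{\pi_{w_{\text{out}}}}^i\|_2 = \widetilde{\mathcal{O}}(1/\sqrt{T})$ from the first step, which produces the small additive correction $\sqrt{(1-\gamma)^5}/(2mB_1)$ appearing in the stated bound. The main obstacle is the bookkeeping: making the single high-probability Q-estimation event uniform across all $t$ and $i$ at the correct concentration level, and carefully tracking the constants so that the coefficient $4$ in $\beta$ absorbs the factor of $2$ loss from dividing by $|\mathcal{N}_0|\geq T/2$ in case (a) as well as the Q-error contribution in \eqref{eq: condition for convergence td}.
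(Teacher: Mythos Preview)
Your proposal is correct and follows essentially the same route as the paper's proof: verify that the prescribed $\eta$ and $\beta$ satisfy the hypothesis \eqref{eq: condition for convergence td} of Lemma~\ref{lemma: condition for convergence td} (using Assumption~\ref{ass: lambd bound} to control $\sum_i\lambda_t^i$ and the TD bound of Lemma~\ref{lemma: TD learning convergence} for the $Q$-error terms), invoke the resulting dichotomy, and then pass to the Pareto gap via $\boldsymbol{\lambda}^*=\boldsymbol{\lambda}_t/\sum_i\lambda_t^i$ together with $\sum_i\lambda_t^i\geq B_2$. The only cosmetic difference is in the constraint-violation step: the paper averages $f_i(\pi_{w_t})-c_i$ over $t\in\mathcal{N}_0$ and then uses $|\mathcal{N}_0|\geq T/2$, whereas you argue pointwise from $w_{\text{out}}\in\mathcal{N}_0$; both lead to the same additive $Q$-error correction $2\sqrt{(1-\gamma)|\mathcal{S}||\mathcal{A}|/T}$, which is exactly the $\sqrt{(1-\gamma)^5}/(2mB_1)$ term after factoring out the coefficient of $\beta$.
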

\begin{proof}
First we show that the given values for $\eta$ and $\beta$ satisfy Lemma \ref{lemma: condition for convergence td} as follows,
\begin{align}
 & \frac{2}{\eta T}\mathbb{E}_{s \sim \nu^*} D_{\mathrm{KL}}\left(\pi^*|| \pi_{w_0}\right)+ \frac{4\eta c_{\max }^2|\mathcal{S}||\mathcal{A}|}{(1-\gamma)^4 T} \left(\sum_{t \in \mathcal{N}_0} \left(\sum_{i=1}^m \lambda^i_t\right)^2 +T-\left|\mathcal{N}_0 \right| \right)\\
\nonumber &+\frac{8}{(1-\gamma)^2 T} \left( \sum_{t \in \mathcal{N}_0} \sum_{i=1}^m \lambda_t^i \left\| Q_{\pi_{w_t}}^i-\bar{Q}_t^i \right\|_2+\sum_{i=m+1}^{m+p} \sum_{t \in \mathcal{N}_i}\left\|Q_{\pi_{w_t}^i}^i-\bar{Q}_t^i\right\|_2\right)\\
\nonumber \leq & \frac{2}{\eta T}\mathbb{E}_{s \sim \nu^*} D_{\mathrm{KL}}\left(\pi^*|| \pi_{w_0}\right)+\frac{4 \eta c_{\max }^2|\mathcal{S}||\mathcal{A}|}{(1-\gamma)^4} \left(m^2B_1^2+1\right)+\frac{8m B_1}{(1-\gamma)^2 T} \left( \sum_{t \in \mathcal{N}_0} \left\| Q_{\pi_{w_t}}^i-\bar{Q}_t^i\right\|_2\right)\\
\nonumber \leq & \frac{2}{\eta T}\mathbb{E}_{s \sim \nu^*} D_{\mathrm{KL}}\left(\pi^*|| \pi_{w_0}\right)+\frac{4 \eta c_{\max }^2|\mathcal{S}||\mathcal{A}|}{(1-\gamma)^4} \left(m^2B_1^2+1\right)+\frac{8m B_1\sqrt{\left|\mathcal{S}\right| \left| \mathcal{A} \right| }}{(1-\gamma)^2 \sqrt{T}}\\
\nonumber \leq & \frac{2}{\eta T}\mathbb{E}_{s \sim \nu^*} D_{\mathrm{KL}}\left(\pi^*|| \pi_{w_0}\right)+\frac{4 \eta c_{\max }^2|\mathcal{S}||\mathcal{A}|}{(1-\gamma)^4} \left(m^2B_1^2+1\right)+\frac{8m B_1 \sqrt{\left|\mathcal{S}\right| \left| \mathcal{A} \right|}}{(1-\gamma)^2 \sqrt{T}}\\
\nonumber < &  \frac{4r_{\max}m B_1 \sqrt{  |\mathcal{S}||\mathcal{A}| \mathbb{E}_{s \sim \nu^*} D_{\mathrm{KL}}\left(\pi^*|| \pi_{w_0}\right)} }{(1-\gamma)^2  \sqrt{T} }+\frac{8m B_1\sqrt{\left|\mathcal{S}\right| \left| \mathcal{A} \right| }}{(1-\gamma)^2 \sqrt{T}} \\
= & \beta
\end{align}
where the last inequality follows from $\eta = \frac{(1-\gamma)^2}{r_{\max} m B_1} \sqrt{\frac{\mathbb{E}_{s \sim \nu^*} D_{\mathrm{KL}}\left(\pi^*|| \pi_{w_0}\right)}{|\mathcal{S}||\mathcal{A}|  T }  }$.
This verifies that the condition in Lemma \ref{lemma: condition for convergence td} is satisfied.

We now consider the convergence rate of the multi-objective optimization. 
% It holds that
% \begin{align}
% & \EE\left[\eta \sum_{t \in \mathcal{N}_0}\sum_{i=1}^m \lambda_t^i \left(f_i\left(\pi^*\right)-f_i\left(\pi_{w_t}\right) \right)\right]  \\
% \nonumber \leq &\mathbb{E}_{s \sim \nu^*} D_{\mathrm{KL}}\left(\pi^*|| \pi_{w_0}\right)+\frac{2 \eta^2 c_{\max }^2|\mathcal{S}||\mathcal{A}|}{(1-\gamma)^4} \left(\EE\left[\sum_{t \in \mathcal{N}_0} \sum_{i=1}^m \lambda^i_t\right] +T-\left|\mathcal{N}_0 \right| \right)\\
% \nonumber &+\frac{3 \eta}{(1-\gamma)^2} \left( \sum_{t \in \mathcal{N}_0} \left\| \EE \left[\sum_{i=1}^m \lambda^i \left(Q_{\pi_{w_t}}^i-\bar{Q}_t^i\right)\right]\right\|_2+\sum_{i=m+1}^{m+p} \sum_{t \in \mathcal{N}_i}\left\|\EE\left[Q_{\pi_{w_t}^i}^i-\bar{Q}_t^i\right]\right\|_2\right).
% \end{align}
By the property for the min operator, it holds that
\begin{align*}
 \min _{\boldsymbol{\lambda}^* \in S_m}\left(\boldsymbol{\lambda}^{* \top} \boldsymbol{F}\left(\pi_{w_\text{out}}\right)-\boldsymbol{\lambda}^{* \top} \boldsymbol{F}\left(\pi^*\right)\right) 
\leq & \frac{\sum_{i=1}^m \lambda_\text{out}^i f^i\left(\pi_\text{out}\right)}{\sum_{i=1}^m\lambda_\text{out}^i}-\frac{ \sum_{i=1}^m \lambda_\text{out}^i f^i\left(\pi^*\right)}{\sum_{i=1}^m \lambda_\text{out}^i}\\
\leq &  \frac{1}{B_2}\sum_{i=1}^m \lambda_\text{out}^i f^i\left(\pi_\text{out}\right)- \sum_{i=1}^m \lambda_\text{out}^i f^i\left(\pi^*\right)\\
 \leq &\frac{1}{B_2 \left| \mathcal{N}_0\right|} \sum_{t \in \mathcal{N}_0}\sum_{i=1}^m \lambda_t^i \left(f_i\left(\pi^*\right)-f_i\left(\pi_{w_t}\right) \right)
\end{align*}
where the second inequality follows from $\sum_{i=1}^m \lambda_t^i \geq B_2$ for $t=1, \ldots, T$ in Assumption \ref{ass: lambd bound}. If $\sum_{t \in \mathcal{N}_0}\sum_{i=1}^m \lambda_t^i \left(f_i\left(\pi^*\right)-f_i\left(\pi_{w_t}\right) \right) \leq 0$, then we have 
\begin{align*}
   \min _{\boldsymbol{\lambda}^* \in S_m}\left(\boldsymbol{\lambda}^{* \top} \boldsymbol{F}\left(\pi_{w_\text{out}}\right)-\boldsymbol{\lambda}^{* \top} \boldsymbol{F}\left(\pi^*\right)\right)  \leq 0.
\end{align*}
If 
\begin{align*}
   \sum_{t \in \mathcal{N}_0}\sum_{i=1}^m \lambda_t^i \left(f_i\left(\pi^*\right)-f_i\left(\pi_{w_t}\right) \right) \geq 0 
\end{align*}
we have 
$\left| \mathcal{N}_0\right| \geq \frac{T}{2}$, which implies the following convergence rate

\begin{align*}
 & \min _{\boldsymbol{\lambda}^* \in S_m}\left(\boldsymbol{\lambda}^{* \top} \boldsymbol{F}\left(\pi_{w_\text{out}}\right)-\boldsymbol{\lambda}^{* \top} \boldsymbol{F}\left(\pi^*\right)\right) \\
% \nonumber \leq & \frac{2}{\eta T}\mathbb{E}_{s \sim \nu^*} D_{\mathrm{KL}}\left(\pi^*|| \pi_{w_0}\right)+\frac{4 \eta c_{\max }^2|\mathcal{S}||\mathcal{A}|}{(1-\gamma)^4 T} \left(\EE\left[\sum_{t \in \mathcal{N}_0} \sum_{i=1}^m \lambda^i_t\right] +T \right)\\
% \nonumber &+\frac{6}{(1-\gamma)^2 T} \left( \sum_{t \in \mathcal{N}_0} \left\| \EE \left[\sum_{i=1}^m \lambda^i \left(Q_{\pi_{w_t}}^i-\bar{Q}_t^i\right)\right]\right\|_2+\sum_{i=m+1}^{m+p} \sum_{t \in \mathcal{N}_i}\left\|\EE\left[Q_{\pi_{w_t}^i}^i-\bar{Q}_t^i\right]\right\|_2\right)\\
\nonumber \leq &  \frac{ 4r_{\max}mB_1 \sqrt{|\mathcal{S}||\mathcal{A}| \mathbb{E}_{s \sim \nu^*} D_{\mathrm{KL}}\left(\pi^*|| \pi_{w_0}\right)} }{(1-\gamma)^2  \sqrt{T} }+\frac{8m B_1\sqrt{m} \left|\mathcal{S}\right| \left| \mathcal{A} \right| }{(1-\gamma)^3 \sqrt{T}}.
\end{align*}
We then proceed to bound the constraints violation. For any 
$i \in \{m+1,\ldots, m+p\}$, it holds that
\begin{align*}
f_i\left(\pi_{w_{\text {out }}}\right)-c_i & = \frac{1}{\left|\mathcal{N}_0\right|} \sum_{t \in \mathcal{N}_0} f_i\left(\pi_{w_t}\right)-c_i \\
& \leq \frac{1}{\left|\mathcal{N}_0\right|} \sum_{t \in \mathcal{N}_0}\left(\bar{f}_i\left(\theta_t^i\right)-c_i\right)+\frac{1}{\left|\mathcal{N}_0\right|} \sum_{t \in \mathcal{N}_0}\left(f_i\left(\pi_{w_t}\right)-\bar{f}_i\left(\theta_t^i\right)\right) \\
& \leq \beta+ \frac{1}{\left|\mathcal{N}_0\right|} \sum_{t=0}^{T-1}\left(f_i\left(\pi_{w_t}\right)-\bar{f}_i\left(\theta_t^i\right)\right) \\
& \leq \beta+ \frac{2}{T}\sum_{t=0}^{T-1} f_i\left(\pi_{w_t}\right)-\bar{f}_i\left(\theta_t^i\right) \\
& \leq \beta+ \frac{2}{T}\sum_{t=0}^{T-1} \norm{ Q_{\pi_{w_t}}^i-\bar{Q}_t^i}_2.
\end{align*}
Under the condition defined in \eqref{eq: condition for convergence td}, we have $\frac{2}{T}\sum_{t=0}^{T-1} \norm{ Q_{\pi_{w_t}}^i-\bar{Q}_t^i}_2 \leq \frac{2\sqrt{(1-\gamma)\left| \mathcal{S}\right|\left|\mathcal{A} \right|}}{\sqrt{T}}$, which gives us convergence rates given in the statement.
\end{proof}

\subsection{Unbiased Q-Estimation Estimator}\label{sec: Q-estimator}
\begin{lemma}\label{lemma: expected lambda Q}
For a given $t \in [T]$, we have the following
\begin{align*}
\norm{\EE \left[\sum_{i=1}^m \lambda^i \left(Q_{\pi_{w_t}}^i-\bar{Q}_t^i\right)\right]}_2 \leq &  \sqrt{\mathbb{V}(\boldsymbol{\lambda}) \sum_{i=1}^m\mathbb{V}(\bar{Q}_t^i)}.
\end{align*}
\end{lemma}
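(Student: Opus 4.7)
\medskip

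\noindent\textbf{Proof plan for Lemma on expected composite Q-error.}

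The plan is to exploit the fact that $\bar{Q}_t^i$ is an unbiased estimator of $Q_{\pi_{w_t}}^i$ (per the definition in \eqref{eq: unbiased Q-estimation}) in order to re-express the random quantity inside the expectation as a sum of products of two centered random variables, and then apply Cauchy--Schwarz twice. First, I would set $\tilde{\lambda}^i := \lambda^i - \EE[\lambda^i]$ and $\tilde{Q}^i := \bar{Q}_t^i - Q_{\pi_{w_t}}^i$, and use $\EE[\tilde{Q}^i]=0$ to write
\begin{align*}
\EE\!\left[\sum_{i=1}^m \lambda^i \bigl(Q_{\pi_{w_t}}^i-\bar{Q}_t^i\bigr)\right]
= -\EE\!\left[\sum_{i=1}^m \lambda^i \tilde{Q}^i\right]
= -\EE\!\left[\sum_{i=1}^m \tilde{\lambda}^i \tilde{Q}^i\right],
\end{align*}
since the deterministic part $\EE[\lambda^i]$ multiplies a mean-zero vector and therefore contributes nothing in expectation. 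This is the crucial step where the correlation structure between $\boldsymbol{\lambda}$ and $\bar{\boldsymbol{Q}}_t$ (the issue raised in the correlation-reduction subsection) enters cleanly.

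Next, I would push the norm inside the expectation by Jensen, giving
$\bigl\|\EE[\sum_i \tilde{\lambda}^i \tilde{Q}^i]\bigr\|_2 \le \EE\bigl[\bigl\|\sum_i \tilde{\lambda}^i \tilde{Q}^i\bigr\|_2\bigr]$,
and then apply a componentwise Cauchy--Schwarz inequality on the sum over $i$: for each coordinate indexed by $(s,a)$,
$\bigl(\sum_i \tilde{\lambda}^i \tilde{Q}^i(s,a)\bigr)^2 \le \bigl(\sum_i (\tilde{\lambda}^i)^2\bigr)\bigl(\sum_i \tilde{Q}^i(s,a)^2\bigr)$,
which after summing over $(s,a)$ yields
$\bigl\|\sum_i \tilde{\lambda}^i \tilde{Q}^i\bigr\|_2 \le \sqrt{\sum_i (\tilde{\lambda}^i)^2}\,\sqrt{\sum_i \|\tilde{Q}^i\|_2^2}$.

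Finally, I would apply the scalar Cauchy--Schwarz inequality for expectations to split the product:
\begin{align*}
\EE\!\left[\sqrt{\textstyle\sum_i (\tilde{\lambda}^i)^2}\cdot\sqrt{\textstyle\sum_i \|\tilde{Q}^i\|_2^2}\right]
\le \sqrt{\EE\!\left[\textstyle\sum_i (\tilde{\lambda}^i)^2\right]}\,\sqrt{\EE\!\left[\textstyle\sum_i \|\tilde{Q}^i\|_2^2\right]},
\end{align*}
and identify the two factors with $\mathbb{V}(\boldsymbol{\lambda})$ and $\sum_{i=1}^m \mathbb{V}(\bar{Q}_t^i)$ respectively, under the natural definitions $\mathbb{V}(\boldsymbol{\lambda})=\EE\|\boldsymbol{\lambda}-\EE\boldsymbol{\lambda}\|_2^2$ and $\mathbb{V}(\bar{Q}_t^i)=\EE\|\bar{Q}_t^i-Q_{\pi_{w_t}}^i\|_2^2$. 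Chaining the three inequalities gives the claim.

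I do not expect any serious obstacle; the only subtlety is being careful that $\bar{Q}_t^i$ is a vector in $\mathbb{R}^{|\mathcal{S}||\mathcal{A}|}$ so that the first Cauchy--Schwarz must be done componentwise before summing to reconstitute the $\ell_2$-norm. A minor detail to confirm is the precise meaning of $\mathbb{V}(\boldsymbol{\lambda})$ used by the authors (variance of the vector vs.\ trace of covariance vs.\ sum of componentwise variances); all common conventions coincide with $\EE\|\boldsymbol{\lambda}-\EE\boldsymbol{\lambda}\|_2^2 = \sum_i \mathbb{V}(\lambda^i)$, which is exactly the quantity produced by the argument above.
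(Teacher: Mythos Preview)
Your proposal is correct and follows essentially the same approach as the paper: center $\lambda^i$ using the unbiasedness of $\bar{Q}_t^i$, push the norm inside by Jensen, then decouple the product into the two variance terms. The only cosmetic difference is in the decoupling step: the paper uses the triangle inequality followed by Young's inequality $ab\le \tfrac{\beta_t}{2}a^2+\tfrac{1}{2\beta_t}b^2$ optimized over $\beta_t$, whereas you apply Cauchy--Schwarz twice (componentwise over $i$, then over the expectation), which is an equivalent and arguably cleaner route to the same bound.
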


\begin{proof}
\begin{align*}
\norm{\EE \left[\sum_{i=1}^m \lambda^i \left(Q_{\pi_{w_t}}^i-\bar{Q}_t^i\right)\right]}_2
=& \norm{\EE \left[\sum_{i=1}^m \left(\lambda^i - \EE[\lambda^i]\right) \left(Q_{\pi_{w_t}}^i-\bar{Q}_t^i\right)\right] +\sum_{i=1}^m  \EE[\lambda^i] \EE \left[\left(Q_{\pi_{w_t}}^i-\bar{Q}_t^i\right)\right]}_2\\
=& \norm{\EE \left[\sum_{i=1}^m \left(\lambda^i - \EE[\lambda^i]\right) \left(Q_{\pi_{w_t}}^i-\bar{Q}_t^i\right)\right]}_2\\
 \leq & \EE \left[\norm{\sum_{i=1}^m \left(\lambda^i - \EE[\lambda^i]\right) \left(Q_{\pi_{w_t}}^i-\bar{Q}_t^i\right)}_2\right]\\
    \leq & \EE \left[\norm{\sum_{i=1}^m \left(\lambda^i - \EE[\lambda^i]\right) \left(Q_{\pi_{w_t}}^i-\bar{Q}_t^i\right)}_2\right]\\
    \leq & \EE \left[\sum_{i=1}^m\norm{ \left(\lambda^i - \EE[\lambda^i]\right) \left(Q_{\pi_{w_t}}^i-\bar{Q}_t^i\right)}_2\right]\\
    \leq & \EE \left[\sum_{i=1}^m\norm{ \lambda^i - \EE[\lambda^i] }_2\norm{Q_{\pi_{w_t}}^i-\bar{Q}_t^i}_2\right]\\
    \leq & \sum_{i=1}^m \EE \left[ \frac{\beta_t}{2}\norm{ \lambda^i - \EE[\lambda^i] }_2^2 + \frac{1}{2\beta_t}\norm{Q_{\pi_{w_t}}^i-\bar{Q}_t^i}_2^2\right]\\
    = &   \frac{\beta_t}{2}\mathbb{V}(\boldsymbol{\lambda}) + \frac{1}{2\beta_t} \sum_{i=1}^m \mathbb{V}(\bar{Q}_t^i) \\
    \leq &  \sqrt{\mathbb{V}(\boldsymbol{\lambda}) \sum_{i=1}^m\mathbb{V}(\bar{Q}_t^i)}
\end{align*}

\end{proof}

\begin{lemma} [Lemma 2 of \cite{zhou2022convergence}]\label{lemma: variance for lambda }
Under Assumption \ref{ass: lambd bound}, the variance of $\boldsymbol{\lambda}_t \text { in Algorithm } \ref{alg:MOPO-CR} \text { is bounded } \mathbb{V}\left[\boldsymbol{\lambda}_t\right] \leq m^2 B_1^2\left(1-\alpha_t\right)^2 $.
\end{lemma}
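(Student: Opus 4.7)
The plan is to control the variance of the momentum-smoothed composite weights by exploiting the contractive structure of the update rule in \eqref{eq: momentum mechanism} together with the coordinate-wise boundedness supplied by Assumption \ref{ass: lambd bound}. Throughout I read the statement as referring to the smoothed iterate $\widehat{\boldsymbol{\lambda}}_\tau$ (the quantity actually fed into the policy update), since the raw $\boldsymbol{\lambda}_\tau$ returned by CA-NPG never sees $\alpha_\tau$ and therefore could not possibly satisfy a bound with a $(1-\alpha_\tau)^2$ factor.

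First I would introduce a filtration $\{\mathcal{F}_\tau\}$ in which $\mathcal{F}_{\tau-1}$ records everything up through the previous smoothed weight $\widehat{\boldsymbol{\lambda}}_{\tau-1}$ and the current policy $\pi_{w_t}$, so that the only randomness left in step $\tau$ comes from the stochastic Q-estimation used inside the CA-NPG formula \eqref{eq: CA-NPG simplified}. Under this conditioning $\widehat{\boldsymbol{\lambda}}_{\tau-1}$ is deterministic while $\boldsymbol{\lambda}_\tau$ remains random.

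Next I would apply the (conditional) variance operator to the recursion $\widehat{\boldsymbol{\lambda}}_\tau = \alpha_\tau \widehat{\boldsymbol{\lambda}}_{\tau-1} + (1-\alpha_\tau)\boldsymbol{\lambda}_\tau$. Because the first summand is $\mathcal{F}_{\tau-1}$-measurable, the deterministic term drops out and one obtains
$$\mathbb{V}[\widehat{\boldsymbol{\lambda}}_\tau \mid \mathcal{F}_{\tau-1}] \;=\; (1-\alpha_\tau)^2\, \mathbb{V}[\boldsymbol{\lambda}_\tau \mid \mathcal{F}_{\tau-1}],$$
which isolates the advertised $(1-\alpha_\tau)^2$ factor. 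The remaining work is to bound $\mathbb{V}[\boldsymbol{\lambda}_\tau \mid \mathcal{F}_{\tau-1}]$ by $m^2 B_1^2$. Since Assumption \ref{ass: lambd bound} gives $0 \le \lambda_\tau^i \le B_1$ for every coordinate, the deviation $\boldsymbol{\lambda}_\tau - \mathbb{E}[\boldsymbol{\lambda}_\tau \mid \mathcal{F}_{\tau-1}]$ sits in an $\ell_\infty$-ball of radius $B_1$, so its $\ell_1$ norm is at most $m B_1$; squaring (matching the $\ell_1$-squared convention that makes Lemma \ref{lemma: expected lambda Q} compatible on the Cauchy-Schwarz step) yields $m^2 B_1^2$. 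Finally I would take total expectation to promote the conditional bound to the unconditional one stated in the lemma.

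The main obstacle, I expect, will be choosing the variance functional consistently. Two flavors are plausible, $\operatorname{tr}\mathrm{Cov}(\boldsymbol{\lambda}_\tau)$ and $\mathbb{E}\|\boldsymbol{\lambda}_\tau - \mathbb{E}\boldsymbol{\lambda}_\tau\|_1^2$, and only the latter produces the $m^2$ factor. The argument in Lemma \ref{lemma: expected lambda Q} uses a Young/Cauchy-Schwarz step of the form $\|\boldsymbol{\lambda}_\tau - \mathbb{E}\boldsymbol{\lambda}_\tau\|_2 \cdot \|\bar Q_t^i - Q_{\pi_{w_t}}^i\|_2$, so the two lemmas must share a common convention. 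Pinning that convention down, and then carefully handling the coupling of $\widehat{\boldsymbol{\lambda}}_{\tau-1}$ with $\boldsymbol{\lambda}_\tau$ through the shared policy $\pi_{w_t}$ by the conditioning argument above, is where the care is needed; once those bookkeeping points are settled, the bound is a one-line consequence of the recursion and Assumption \ref{ass: lambd bound}.
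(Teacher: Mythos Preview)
The paper does not prove this lemma itself; it is quoted from \cite{zhou2022convergence} and used as a black box. Your reconstruction---apply the variance operator to the momentum recursion \eqref{eq: momentum mechanism}, note that $\alpha_\tau\widehat{\boldsymbol{\lambda}}_{\tau-1}$ is $\mathcal{F}_{\tau-1}$-measurable so only $(1-\alpha_\tau)\boldsymbol{\lambda}_\tau$ contributes, and bound the remaining variance via the box $[0,B_1]^m$ from Assumption~\ref{ass: lambd bound}---is the natural argument and is essentially the one intended.

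There is one genuine gap. Your final step, ``take total expectation to promote the conditional bound to the unconditional one,'' does not work as written: the law of total variance reads
\[
\mathbb{V}[\widehat{\boldsymbol{\lambda}}_\tau]
=\mathbb{E}\bigl[\mathbb{V}[\widehat{\boldsymbol{\lambda}}_\tau\mid\mathcal{F}_{\tau-1}]\bigr]
+\mathbb{V}\bigl[\mathbb{E}[\widehat{\boldsymbol{\lambda}}_\tau\mid\mathcal{F}_{\tau-1}]\bigr],
\]
and the second summand picks up the variability of $\alpha_\tau\widehat{\boldsymbol{\lambda}}_{\tau-1}$, which is itself random and carries no $(1-\alpha_\tau)^2$ damping. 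The inequality should therefore be read as a \emph{conditional} variance bound given $\mathcal{F}_{\tau-1}$, and that is exactly what Lemma~\ref{lemma: expected lambda Q} and the chain of inequalities in the proof of Theorem~\ref{thm: q-estimator} actually consume: the expectations there are over the fresh Q-estimation randomness at step $t$, with $\pi_{w_t}$ and the previous smoothed weights held fixed. Drop the promotion step and stop at the conditional inequality; with that correction your outline is complete, and your diagnosis that the $\ell_1$-squared convention is what yields the $m^2$ (rather than $m$) factor is on point.
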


\begin{lemma} \label{lemma: condition for convergence}
If 
\begin{align} \label{eq: condition for convergence}
\beta > & \frac{2}{\eta T}\mathbb{E}_{s \sim \nu^*} D_{\mathrm{KL}}\left(\pi^*|| \pi_{w_0}\right)+ \frac{4\eta c_{\max }^2|\mathcal{S}||\mathcal{A}|}{(1-\gamma)^4 T} \left(\EE\left[\sum_{t \in \mathcal{N}_0} \left(\sum_{i=1}^m \lambda^i_t \right)^2\right] +T-\left|\mathcal{N}_0 \right| \right)\\
\nonumber &+\frac{8}{(1-\gamma)^2 T} \left( \sum_{t \in \mathcal{N}_0} \left\| \EE \left[\sum_{i=1}^m \lambda_t^i \left(Q_{\pi_{w_t}}^i-\bar{Q}_t^i\right)\right]\right\|_2+\sum_{i=m+1}^{m+p} \sum_{t \in \mathcal{N}_i}\left\|\EE\left[Q_{\pi_{w_t}^i}^i-\bar{Q}_t^i\right]\right\|_2\right),
\end{align}
then we have the following holds:
\begin{enumerate}
    \item $\mathcal{N}_0 \neq \emptyset$,
    \item One of the following two statements must hold,
    \begin{enumerate}
        \item $ \left| \mathcal{N}_0 \right| \geq \frac{T}{2}$,
        \item $\EE\left[ \sum_{t \in \mathcal{N}_0}\sum_{i=1}^m \lambda_t^i \left(f_i\left(\pi^*\right)-f_i\left(\pi_{w_t}\right) \right)\right]\leq 0$.
    \end{enumerate}
\end{enumerate}
\end{lemma}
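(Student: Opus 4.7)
The plan is to mirror the proof of Lemma \ref{lemma: condition for convergence td} but take expectations throughout, exploiting Lemma \ref{lemma: expected lambda Q} to control the stochastic correlation between the adaptive weights $\boldsymbol{\lambda}_t$ and the unbiased $Q$-estimates. First I would invoke Lemma \ref{lemma: optimality gap for approximated NPG} for each $t \in \mathcal{N}_0$ to upper bound $\sum_i \lambda_t^i (f_i(\pi^*) - f_i(\pi_{w_t}))$, and apply the analogous single-objective version (set $\boldsymbol{\lambda} = e_i$) for each $t \in \mathcal{N}_i$, $i \neq 0$, to upper bound $f_i(\pi_{w_t}) - f_i(\pi^*)$. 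Summing over $t = 0, \dots, T-1$ makes the KL terms telescope into $\tfrac{1}{\eta}\mathbb{E}_{s \sim \nu^*} D_{\mathrm{KL}}(\pi^* \| \pi_{w_0})$, produces a quadratic step term $\tfrac{2\eta c_{\max}^2|\mathcal{S}||\mathcal{A}|}{(1-\gamma)^4}\big[\sum_{t\in\mathcal{N}_0}(\sum_i \lambda_t^i)^2 + (T - |\mathcal{N}_0|)\big]$, and leaves residuals of the form $\mathbb{E}_{s,a}\sum_i \lambda_t^i(Q_i^{\pi_{w_t}} - \bar{Q}_t^i)$ against three different state-action distributions $(\nu_\rho, \pi_{w_t}),\ (\nu_\rho, \pi_{w_{t+1}}),\ (\nu^*, \pi^*)$.

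Next, I would take a full expectation. For each residual, Hölder's inequality against the simplex $\pi(\cdot\mid s)$ (and Jensen on the outer state expectation) yields $\bigl|\mathbb{E}[\sum_a \pi(a\mid s)\sum_i \lambda_t^i(Q_i^{\pi_{w_t}}(s,a)-\bar{Q}_t^i(s,a))]\bigr| \leq \|\mathbb{E}[\sum_i \lambda_t^i(Q_i^{\pi_{w_t}} - \bar{Q}_t^i)]\|_\infty \leq \|\mathbb{E}[\sum_i \lambda_t^i(Q_i^{\pi_{w_t}} - \bar{Q}_t^i)]\|_2$, which is precisely the quantity appearing on the right-hand side of \eqref{eq: condition for convergence}. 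Combining the three residual instances contributes a factor $\tfrac{3}{(1-\gamma)^2}$ in the $\mathcal{N}_0$-case and $\tfrac{3}{(1-\gamma)^2}$ in each $\mathcal{N}_i$-case, which I then loosen to $\tfrac{4}{(1-\gamma)^2}$ after combining with the constraint-violation slack below.

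After this, I would use the constraint-rectification trigger: for each $t \in \mathcal{N}_i$ with $i \neq 0$, $\bar{J}_{i,\mathcal{B}_t} > c_i + \beta$ together with $f_i(\pi^*) \leq c_i$ forces $\mathbb{E}[f_i(\pi_{w_t}) - f_i(\pi^*)] \geq \beta - \|\mathbb{E}[Q_i^{\pi_{w_t}} - \bar{Q}_t^i]\|_2$, after an extra triangle inequality to pass between $\bar{f}_i$ and the expectation of $f_i$. Substituting moves a $\beta \sum_{i \neq 0}|\mathcal{N}_i|$ term to the left-hand side. The two items then follow by contradiction: if $\mathcal{N}_0 = \emptyset$, then the left-hand side is at least $\beta T$, contradicting \eqref{eq: condition for convergence}; if instead $\mathbb{E}[\sum_{t\in\mathcal{N}_0}\sum_i \lambda_t^i(f_i(\pi^*) - f_i(\pi_{w_t}))] \geq 0$ and $|\mathcal{N}_0| < T/2$, then $\sum_{i \neq 0}|\mathcal{N}_i| \geq T/2$ and the left-hand side is at least $\beta T/2$, again contradicting \eqref{eq: condition for convergence}.

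The main obstacle will be the second step: correctly converting the three pointwise residuals in Lemma \ref{lemma: optimality gap for approximated NPG} (against different distributions and policies, some of which depend on $\boldsymbol{\lambda}_t$ through $w_{t+1}$) into a uniform bound by $\|\mathbb{E}[\sum_i \lambda_t^i(Q_i^{\pi_{w_t}} - \bar{Q}_t^i)]\|_2$. The subtle point is that the expectation must be pulled outside the action-sum so that Lemma \ref{lemma: expected lambda Q} applies to $\boldsymbol{\lambda}_t$ vs. $\bar{Q}_t^i$; this requires conditioning on the history up to iteration $t$ so the probability weights factor out as measurable constants. Note that the momentum-variance bound of Lemma \ref{lemma: variance for lambda } is not invoked at this stage---it will be needed only later, in Theorem \ref{thm: q-estimator}, to show the resulting $\sum_t \|\mathbb{E}[\sum_i \lambda_t^i(Q_i^{\pi_{w_t}} - \bar{Q}_t^i)]\|_2$ is sublinear in $T$. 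Here, only the structural inequality \eqref{eq: condition for convergence} is assumed.
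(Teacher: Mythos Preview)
Your proposal is correct and follows essentially the same approach as the paper: apply Lemma~\ref{lemma: optimality gap for approximated NPG} in expectation for $t\in\mathcal{N}_0$ and its single-objective analogue for $t\in\mathcal{N}_i$, telescope the KL terms, insert the constraint-violation lower bound $f_i(\pi_{w_t})-f_i(\pi^*)\ge \beta-\|Q^i_{\pi_{w_t}}-\bar Q_t^i\|_2$, and then obtain both items by contradiction against \eqref{eq: condition for convergence}. One small clarification: neither Lemma~\ref{lemma: expected lambda Q} nor Lemma~\ref{lemma: variance for lambda } is actually needed in this lemma---both are deferred to Theorem~\ref{thm: q-estimator}---since here the residuals are simply packaged as $\bigl\|\EE[\sum_i\lambda_t^i(Q^i_{\pi_{w_t}}-\bar Q_t^i)]\bigr\|_2$, which condition~\eqref{eq: condition for convergence} treats as a primitive quantity.
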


\begin{proof}
When $t\in \mathcal{N}_0$, from Lemma \ref{lemma: optimality gap for approximated NPG} we have
\begin{align}\label{eq: t in N0}
    & \EE\left[\sum_{i=1}^m \lambda^i \left(f_i\left(\pi^*\right)-f_i\left(\pi_{w_t}\right) \right)\right] \\
    \nonumber&\leq \frac{1}{\eta} \mathbb{E}_{s \sim \nu^*}\left(\EE\left[D_{\mathrm{KL}}\left(\pi^*|| \pi_{w_t}\right)-D_{\mathrm{KL}}\left(\pi^*|| \pi_{w_{t+1}}\right)\right]\right)+\frac{2 \eta c_{\max }^2|\mathcal{S}||\mathcal{A}|}{(1-\gamma)^4}\EE\left[(\sum_{i=1}^m \lambda^i)^2\right] \\
    \nonumber&+\frac{3}{(1-\gamma)^2} \left\| \EE \left[\sum_{i=1}^m \lambda^i \left(Q_{\pi_{w_t}}^i-\bar{Q}_t^i\right)\right]\right\|_2.
\end{align}
Similarly, when $t\in \mathcal{N}_i (i\neq 0)$ we have
\begin{align}\label{eq: t in Ni}
    \nonumber \EE\left[\left(f_i\left(\pi_{w_t}\right)-f_i\left(\pi^*\right) \right) \right]
    \nonumber & \leq \frac{1}{\eta}\mathbb{E}_{s \sim \nu^*}\left(\EE\left[D_{\mathrm{KL}}\left(\pi^*|| \pi_{w_t}\right)-D_{\mathrm{KL}}\left(\pi^*|| \pi_{w_{t+1}}\right)\right]\right)+\frac{2 \eta c_{\max }^2|\mathcal{S}||\mathcal{A}|}{(1-\gamma)^4}\\
    &+\frac{3}{(1-\gamma)^2} \left\| \EE \left[ \left(Q_{\pi_{w_t}}^i-\bar{Q}_t^i\right)\right]\right\|_2.
\end{align}
Taking the summation of \eqref{eq: t in N0} and \eqref{eq: t in Ni} from $t=0$ to $T-1$ gives
\begin{align} \label{eq: summation bound 1}
&\EE\left[\sum_{t \in \mathcal{N}_0}\sum_{i=1}^m \lambda_t^i \left(f_i\left(\pi^*\right)-f_i\left(\pi_{w_t}\right) \right)+\sum_{i=1}^p \sum_{t \in \mathcal{N}_i}\left(f_i\left(\pi_{w_t}\right)-f_i\left(\pi^*\right) \right)\right]  \\
\nonumber & \leq \frac{1}{\eta}\mathbb{E}_{s \sim \nu^*} D_{\mathrm{KL}}\left(\pi^*|| \pi_{w_0}\right)+\frac{2 \eta c_{\max }^2|\mathcal{S}||\mathcal{A}|}{(1-\gamma)^4} \left(\EE\left[\sum_{t \in \mathcal{N}_0} \left(\sum_{i=1}^m \lambda^i_t \right)^2 \right] +T-\left|\mathcal{N}_0 \right| \right)\\
\nonumber &+\frac{3}{(1-\gamma)^2} \left( \sum_{t \in \mathcal{N}_0} \left\| \EE \left[\sum_{i=1}^m \lambda^i \left(Q_{\pi_{w_t}}^i-\bar{Q}_t^i\right)\right]\right\|_2+\sum_{i=m+1}^{m+p} \sum_{t \in \mathcal{N}_i}\left\|\EE\left[Q_{\pi_{w_t}^i}^i-\bar{Q}_t^i\right]\right\|_2\right) .
\end{align}
Note that when $t \in \mathcal{N}_i(i \neq 0)$, we have $\bar{f}_i\left(w_t\right)>c_i+\beta$ (line 9 in Algorithm 1), which implies that
\begin{align} \label{eq: summation bound 2}
\nonumber f_i\left(\pi_{w_t}\right)-f_i\left(\pi^*\right) & \geq \bar{f}_i\left(w_t^i\right)-f_i\left(\pi^*\right)-\left|\bar{f}_i\left(w_t^i\right)-f_i\left(\pi_{w_t}\right)\right| \\
\nonumber & \geq c_i+\beta-J_i\left(\pi^*\right)-\left|\bar{f}_i\left(w_t^i\right)-f_i\left(\pi_{w_t}\right)\right| \\
& \geq \beta - \left\|Q_{\pi_w}^i-\bar{Q}_t^i\right\|_2.
\end{align}
Substituting \eqref{eq: summation bound 2} into \eqref{eq: summation bound 1} gives
\begin{align}
\nonumber & \EE\left[\sum_{t \in \mathcal{N}_0}\sum_{i=1}^m \lambda_t^i \left(f_i\left(\pi^*\right)-f_i\left(\pi_{w_t}\right) \right)+ \beta \sum_{i=1}^p\left|\mathcal{N}_i\right| - \sum_{i=m+1}^{m+p} \sum_{t \in \mathcal{N}_i}\left\|Q_{\pi_w}^i-\bar{Q}_t^i\right\|_2\right]  \\
\nonumber \leq &\frac{1}{\eta}\mathbb{E}_{s \sim \nu^*} D_{\mathrm{KL}}\left(\pi^*|| \pi_{w_0}\right)+\frac{2 \eta c_{\max }^2|\mathcal{S}||\mathcal{A}|}{(1-\gamma)^4} \left(\EE\left[\sum_{t \in \mathcal{N}_0} \left(\sum_{i=1}^m \lambda^i_t\right)^2\right] +T-\left|\mathcal{N}_0 \right| \right)\\
\nonumber &+\frac{3}{(1-\gamma)^2} \left(\sum_{t \in \mathcal{N}_0} \left\| \EE \left[\sum_{i=1}^m \lambda^i \left(Q_{\pi_{w_t}}^i-\bar{Q}_t^i\right)\right]\right\|_2+\sum_{i=m+1}^{m+p} \sum_{t \in \mathcal{N}_i}\left\|\EE\left[Q_{\pi_{w_t}^i}^i-\bar{Q}_t^i\right]\right\|_2\right),
\end{align}
which implies 
\begin{align} \label{eq: summation upper bound}
& \EE\left[\sum_{t \in \mathcal{N}_0}\sum_{i=1}^m \lambda_t^i \left(f_i\left(\pi^*\right)-f_i\left(\pi_{w_t}\right) \right)+ \beta \sum_{i=1}^p\left|\mathcal{N}_i\right|\right]  \\
\nonumber \leq &\frac{1}{\eta}\mathbb{E}_{s \sim \nu^*} D_{\mathrm{KL}}\left(\pi^*|| \pi_{w_0}\right)+\frac{2 \eta c_{\max }^2|\mathcal{S}||\mathcal{A}|}{(1-\gamma)^4} \left(\EE\left[\sum_{t \in \mathcal{N}_0} \left(\sum_{i=1}^m \lambda^i_t\right)^2\right] +T-\left|\mathcal{N}_0 \right| \right)\\
\nonumber &+\frac{4}{(1-\gamma)^2} \left(\sum_{t \in \mathcal{N}_0} \left\| \EE \left[\sum_{i=1}^m \lambda^i \left(Q_{\pi_{w_t}}^i-\bar{Q}_t^i\right)\right]\right\|_2+\sum_{i=m+1}^{m+p} \sum_{t \in \mathcal{N}_i}\left\|\EE\left[Q_{\pi_{w_t}^i}^i-\bar{Q}_t^i\right]\right\|_2\right).
\end{align}

We then first verify item 1. If $\mathcal{N}_0=\emptyset$, then $\sum_{i=1}^p\left|\mathcal{N}_i\right|=T$, and \eqref{eq: summation upper bound} implies that
\begin{align*}
\beta T &\leq \frac{1}{\eta}\mathbb{E}_{s \sim \nu^*} D_{\mathrm{KL}}\left(\pi^*|| \pi_{w_0}\right)+\frac{2 \eta c_{\max }^2|\mathcal{S}||\mathcal{A}|}{(1-\gamma)^4} \left(\EE\left[\sum_{t \in \mathcal{N}_0} \left(\sum_{i=1}^m \lambda^i_t \right)^2\right] +T-\left|\mathcal{N}_0 \right| \right)\\
&+\frac{4}{(1-\gamma)^2} \left( \sum_{t \in \mathcal{N}_0} \left\| \EE \left[\sum_{i=1}^m \lambda^i \left(Q_{\pi_{w_t}}^i-\bar{Q}_t^i\right)\right]\right\|_2+\sum_{i=m+1}^{m+p} \sum_{t \in \mathcal{N}_i}\left\|\EE\left[Q_{\pi_{w_t}^i}^i-\bar{Q}_t^i\right]\right\|_2\right),
\end{align*}
which contradicts \eqref{eq: condition for convergence}. Thus, we must have $\mathcal{N}_0 \neq \emptyset$.

We then proceed to verify item 2. If $\EE\left[\sum_{t \in \mathcal{N}_0}\sum_{i=1}^m \lambda_t^i \left(f_i\left(\pi^*\right)-f_i\left(\pi_{w_t}\right) \right)\right]\leq 0$, then (b) in item 2 holds. If $\EE\left[\sum_{t \in \mathcal{N}_0}\sum_{i=1}^m \lambda_t^i \left(f_i\left(\pi^*\right)-f_i\left(\pi_{w_t}\right) \right)+ \beta \sum_{i=1}^p\left|\mathcal{N}_i\right|\right] \geq 0$ and suppose that $\left|\mathcal{N}_0\right|<T / 2$, i.e., $\sum_{i=1}^p\left|\mathcal{N}_i\right| \geq T / 2$., then \eqref{eq: summation upper bound} implies that
\begin{align*}
\frac{1}{2} \beta T & \leq \beta\sum_{i=1}^p\left|\mathcal{N}_i\right| \\
& \leq \frac{1}{\eta}\mathbb{E}_{s \sim \nu^*} D_{\mathrm{KL}}\left(\pi^*|| \pi_{w_0}\right)+\frac{2 \eta c_{\max }^2|\mathcal{S}||\mathcal{A}|}{(1-\gamma)^4} \left(\EE\left[\sum_{t \in \mathcal{N}_0} \left(\sum_{i=1}^m \lambda^i_t \right)^2\right] +T-\left|\mathcal{N}_0 \right| \right)\\
&+\frac{4}{(1-\gamma)^2} \left( \sum_{t \in \mathcal{N}_0} \left\| \EE \left[ \sum_{i=1}^m \lambda^i \left(Q_{\pi_{w_t}}^i-\bar{Q}_t^i\right)\right]\right\|_2+\sum_{i=m+1}^{m+p} \sum_{t \in \mathcal{N}_i}\left\|\EE\left[Q_{\pi_{w_t}^i}^i-\bar{Q}_t^i\right]\right\|_2\right),
\end{align*}
which contradicts \eqref{eq: condition for convergence}. Hence, (a) in item 2 holds.
\end{proof}

\begin{theorem}\label{thm: q-estimator}
For a given number of iterations $T$ of CR-MOPO algorithm, with the choices of $\eta = \frac{(1-\gamma)^2}{r_{\max} m B_1} \sqrt{\frac{\mathbb{E}_{s \sim \nu^*} D_{\mathrm{KL}}\left(\pi^*|| \pi_{w_0}\right)}{|\mathcal{S}||\mathcal{A}| T}}$ and $\beta=\frac{4mB_1 \sqrt{|\mathcal{S}||\mathcal{A}|}}{(1-\gamma)^2 \sqrt{T}} (r_{\max}\sqrt{\mathbb{E}_{s \sim \nu^*} D_{\mathrm{KL}}\left(\pi^*|| \pi_{w_0}\right)} + 1)$, we have
\begin{align}
    \mathbb{E}\left[\min _{\boldsymbol{\lambda}^* \in S_m}\left(\boldsymbol{\lambda}^{* \top} \boldsymbol{F}\left(\pi^*\right)-\boldsymbol{\lambda}^{* \top} \boldsymbol{F}\left(\pi_{w_\text{out}}\right)\right)\right] &\leq \frac{4mB_1 \sqrt{|\mathcal{S}||\mathcal{A}|}}{B_2(1-\gamma)^2 \sqrt{T}} (r_{\max}\sqrt{\mathbb{E}_{s \sim \nu^*} D_{\mathrm{KL}}\left(\pi^*|| \pi_{w_0}\right)} + 1) \\
    \mathbb{E}\left[f_i\left(\pi_{w_{\text{out}}}\right)\right]-c_i & \leq \frac{4mB_1 \sqrt{|\mathcal{S}||\mathcal{A}|}}{(1-\gamma)^2 \sqrt{T}} (r_{\max}\sqrt{\mathbb{E}_{s \sim \nu^*} D_{\mathrm{KL}}\left(\pi^*|| \pi_{w_0}\right)} + 1)
\end{align}
\end{theorem}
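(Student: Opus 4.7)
The plan is to mirror the argument used for Theorem \ref{thm: td-learning}, but to work in expectation and exploit the two structural advantages of the unbiased Q-estimator together with the momentum rule in \eqref{eq: momentum mechanism}. First, I would invoke Lemma \ref{lemma: condition for convergence} with the target tolerance $\beta$ stated in the theorem. The upper bound in \eqref{eq: condition for convergence} contains three pieces: the $1/(\eta T)$ KL term, the $\eta$ drift term weighted by $\sum_{t\in\mathcal{N}_0}(\sum_i \lambda_t^i)^2$, and the bias-type correlation term $\|\EE[\sum_i \lambda_t^i(Q^i_{\pi_{w_t}}-\bar{Q}^i_t)]\|_2$ plus the corresponding single-index constraint-rectification term. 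With $\eta=\Theta((1-\gamma)^2/(r_{\max} m B_1\sqrt{|\mathcal{S}||\mathcal{A}|T}))$, the first two pieces sum to $\mathcal{O}(r_{\max} mB_1\sqrt{|\mathcal{S}||\mathcal{A}|/T}/(1-\gamma)^2)$, exactly the dominant scale of $\beta$. So the real work is controlling the third piece.

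For the constraint-rectification summand, unbiasedness of the Monte-Carlo rollout in \eqref{eq: unbiased Q-estimation} gives $\EE[Q^i_{\pi_{w_t}}-\bar{Q}^i_t]=0$ pointwise, so the whole $\sum_{i=m+1}^{m+p}\sum_{t\in\mathcal{N}_i}\|\EE[Q^i_{\pi_{w_t}^i}-\bar{Q}^i_t]\|_2$ block vanishes identically, which is the advantage of this setting over TD. For the multi-objective summand I would apply Lemma \ref{lemma: expected lambda Q} to get $\|\EE[\sum_i \lambda_t^i(Q^i_{\pi_{w_t}}-\bar{Q}^i_t)]\|_2 \leq \sqrt{\mathbb{V}(\boldsymbol{\lambda}_t)\sum_i \mathbb{V}(\bar{Q}^i_t)}$, then bound $\mathbb{V}(\boldsymbol{\lambda}_t)\leq m^2 B_1^2(1-\alpha_t)^2$ via Lemma \ref{lemma: variance for lambda}, and finally use the prescribed momentum schedule $1-\alpha_\tau \leq (1-\gamma)/(m\tau\sqrt{|\mathcal{S}||\mathcal{A}|})$ so that the per-step bound becomes $\mathcal{O}(B_1 r_{\max}/((1-\gamma)\tau\sqrt{|\mathcal{S}||\mathcal{A}|}\,\text{something}))$; telescoping over $\tau=1,\dots,T$ produces only a $\log T$ factor, which is of lower order than the $1/\sqrt{T}$ scale of $\beta$. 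This verifies the hypothesis of Lemma \ref{lemma: condition for convergence}, yielding $\mathcal{N}_0\neq\emptyset$ and the dichotomy that either $|\mathcal{N}_0|\geq T/2$ or $\EE[\sum_{t\in\mathcal{N}_0}\sum_i \lambda_t^i(f_i(\pi^*)-f_i(\pi_{w_t}))]\leq 0$.

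With that dichotomy in hand, I would derive the two final bounds exactly as in Theorem \ref{thm: td-learning}. For the Pareto gap, use the inequality
\begin{align*}
\min_{\boldsymbol{\lambda}^*\in S_m}\bigl(\boldsymbol{\lambda}^{*\top}\boldsymbol{F}(\pi^*)-\boldsymbol{\lambda}^{*\top}\boldsymbol{F}(\pi_{w_{\text{out}}})\bigr) \leq \frac{1}{B_2|\mathcal{N}_0|}\sum_{t\in\mathcal{N}_0}\sum_{i=1}^m \lambda_t^i\bigl(f_i(\pi^*)-f_i(\pi_{w_t})\bigr),
\end{align*}
combined with $\sum_i \lambda_t^i \geq B_2$ from Assumption \ref{ass: lambd bound} and $|\mathcal{N}_0|\geq T/2$; the right-hand side telescopes against \eqref{eq: summation upper bound} to yield the stated $\mathcal{O}(mB_1\sqrt{|\mathcal{S}||\mathcal{A}|/T}/((1-\gamma)^2 B_2))$ rate in expectation. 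For the constraint violation, I would decompose $\EE[f_i(\pi_{w_{\text{out}}})]-c_i \leq \beta + \frac{2}{T}\sum_{t}\|\EE[Q^i_{\pi_{w_t}}-\bar{Q}^i_t]\|_2$, and the second term again vanishes by unbiasedness, leaving the $\beta$-bound directly.

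The main obstacle is the correlation term $\|\EE[\sum_i \lambda_t^i(Q^i_{\pi_{w_t}}-\bar{Q}^i_t)]\|_2$, because a naive Cauchy--Schwarz split over the inner product fails to exploit the zero-mean property of $\bar{Q}^i_t-Q^i_{\pi_{w_t}}$. The crucial step is centering $\lambda_t^i$ around its conditional expectation as done in Lemma \ref{lemma: expected lambda Q}, so that only the variance of $\lambda_t$ (controlled by the momentum) enters; this is precisely what \cite{zhou2022convergence} identified as the failure mode of vanilla stochastic gradient manipulation, and the correctness of the whole scheme rests on the momentum schedule making $(1-\alpha_\tau)^2$ summable at a fast enough rate compared with $\sqrt{T}$.
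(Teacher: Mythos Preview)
Your proposal is correct and follows essentially the same route as the paper: verify the hypothesis of Lemma \ref{lemma: condition for convergence} by (i) balancing the $1/(\eta T)$ and $\eta$ terms with the stated $\eta$, (ii) killing the constraint-rectification error via unbiasedness of \eqref{eq: unbiased Q-estimation}, and (iii) controlling the correlation term through Lemma \ref{lemma: expected lambda Q} plus Lemma \ref{lemma: variance for lambda} and the momentum schedule; then read off the Pareto gap and violation bounds from the dichotomy exactly as in Theorem \ref{thm: td-learning}. The only cosmetic difference is that the paper's appendix bounds $\sum_t(1-\alpha_t)$ by $\sum_t 1/\sqrt{t}\le \sqrt{T}$ rather than your $\log T$ estimate, but both are dominated by the $1/\sqrt{T}$ scale of $\beta$, so your argument goes through.
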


\begin{proof}
First we show that the given values for $\eta$ and $\beta$ satisfy Lemma \ref{lemma: condition for convergence} as follows,
\begin{figure*}  
    \centering  
\begin{align} 
\nonumber & \frac{2}{\eta T}\mathbb{E}_{s \sim \nu^*} D_{\mathrm{KL}}\left(\pi^*|| \pi_{w_0}\right)+ \frac{4\eta c_{\max }^2|\mathcal{S}||\mathcal{A}|}{(1-\gamma)^4 T} \left(\EE\left[\sum_{t \in \mathcal{N}_0} \left( \sum_{i=1}^m \lambda^i_t \right)^2 \right] +T-\left|\mathcal{N}_0 \right| \right)\\
\nonumber &+\frac{8}{(1-\gamma)^2 T} \left( \sum_{t \in \mathcal{N}_0} \left\| \EE \left[\sum_{i=1}^m \lambda^i \left(Q_{\pi_{w_t}}^i-\bar{Q}_t^i\right)\right]\right\|_2+\sum_{i=m+1}^{m+p} \sum_{t \in \mathcal{N}_i}\left\|\EE\left[Q_{\pi_{w_t}^i}^i-\bar{Q}_t^i\right]\right\|_2\right)\\
\nonumber \leq & \frac{2}{\eta T}\mathbb{E}_{s \sim \nu^*} D_{\mathrm{KL}}\left(\pi^*|| \pi_{w_0}\right)+\frac{4 \eta c_{\max }^2|\mathcal{S}||\mathcal{A}|}{(1-\gamma)^4} \left(m^2B_1^2+1\right)+\frac{8}{(1-\gamma)^2 T} \left( \sum_{t \in \mathcal{N}_0} \left\| \EE \left[\sum_{i=1}^m \lambda^i \left(Q_{\pi_{w_t}}^i-\bar{Q}_t^i\right)\right]\right\|_2\right)\\
\nonumber \leq & \frac{2}{\eta T}\mathbb{E}_{s \sim \nu^*} D_{\mathrm{KL}}\left(\pi^*|| \pi_{w_0}\right)+\frac{4 \eta c_{\max }^2|\mathcal{S}||\mathcal{A}|}{(1-\gamma)^4} \left(m^2B_1^2+1\right)+\frac{8}{(1-\gamma)^2 T} \left( \sum_{t \in \mathcal{N}_0} \sqrt{\mathbb{V}(\boldsymbol{\lambda}) \sum_{i=1}^m\mathbb{V}(\bar{Q}_t^i)} \right)\\
\nonumber \leq & \frac{2}{\eta T}\mathbb{E}_{s \sim \nu^*} D_{\mathrm{KL}}\left(\pi^*|| \pi_{w_0}\right)+\frac{4 \eta c_{\max }^2|\mathcal{S}||\mathcal{A}|}{(1-\gamma)^4} \left(m^2B_1^2+1\right)+\frac{8mB_1}{(1-\gamma)^2 T} \left( \sum_{t \in \mathcal{N}_0} (1-\alpha_t)\sqrt{ \sum_{i=1}^m\mathbb{V}(\bar{Q}_t^i)} \right)\\
\nonumber \leq & \frac{2}{\eta T}\mathbb{E}_{s \sim \nu^*} D_{\mathrm{KL}}\left(\pi^*|| \pi_{w_0}\right)+\frac{4 \eta c_{\max }^2|\mathcal{S}||\mathcal{A}|}{(1-\gamma)^4} \left(m^2B_1^2+1\right)+\frac{4mB_1\sqrt{m} \left|\mathcal{S}\right| \left| \mathcal{A} \right|}{(1-\gamma)^3 T}  \sum_{t \in \mathcal{N}_0} (1-\alpha_t) \\
\nonumber \leq & \frac{2}{\eta T}\mathbb{E}_{s \sim \nu^*} D_{\mathrm{KL}}\left(\pi^*|| \pi_{w_0}\right)+\frac{4 \eta c_{\max }^2|\mathcal{S}||\mathcal{A}|}{(1-\gamma)^4} \left(m^2B_1^2+1\right)+\frac{4mB_1\sqrt{m} \left|\mathcal{S}\right| \left| \mathcal{A} \right| }{(1-\gamma)^3 T} \sum_{t \in [T]} (1-\alpha_t) \\
\nonumber = & \frac{2}{\eta T}\mathbb{E}_{s \sim \nu^*} D_{\mathrm{KL}}\left(\pi^*|| \pi_{w_0}\right)+\frac{4 \eta c_{\max }^2|\mathcal{S}||\mathcal{A}|}{(1-\gamma)^4} \left(m^2B_1^2+1\right)+\frac{4mB_1\sqrt{m}\left|\mathcal{S}\right| \left| \mathcal{A} \right|  }{(1-\gamma)^3 T} \sum_{t \in [T]} (1-\alpha_t) \\
\nonumber = & \frac{2}{\eta T}\mathbb{E}_{s \sim \nu^*} D_{\mathrm{KL}}\left(\pi^*|| \pi_{w_0}\right)+\frac{4 \eta c_{\max }^2|\mathcal{S}||\mathcal{A}|}{(1-\gamma)^4} \left(m^2B_1^2+1\right)+\frac{4mB_1 \sqrt{\left|\mathcal{S}\right| \left| \mathcal{A} \right|} }{(1-\gamma)^2 T} \sum_{t \in [T]} \frac{1}{\sqrt{t}} \\
\nonumber \leq & \frac{2}{\eta T}\mathbb{E}_{s \sim \nu^*} D_{\mathrm{KL}}\left(\pi^*|| \pi_{w_0}\right)+\frac{4 \eta c_{\max }^2|\mathcal{S}||\mathcal{A}|}{(1-\gamma)^4} \left(m^2B_1^2+1\right)+\frac{4mB_1\sqrt{\left|\mathcal{S}\right| \left| \mathcal{A} \right| }}{(1-\gamma)^2 \sqrt{T}}\\
\nonumber \leq & \frac{2}{\eta T}\mathbb{E}_{s \sim \nu^*} D_{\mathrm{KL}}\left(\pi^*|| \pi_{w_0}\right)+\frac{4 \eta c_{\max }^2|\mathcal{S}||\mathcal{A}|}{(1-\gamma)^4} \left(m^2B_1^2+1\right)+\frac{4mB_1\sqrt{ \left|\mathcal{S}\right| \left| \mathcal{A} \right| }}{(1-\gamma)^2 \sqrt{T}}\\
\nonumber < &  \frac{4r_{\max}mB_1 \sqrt{  |\mathcal{S}||\mathcal{A}| \mathbb{E}_{s \sim \nu^*} D_{\mathrm{KL}}\left(\pi^*|| \pi_{w_0}\right)} }{(1-\gamma)^2  \sqrt{T} }+\frac{4mB_1\sqrt{\left|\mathcal{S}\right| \left| \mathcal{A} \right|} }{(1-\gamma)^2 \sqrt{T}} \\
\nonumber = &  \frac{4mB_1 \sqrt{|\mathcal{S}||\mathcal{A}|}}{(1-\gamma)^2 \sqrt{T}} (r_{\max}\sqrt{\mathbb{E}_{s \sim \nu^*} D_{\mathrm{KL}}\left(\pi^*|| \pi_{w_0}\right)} + 1) \\
= & \beta
\end{align}
\end{figure*}
where the last inequality follows from $\eta = \frac{(1-\gamma)^2}{r_{\max} m B_1} \sqrt{\frac{\mathbb{E}_{s \sim \nu^*} D_{\mathrm{KL}}\left(\pi^*|| \pi_{w_0}\right)}{|\mathcal{S}||\mathcal{A}|  T }  }$.
This verifies that the condition in Lemma \ref{lemma: condition for convergence} is satisfied.

We now consider the convergence rate of the multi-objective optimization. By the property for the min operator, it holds that
\begin{align*}
 \mathbb{E}\left[\min _{\boldsymbol{\lambda}^* \in S_m}\left(\boldsymbol{\lambda}^{* \top} \boldsymbol{F}\left(\pi_{w_\text{out}}\right)-\boldsymbol{\lambda}^{* \top} \boldsymbol{F}\left(\pi^*\right)\right)\right] \leq & \mathbb{E}\left[\frac{\sum_{i=1}^m \lambda_\text{out}^i f^i\left(\pi_\text{out}\right)}{\sum_{i=1}^m\lambda_\text{out}^i}-\frac{ \sum_{i=1}^m \lambda_\text{out}^i f^i\left(\pi^*\right)}{\sum_{i=1}^m \lambda_\text{out}^i}\right]\\
\leq &  \frac{1}{B_2}\mathbb{E}\left[\sum_{i=1}^m \lambda_\text{out}^i f^i\left(\pi_\text{out}\right)- \sum_{i=1}^m \lambda_\text{out}^i f^i\left(\pi^*\right)\right]\\
 \leq &\frac{1}{B_2 \left| \mathcal{N}_0\right|} \EE\left[ \sum_{t \in \mathcal{N}_0}\sum_{i=1}^m \lambda_t^i \left(f_i\left(\pi^*\right)-f_i\left(\pi_{w_t}\right) \right)\right]  
\end{align*}
where the second inequality follows from $\sum_{i=1}^m \lambda_t^i \geq B_2$ for $t=1, \ldots, T$ in Assumption \ref{ass: lambd bound}. If $\EE\left[ \sum_{t \in \mathcal{N}_0}\sum_{i=1}^m \lambda_t^i \left(f_i\left(\pi^*\right)-f_i\left(\pi_{w_t}\right) \right)\right] \leq 0$, then we have $\mathbb{E}\left[\min _{\boldsymbol{\lambda}^* \in S_m}\left(\boldsymbol{\lambda}^{* \top} \boldsymbol{F}\left(\pi_{w_\text{out}}\right)-\boldsymbol{\lambda}^{* \top} \boldsymbol{F}\left(\pi^*\right)\right)\right]  \leq 0$. If $\EE\left[ \sum_{t \in \mathcal{N}_0}\sum_{i=1}^m \lambda_t^i \left(f_i\left(\pi^*\right)-f_i\left(\pi_{w_t}\right) \right)\right] \geq 0$, we have $\left| \mathcal{N}_0\right| \geq \frac{T}{2}$, which implies the following convergence rate
\begin{align*}
 \mathbb{E}\left[\min _{\boldsymbol{\lambda}^* \in S_m}\left(\boldsymbol{\lambda}^{* \top} \boldsymbol{F}\left(\pi_{w_\text{out}}\right)-\boldsymbol{\lambda}^{* \top} \boldsymbol{F}\left(\pi^*\right)\right)\right] \nonumber \leq &  \frac{ 4r_{\max}mB_1 \sqrt{|\mathcal{S}||\mathcal{A}| \mathbb{E}_{s \sim \nu^*} D_{\mathrm{KL}}\left(\pi^*|| \pi_{w_0}\right)} }{B_2(1-\gamma)^2  \sqrt{T} }+\frac{4mB_1\sqrt{ \left|\mathcal{S}\right| \left| \mathcal{A} \right| }}{B_2(1-\gamma)^2 \sqrt{T}}.
\end{align*}
We then proceed to bound the constraints violation. For any 
$i \in \{m+1,\ldots, m+p\}$, it holds that
\begin{align*}
\mathbb{E}\left[f_i\left(\pi_{w_{\text {out }}}\right)\right]-c_i & =\mathbb{E}\left[\frac{1}{\left|\mathcal{N}_0\right|} \sum_{t \in \mathcal{N}_0} f_i\left(\pi_{w_t}\right)\right]-c_i \\
& \leq  \mathbb{E}\left[ \frac{1}{\left|\mathcal{N}_0\right|} \sum_{t \in \mathcal{N}_0}\left(\bar{f}_i\left(\theta_t^i\right)-c_i\right)+\frac{1}{\left|\mathcal{N}_0\right|} \sum_{t \in \mathcal{N}_0}\left(f_i\left(\pi_{w_t}\right)-\bar{f}_i\left(\theta_t^i\right)\right) \right] \\
& \leq \beta+ \EE \left[\frac{1}{\left|\mathcal{N}_0\right|} \sum_{t=0}^{T-1}\left(f_i\left(\pi_{w_t}\right)-\bar{f}_i\left(\theta_t^i\right)\right) \right] \\
& \leq \beta+ \frac{2}{T}\sum_{t=0}^{T-1} \EE \left[ f_i\left(\pi_{w_t}\right)-\bar{f}_i\left(\theta_t^i\right) \right] \\
& \leq \beta+ \frac{2}{T}\sum_{t=0}^{T-1} \norm{\EE \left[ Q_{\pi_{w_t}}^i-\bar{Q}_t^i\right] }_2\\
& = \beta
\end{align*}
This completes the proof.
\end{proof}

\section{Details of Experiments}
\label{append:Details-of-Experiments}
\subsection{Environment Settings}
\label{append:environment-settings}
Based on MuJoCo~\cite{brockman2016openai, todorov2012mujoco}, several Safe Multi-Objective environments are developed to evaluate safe MORL algorithms.
As shown in Figure~\ref{fig:cmorl-envs-pic-mujoco-tasks},  where Safe Multi-Objective HalfCheetah~(Figure~\ref{fig:cmorl-envs-pic-mujoco-tasks} (a)), Safe Multi-Objective Hopper~(Figure~\ref{fig:cmorl-envs-pic-mujoco-tasks} (b)), Safe Multi-Objective Humanoid~(Figure~\ref{fig:cmorl-envs-pic-mujoco-tasks} (c)), Safe Multi-Objective Swimmer~(Figure~\ref{fig:cmorl-envs-pic-mujoco-tasks} (d)), Safe Multi-Objective Walker~(Figure~\ref{fig:cmorl-envs-pic-mujoco-tasks} (e)), Safe Multi-Objective Pusher~(Figure~\ref{fig:cmorl-envs-pic-mujoco-tasks} (f)) are introduced to evaluate the effectiveness of our methods. Moreover, as shown in Figure~\ref{fig:safe-multi-task-different-tasks-more-experiments}, we provide more experiments to evaluate the effectiveness of our method. 

 \begin{figure}[htbp!]
 \centering
 \subcaptionbox{}
 {
  \includegraphics[width=0.405\linewidth]{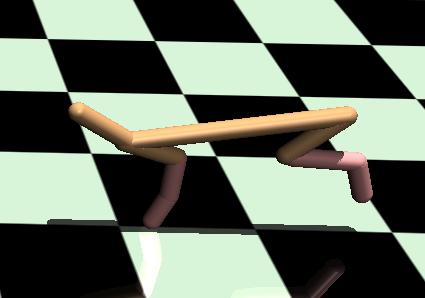}
  }
   \subcaptionbox{}
  {
\includegraphics[width=0.18\linewidth]{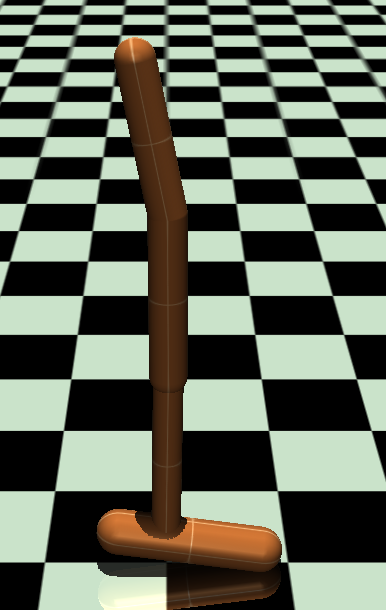}
}
 \subcaptionbox{}
  {
\includegraphics[width=0.2\linewidth]{files/figures/envs_pic/Humanoid-v4.png}
}
 \subcaptionbox{}
  {
\includegraphics[width=0.4\linewidth]{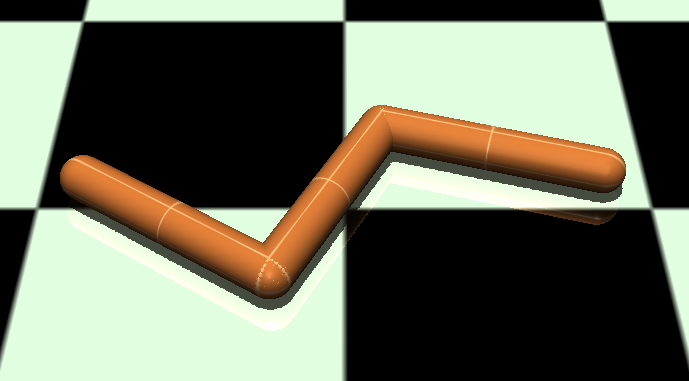}
}
 \subcaptionbox{}
  {
\includegraphics[width=0.2\linewidth]{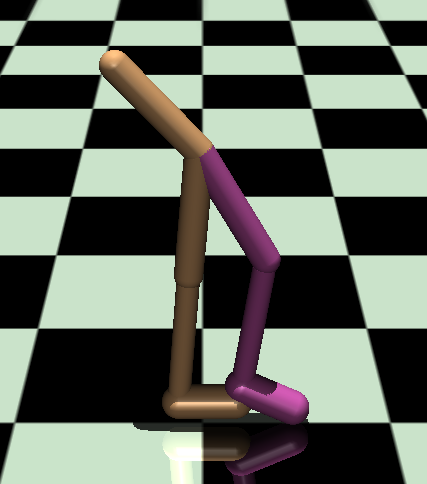}
}
 \subcaptionbox{}
  {
\includegraphics[width=0.25\linewidth]{files/figures/envs_pic/Pusher-v4-03.png}
}
    \vspace{-0pt}
 	\caption{\normalsize Safe Multi-Objective MuJoCo Environments. Specificaly, these environments are Safe Multi-Objective HalfCheetah~(a), Safe Multi-Objective Hopper~(b), Safe Multi-Objective Humanoid~(c), Safe Multi-Objective Swimmer~(d), Safe Multi-Objective Walker~(e) and Safe Multi-Objective Pusher~(f).
 	} 
  \label{fig:cmorl-envs-pic-mujoco-tasks}
 \end{figure}

\begin{figure}[htbp!]
 \centering
 \subcaptionbox{}
 {
  \includegraphics[width=0.21\linewidth]{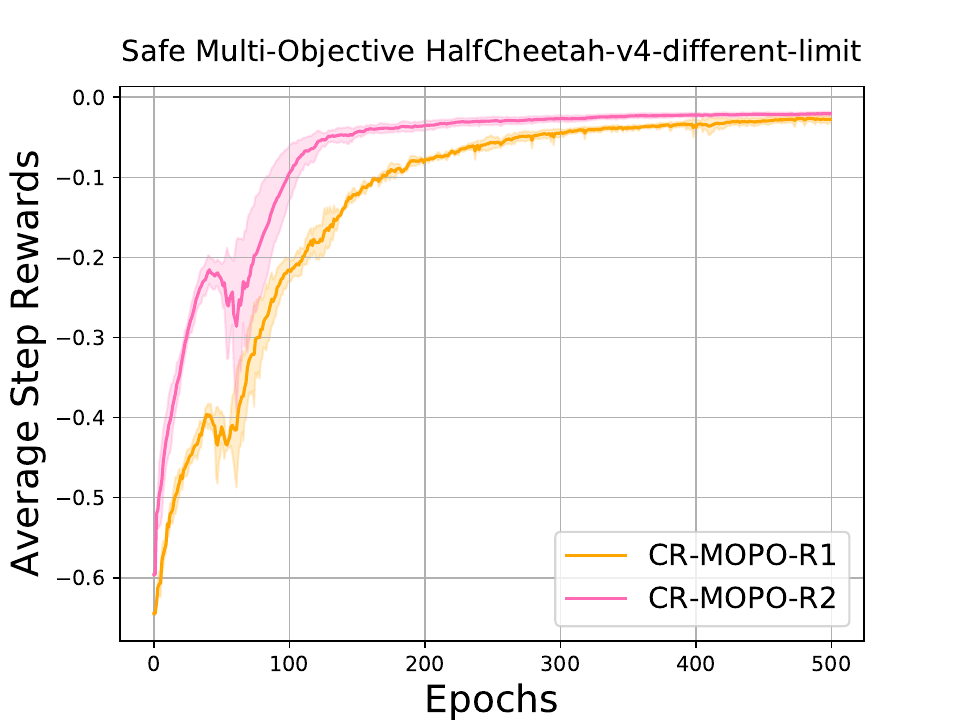}
  }
   \subcaptionbox{}
  {
\includegraphics[width=0.21\linewidth]{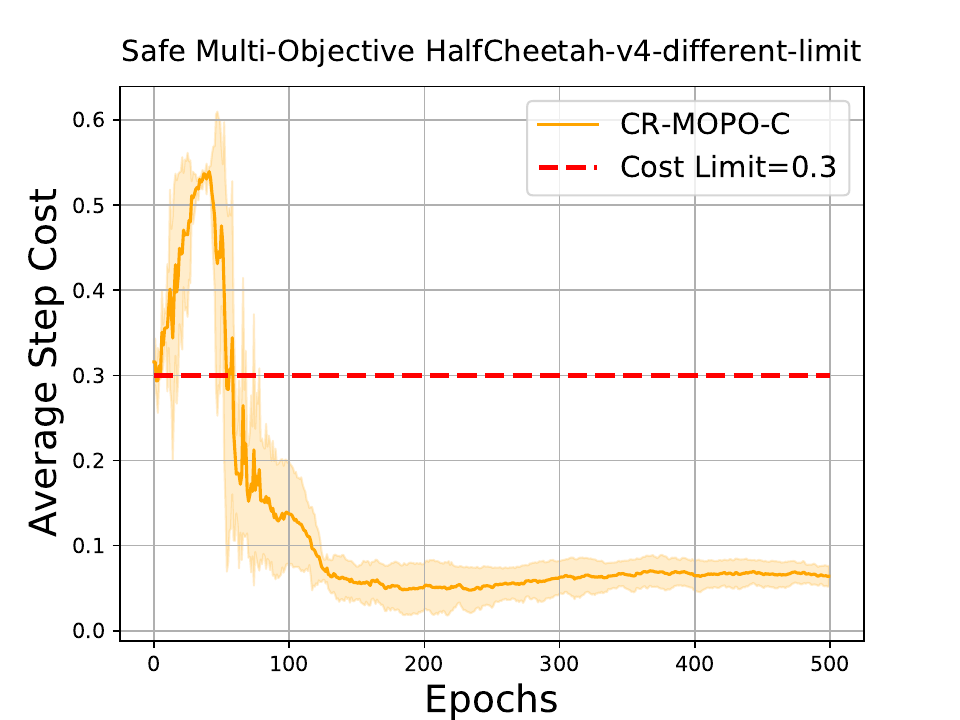}
}
%  \subcaptionbox{}
%  {
%   \includegraphics[width=0.23\linewidth]{files/figures/different-limit/HalfCheetah-v4-different-limitrewards_3seeds.pdf}
%   }
%    \subcaptionbox{}
%   {
% \includegraphics[width=0.23\linewidth]{files/figures/different-limit/HalfCheetah-v4-different-limitcosts_3seeds.pdf}
% }
 \subcaptionbox{}
 {
  \includegraphics[width=0.21\linewidth]{files/figures/dm_control/Humanoid-dmrewards_3seeds.pdf}
  }
   \subcaptionbox{}
  {
\includegraphics[width=0.21\linewidth]{files/figures/dm_control/Humanoid-dmcosts_3seeds.pdf}
}
 \subcaptionbox{}
 {
  \includegraphics[width=0.21\linewidth]{files/figures/mujoco/Hopper-v4rewards_3seeds.pdf}
  }
   \subcaptionbox{}
  {
\includegraphics[width=0.21\linewidth]{files/figures/mujoco/Hopper-v4costs_3seeds.pdf}
}
\subcaptionbox{}
 {
  \includegraphics[width=0.21\linewidth]{files/figures/mujoco/Swimmer-v4rewards_3seeds.pdf}
  }
   \subcaptionbox{}
  {
\includegraphics[width=0.21\linewidth]{files/figures/mujoco/Swimmer-v4costs_3seeds.pdf}
}
\subcaptionbox{}
 {
  \includegraphics[width=0.21\linewidth]{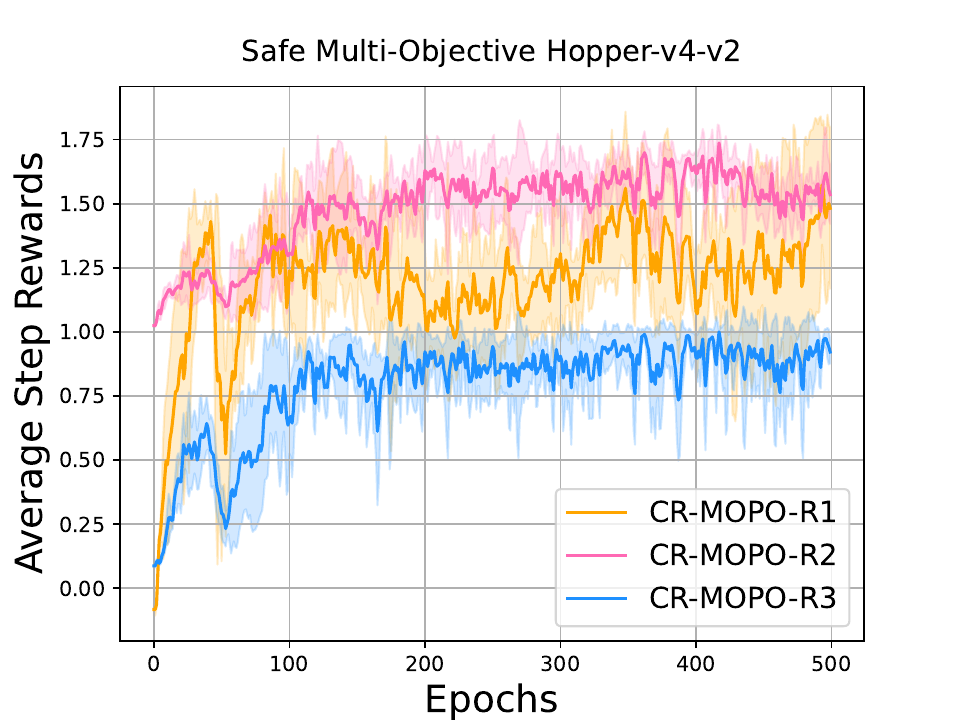}
  }
   \subcaptionbox{}
  {
\includegraphics[width=0.21\linewidth]{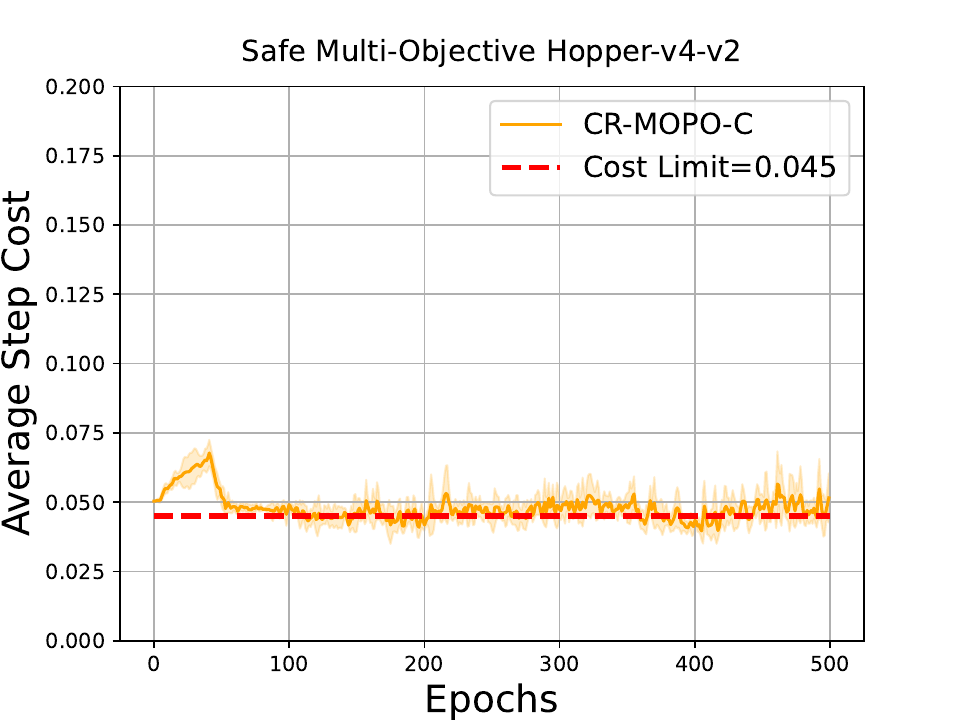}
}
    \vspace{-0pt}
 	\caption{\normalsize More Experiments to evaluate the effectiveness of our method on Safe Multi-Objective MuJoCo environments regarding the reward and safety performance.  
 	} 
  \label{fig:safe-multi-task-different-tasks-more-experiments}
 \end{figure} 

\textbf{Safe Multi-Objective HalfCheetah.} As shown in Figure~\ref{fig:cmorl-envs-pic-mujoco-tasks} (a), the constraint of Safe Multi-Objective halfCheetah is the difference between the robot real-time head height and the robot desired head height, as shown in Equation~(\ref{append-eq:halfcheetah-height}), $C_h^i$ denotes each step's cost value, $H_{cheetah}^i$ denotes the real-time robot head's height, $H_{target}^i$ denotes the target height. We set two reward functions for the robot's two tasks. One reward is Energy saving. The robot needs to save Energy by optimizing its action, which means that to achieve the Goal, the robot needs to use the smallest Energy, $\alpha_a$ denotes the energy weight. As shown in Equation~(\ref{append-eq:halfcheetah-energy}), $R_e^i$ denotes the action $a_{cheetah}^i$  energy at each step $i$. Another reward is velocity; the robot needs to achieve the velocity goal by learning to run as fast as possible. As shown in Equation~(\ref{append-eq:halfcheetah-velocity}), $R_v^i$ denotes each step's velocity reward of the robot, $V_{cheetah}^i$ denotes the real-time each step's velocity, $V_{target}^i$ denotes each step target velocity.

\begin{align}
\label{append-eq:halfcheetah-height}
    C_h^i = |H_{cheetah}^i - H_{target}^i|.
\end{align}

\begin{align}
\label{append-eq:halfcheetah-velocity}
    R_v^i = -|V_{cheetah}^i - V_{target}^i|.
\end{align}

\begin{align}
\label{append-eq:halfcheetah-energy}
    R_e^i = - \alpha_a \Vert \textbf{$a_{cheetah}$}^i \Vert^2.
\end{align}

\textbf{Safe Multi-Objective Hopper.} As shown in Figure~\ref{fig:cmorl-envs-pic-mujoco-tasks} (b), the constraint of Safe Multi-Objective Hopper is Energy that is computed via actions' value, as shown in Equation~(\ref{append-eq:hopper-height}), $C_{e-hopper}^i$ denotes the cost value. There are two objectives in the settings. The first reward function is the forward reward function, as shown in Equation (\ref{append-eq:hopper-velocity}). $R_{f}^i$ denotes the forward reward function, $\alpha_v$ denotes the forward reward weight, $X_v$ denotes the forward velocity. The second reward function is the healthy reward, e.g., the robot angle and state need to satisfy the requirements. As shown in Equation~(\ref{append-eq:hopper-healthy}), $R_{he}^i$ denotes the healthy reward, and $R_{he-state}$ denotes the healthy state. This means the robot state needs to satisfy requirements; $R_z$ is the Z-axis robot healthy; $R_{angle}$ is the robot angle's healthy. That is, the robot angle needs to satisfy the minimum and maximum angle requirements. Moreover, we modify reward settings as three objective tasks, named \textbf{Safe Multi-Objective Hopper-v2}, in which $R_{f}^i, (R_{he-state} + R_z), R_{angle}$ are the three objective rewards respectively.
\begin{align} 
\label{append-eq:hopper-height}
    C_{e-hopper}^i =  -\Vert \textbf{$a_{hopper}$}^i \Vert.
\end{align}

\begin{align}
\label{append-eq:hopper-velocity}
    R_{f}^i = \alpha_v X_V.
\end{align}

\begin{align}
\label{append-eq:hopper-healthy}
    R_{he}^i = R_{he-state} + R_z + R_{angle}.
\end{align}

\begin{equation}
\label{append-eq:hopper-healthy-state}
R_{he-state}=\left\{
\begin{aligned}
1 & , & State_{min} \leq State_{real} \leq State_{max}, \\
0 & , & Others.
\end{aligned}
\right. 
\end{equation}

\begin{equation}
\label{append-eq:hopper-healthy-z-axis}
R_{z}=\left\{
\begin{aligned}
1 & , & Z_{min} \leq Z_{real} \leq Z_{max}, \\
0 & , & Others.
\end{aligned}
\right.
\end{equation}

\begin{equation}
\label{append-eq:hopper-healthy-angle}
R_{angle}=\left\{
\begin{aligned}
1 & , & Angle_{min} \leq Angle_{real} \leq Angle_{max}, \\
0 & , & Others.
\end{aligned}
\right.
\end{equation}

\textbf{Safe Multi-Objective Humanoid.} As shown in Figure~\ref{fig:cmorl-envs-pic-mujoco-tasks} (c), 
the constraint of Safe Multi-Objective Humanoid is the control cost value regarding saving Energy, as shown in Equation~(\ref{append-eq:humanoid-action-cost}), $C_{e-humanoid}^i$ denotes the cost value at each step $i$. The first reward is the forward reward regarding velocity, similar to Equation~(\ref{append-eq:hopper-velocity}). The higher velocity, the better the forward reward. The second reward is the healthy reward, which is about the robot stand. Thus, the higher the axis-z direction height, the better the healthy reward, as shown in Equation~(\ref{append-eq:humanoid-healthy}), $R_{he}^i$ denotes the healthy reward value, $H_z^i$ denotes the X-axis direction height.

\begin{align}
\label{append-eq:humanoid-action-cost}
    C_{e-humanoid}^i =  -\Vert \textbf{$a_{humanoid}$}^i \Vert^2.
\end{align}

% \begin{align}
% \label{append-eq:humanoid-velocity}
%     R_{f}^i = \alpha_v X_V.
% \end{align}

\begin{align}
\label{append-eq:humanoid-healthy}
    R_{he}^i = H_z^i.
\end{align}

\textbf{Safe Multi-Objective Swimmer.} As shown in Figure~\ref{fig:cmorl-envs-pic-mujoco-tasks} (d), the constraint is about action Energy, as shown in Equation~\ref{append-eq:swimmer-action-cost}, $C_{e-swimmer}^i$ denotes the cost value at each step, $\alpha_{swimmer}$ denotes the cost weight, \textbf{$a_{swimmer}^i$} denotes the action value. There are two reward functions in the environment. One is the move forward reward, which is about the X-axis direction velocity, as shown in Equation~(\ref{append-eq:swimmer-velocity-X}), $R_{x-v}^i$ denotes the Y-axis direction reward at each step $i$, $\alpha_v$ denotes the weight of X-axis velocity, $X_V$ denotes the X-axis velocity; another is the move left reward, which is about the Y-axis direction velocity, as shown in Equation~(\ref{append-eq:swimmer-velocity-Y}),  $R_{y-v}^i$ denotes the Y-axis direction reward at each step $i$, $Y_V$ denotes the Y-axis velocity. The higher the velocities, the better the reward. 

\begin{align}
\label{append-eq:swimmer-action-cost}
    C_{e-swimmer}^i =  - \alpha_{swimmer} \Vert \textbf{$a_{swimmer}$}^i \Vert.
\end{align}

\begin{align}
\label{append-eq:swimmer-velocity-X}
    R_{x-v}^i = \alpha_v X_V.
\end{align}

\begin{align}
\label{append-eq:swimmer-velocity-Y}
    R_{y-v}^i =  Y_V.
\end{align}

\textbf{Safe Multi-Objective Walker.} As shown in Figure~\ref{fig:cmorl-envs-pic-mujoco-tasks} (e), the constraint of Safe Multi-Objective Walker is about action Energy, as shown in Equation~(\ref{append-eq:walker-action-cost}). The first reward function is about move forward reward, similar to Equation~(\ref{append-eq:swimmer-velocity-X}). The second reward function is about the healthy reward, e.g., healthy angle and healthy Z-axis height. as shown in Equation~(\ref{append-eq:walker-healthy}), and the settings of $R_z$ and $R_{angle}$ are similar to Equation~(\ref{append-eq:hopper-healthy-z-axis}) and Equation~(\ref{append-eq:hopper-healthy-angle}). 

\begin{align}
\label{append-eq:walker-action-cost}
    C_{e-walker}^i =  -\Vert \textbf{$a_{walker}$}^i \Vert.
\end{align}

% \begin{align}
% \label{append-eq:walker-velocity}
%     R_{f}^i = \alpha_v X_V.
% \end{align}

\begin{align}
\label{append-eq:walker-healthy}
    R_{he}^i = R_z + R_{angle}.
\end{align}

% \begin{equation}
% \label{append-eq:walker-healthy-z-axis}
% R_{z}=\left\{
% \begin{aligned}
% 1 & , & Z_{min} \leq Z_{real} \leq Z_{max}, \\
% 0 & , & Others.
% \end{aligned}
% \right.
% \end{equation}

% \begin{equation}
% \label{append-eq:walker-healthy-angle}
% R_{angle}=\left\{
% \begin{aligned}
% 1 & , & Angle_{min} \leq Angle_{real} \leq Angle_{max}, \\
% 0 & , & Others.
% \end{aligned}
% \right.
% \end{equation}

\textbf{Safe Multi-Objective Pusher.} As shown in Figure~\ref{fig:cmorl-envs-pic-mujoco-tasks} (f), the constraint of the Safe Multi-Objective Pusher is about the action Energy, as shown in Equation~(\ref{append-eq:pusher-action-cost}), $C_{e-pusher}^i$ denotes the cost value of each step $i$, \textbf{$a_{pusher}^i$} is the set of actions at each step $i$. The first reward function is about the targeting object to the goal's position, as shown in Equation~(\ref{append-eq:pusher-targeting-object-goal}), $R_{goal-pusher}^i$ denotes the first reward, \textbf{$D_{object-goal}^i$} denotes the distance from targeting object to goal position at each step $i$. The second reward function is about the robot's end effector position to the targeting object's position, as shown in Equation~(\ref{append-eq:pusher-robot-object}), $R_{object-pusher}^i$ denotes the second reward at each step $i$, \textbf{$D_{robot-object}^i$} denotes the distance from the robot to the object at each step $i$. 

\begin{align}
\label{append-eq:pusher-action-cost}
    C_{e-pusher}^i =  \Vert \textbf{$a_{pusher}$}^i \Vert^2.
\end{align}

\begin{align}
\label{append-eq:pusher-targeting-object-goal}
    R_{goal-pusher}^i = -\Vert \textbf{$D_{object-goal}$}^i \Vert.
\end{align}

\begin{align}
\label{append-eq:pusher-robot-object}
     R_{object-pusher}^i = -\Vert \textbf{$D_{robot-object}$}^i \Vert.
\end{align}

\subsection{Implementation Details}
\label{append:implementation-details}

Tables \ref{table:safety-bound} and \ref{table:algorithm-hyparameter-experiments} provide the safety-bound parameters and algorithm parameters used in the study. The server with 40 CPU cores (Intel® Xeon(R) Gold 5218R CPU @ 2.10GHz × 80) and 1 GTX-970 GPU (NVIDIA GeForce GTX 970/PCIe/SSE2)  is used to run the experiments on a Ubuntu 18.04 system.

\begin{table}[!htbp]
 \renewcommand{\arraystretch}{1.2}
  \centering
  \begin{threeparttable}
    \begin{tabular}{cc}
    \toprule
    Environment & value \\
    \midrule
    Safe Multi-Objective HalfCheetah-v4 & 0.1     \\                 Safe Multi-Objective Humanoid-v4 & 0.9    \\
           Safe Multi-Objective Walker-v4 & 0.03            \\ 
           Safe Multi-Objective Humanoid-dm & 1.5             \\ 
           Safe Multi-Objective HalfCheetah-v4-different-limit & 0.3            \\
           Safe Multi-Objective Hopper-v4 & 0.03    \\                 Safe Multi-Objective Swimmer-v4 &  0.049   \\
             Safe Multi-Objective Pusher-v4 & 0.49  \\ 
            Safe Multi-Objective Walker-dm & 1.5   \\ 
            Safe Multi-Objective-v4- soft-v1 & 0.005   \\
    \bottomrule
    \end{tabular}    
    \end{threeparttable}
    \vspace{6pt}
\caption{Safety bound of each step used in the Safe Multi-Objective Environments. }
\label{table:safety-bound}
\end{table}

% \begin{table}[!htbp]
%  \renewcommand{\arraystretch}{1.2}
%   \centering
%   \begin{threeparttable}
%     \begin{tabular}{cc|cc}
%     \toprule
%     Environment & value & Environment & value \\
%     \midrule
%     Safe Multi-Objective HalfCheetah-v4 & 0.1             &       Safe Multi-Objective Hopper-v4 & 0.03    \\                 Safe Multi-Objective Humanoid-v4 & 0.9 &Safe Multi-Objective Swimmer-v4 &  0.049   \\
%            Safe Multi-Objective Walker-v4 & 0.03          &  Safe Multi-Objective Pusher-v4 & 0.49  \\ 
%            Safe Multi-Objective Humanoid-dm & 1.5          &  Safe Multi-Objective Walker-dm & 1.5   \\ 
%            Safe Multi-Objective HalfCheetah-v4-different-limit & 0.3          &  Safe Multi-Objective-v4- soft-v1 & 0.005   \\
%     \bottomrule
%     \end{tabular}    
%     \end{threeparttable}
%     \vspace{6pt}
% \caption{Safety bound of each step used in the Safe Multi-Objective Environments. }
% \label{table:safety-bound}
% \end{table}

\begin{table}[!htbp]
 \renewcommand{\arraystretch}{1.2}
  \centering
  \begin{threeparttable}
    \begin{tabular}{cc|cc}
    \toprule
    Parameters & value & Parameters & value \\
    \midrule
    gamma & 0.995             &       tau & 0.97    \\                 l2-reg & 1e-3 & kl &  0.05   \\
           damping & 1e-1          &  batch-size & 16000  \\    
           epoch & 500          &  episode length & 1000  \\     
           grad-c & 0.5          & neural network  & MLP  \\ 
           hidden layer dim & 64          & accept ratio  & 0.1  \\
           energy weight & 1.0          & forward reward weight  & 1.0  \\
    \bottomrule
    \end{tabular}    
    \end{threeparttable}
    \vspace{6pt}
\caption{Key hyparameters used in experiments. }
\label{table:algorithm-hyparameter-experiments}
\end{table}

% \begin{table}[!htbp]
%  \renewcommand{\arraystretch}{1.2}
%   \centering
%   \begin{threeparttable}
%     \begin{tabular}{cc|cc}
%     \toprule
%     Parameters & value & Parameters & value \\
%     \midrule
%     HalfCheetah goal velocity & 0.3 m/s             &       cost weight & 1.0    \\                 l2-reg & 1e-3 & kl &  0.05   \\
%            damping & 1e-1          &  batch-size & 16000  \\    
%            Epochs & 500          &  episode length & 1000  \\     
%            grad-c & 0.5          &   &   \\      
%     \bottomrule
%     \end{tabular}    
%     \end{threeparttable}
%     \vspace{6pt}
% \caption{Key environment hyparameters used in experiments. }
% \label{table:environment-hyparameter-experiments}
% \end{table}
%%%%%%%%%%%%%%%%%%%%%%%%Arxiv End 
  %%%%%%%%%%%%%%%%%%%%%%%%%%%%%%%%%%%%

% \input{files/append_RL}
% \input{files/append_experiment}

%% The Appendices part is started with the command \appendix;
%% appendix sections are then done as normal sections
%% \appendix

%% \section{}
%% \label{}

%% If you have bibdatabase file and want bibtex to generate the
%% bibitems, please use
%%
%%  \bibliographystyle{elsarticle-num} 
%%  \bibliography{<your bibdatabase>}

%% else use the following coding to input the bibitems directly in the
%% TeX file.

% \begin{thebibliography}{00}

% %% \bibitem{label}
% %% Text of bibliographic item

% \bibitem{}

% \end{thebibliography}
\end{document}